\def\eqref#1{equation~\ref{#1}}
\def\1{\bm{1}}
\DeclareMathAlphabet{\mathsfit}{\encodingdefault}{\sfdefault}{m}{sl}
\SetMathAlphabet{\mathsfit}{bold}{\encodingdefault}{\sfdefault}{bx}{n}
\newcommand{\E}{\mathbb{E}}
\newtheorem{theorem}{Theorem}[section]
\theoremstyle{definition}
\newtheorem{definition}[theorem]{Definition}
\newtheorem{assumption}[theorem]{Assumption}
\theoremstyle{remark}
\NewDocumentCommand{\defmodel}{mmmm}{
  \prop_gput:Nnn \g_model_full_prop  {#1}{#2}
  \prop_gput:Nnn \g_model_short_prop {#1}{#3}
  \prop_gput:Nnn \g_model_api_prop   {#1}{#4}
}
\NewDocumentCommand{\modelfull}{m}{\model_full:n{#1}}
\NewDocumentCommand{\modelshort}{m}{\model_short:n{#1}}
\NewDocumentCommand{\modelapi}{m}{\texttt{\model_api:n{#1}}}
\NewDocumentCommand{\modellongdefault}{}{\bool_gset_true:N  \g_model_use_full_bool}
\NewDocumentCommand{\modelshortdefault}{}{\bool_gset_false:N \g_model_use_full_bool}
\NewDocumentCommand{\model}{m}{
  \bool_if:NTF \g_model_use_full_bool {\model_full:n{#1}}{\model_short:n{#1}}
}
\newcommand{\tightparagraph}[1]{%
  \textbf{#1}\quad
}
\definecolor{myblue}{HTML}{6397C7}
\definecolor{myyellow}{HTML}{C79363}
\newtcolorbox{caveatbox}[2][]{
    enhanced,
    halign title=flush left,
    left*=0pt, right*=0pt,
    boxsep=2pt, left=5pt, right=5pt,
    skin first=enhanced,
    skin middle=enhanced,
    skin last=enhanced,
    colframe = myblue!100,
    colback  = myblue!10,
    fonttitle=\fontfamily{ppl}\selectfont\bfseries, % Palatino title only
    title={\strut #2},
    #1
}
\newtcolorbox{remarkbox}[2][]{
    enhanced,
    parbox=false,
    halign title=flush left,
    left*=0pt, right*=0pt,
    boxsep=2pt, left=5pt, right=5pt,
    skin first=enhanced,
    skin middle=enhanced,
    skin last=enhanced,
    colframe = myyellow!100,
    colback  = myyellow!10,
    fonttitle=\fontfamily{ppl}\selectfont\bfseries, % Palatino title only
    title={\strut #2},
    #1
}
\crefname{assumption}{assumption}{assumptions}
\Crefname{assumption}{Assumption}{Assumptions}
\newtheorem{fact}{Fact} 
\newtcolorbox{promptbox}[1]{%
  colframe=black,
  colback=gray!5,
  coltitle=white,
  colbacktitle=black,
  fonttitle=\ttfamily\bfseries,
  title=#1,
  boxrule=0.5mm,
  enhanced jigsaw,
  sharp corners,
  breakable,
  enhanced,
  fontupper=\ttfamily
}
\newcommand{\prophet}{\texttt{Prophet Arena}}
\newcommand{\titleprophet}{  Prophet Arena \,\raisebox{-0.05em}{\includegraphics[height=.8em]{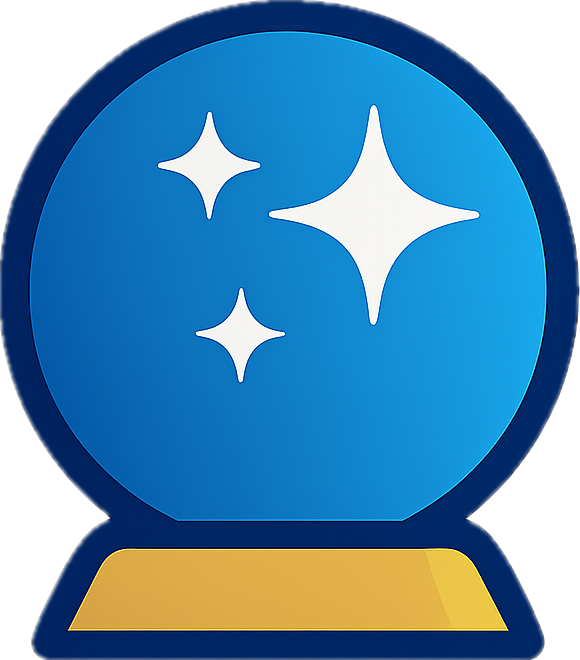}}}
\definecolor{lightblue}{RGB}{220,230,250}  % define a light blue color
\definecolor{darkgreen}{RGB}{44,95,52}
\crefname{section}{\S\!}{\S\!}
\crefname{subsection}{\S\!}{\S\!}
\title{
LLM-as-a-Prophet: Understanding Predictive Intelligence with \titleprophet \,
\thanks{Correspondence to \texttt{contact@prophetarena.co}. \\ Part of the work by Qingchuan Yang and Jibang Wu are done while they are at the University of Chicago. We thank Alex Gu, Chaplin Huang and Lucien Liu for help at the early stage of this work.}
}
\newcommand{\affmark}[1]{\textsuperscript{#1}} % affiliation superscript
\author{%
Qingchuan Yang \affmark{1}\thanks{Equal contribution.},\;
Simon Mahns \affmark{2}\footnotemark[2],\;
Sida Li \affmark{3}\footnotemark[2],\;
Anri Gu \affmark{3}\footnotemark[2],\;
Jibang Wu \affmark{4}\thanks{Equal advising.},\;
Haifeng Xu \affmark{3}\footnotemark[3]
\\[0.6ex]
\small
\affmark{1}\,University of Southern California \quad
\affmark{2}\,Meta \quad
\affmark{3}\,The University of Chicago \quad
\affmark{4}\,New York University
}
\begin{document}

\maketitle

\begin{abstract}
 Forecasting is not only a fundamental intellectual pursuit  but also is of significant importance to societal systems such as finance and economics. With the rapid advances of large language models (LLMs) trained on  Internet-scale data, it raises the promise of employing LLMs to forecast real-world  future events, an emerging paradigm we call ``LLM-as-a-Prophet''. This paper  systematically investigates such  predictive intelligence of   LLMs. To this end, we build \prophet, a general evaluation benchmark that continuously collects live forecasting tasks and decomposes each task into distinct pipeline stages, in order to support  our controlled and large-scale experimentation. Our comprehensive evaluation reveals that many LLMs already exhibit impressive forecasting capabilities, reflected in, e.g., their small calibration errors, consistent prediction confidence and promising market returns. However, we also uncover key bottlenecks towards achieving superior predictive intelligence via  LLM-as-a-Prophet, such as LLMs' inaccurate event recalls, misunderstanding of data sources and slower information aggregation compared to markets when resolution nears.
\end{abstract}

\section{Introduction}

Forecasting is a fundamental intellectual pursuit that has shaped human progress from the earliest scientific inquiries  to modern economics and finance.  In machine learning, forecasting has also been a central theme, with rich traditions ranging from time-series analysis \citep{box1976time,lim2021time}, online learning \citep{foster1999regret}, to conformal prediction \citep{barber2023conformal}. Yet, somewhat surprisingly, the challenge of open-domain forecasting, producing accurate predictions across a wide range of topics without domain-specific tuning or specialized datasets, remains largely unexplored. Achieving reliable foresight in this setting would represent a qualitative leap in AI capability, with far-reaching societal implications, from enhancing market efficiency to guiding high-stakes policy decisions \citep{arrow2008promise}. 

At its core, forecasting is the process of connecting present knowledge to anticipate future outcomes. Large language models (LLMs) seem natural candidates for this role. Trained on massive corpora of human knowledge through the seemingly narrow objective of next-word prediction, LLMs have developed emergent capabilities that extend far beyond their training objective \citep{bubeck2023sparks}. This motivates the prospect that the ability to predict the next word may also give rise to the ability to predict the next event. Indeed,  recent works by \citet{zou2022forecasting,halawi2024approaching} have already made promising progress in employing language models for forecasting tasks.  We envision that, with  growing interest and continued progress, this would position LLMs not only as repositories of human knowledge but also as instruments of reliable foresight, leading to the prospect of \textbf{LLM-as-a-Prophet}: \textit{Can AI systems reliably predict the future by connecting the dots across existing real-world information?}

In this paper, we seek to systematically examine the prospects and challenges of building general-purpose systems for open-domain forecasting. On one hand, forecasting is a natural next pursuit given the rapid progress of AI, as it draws on a combination of advanced capabilities that current models are only beginning to demonstrate: information retrieval, complex reasoning and data analysis. Moreover, as many established benchmarks are approaching saturation and are increasingly prone to training-data contamination \citep{deng2024investigating}, open-domain forecasting provides a forward-looking and contamination-free setting with objectively measurable outcomes, making it a rigorous testbed for evaluating advanced model intelligence \citep{zou2022forecasting,karger2024forecastbench}. On the other hand, we observe that current LLMs often struggle with key requirements for reliable foresight, including calibrated uncertainty estimation~\citep{geng2023survey} and robust reasoning~\citep{zhou2024can} in the presence of noisy or incomplete evidence. As a result, their forecasting results may at times resemble guesswork rather than deliberated prediction, raising the possibility that fundamental barriers must be addressed before such evaluations can serve as a meaningful benchmark at the present time~\citep{paleka2025evaluating}. Toward this end, we introduce \prophet, a general framework for evaluating LLMs on live, real-world forecasting questions in a controlled and extensible way. Our goal is not only to assess the current forecasting performance of LLMs, but also to use forecasting as a lens for studying core components of intelligence, including reasoning, calibration, evidence aggregation. By doing so, we aim to identify which capabilities are emerging, which remain limited, and how forecasting evaluation can guide the development of more reliable predictive intelligence.

\textbf{Organization of this paper.} In~\cref{sec:arena}, we present our design of \prophet, a live and extensible benchmark for studying predictive intelligence. We define the key concepts and notation used throughout, detail the modular forecasting pipeline -- from event and market extraction to probabilistic prediction and evaluation -- and discuss the design principles that differentiate \prophet\ from prior forecasting benchmarks. In~\cref{sec:evals}, we introduce our three primary evaluation dimensions with their formal metrics and practical motivations, providing the foundation for rigorous and multi-faceted model assessment. In~\cref{sec:exps}, we conduct an in-depth examination of LLM-as-a-Prophet, including mechanistic studies on knowledge internalization, context construction, and reasoning synthesis. We conclude in~\cref{sec:conclusion} with a synthesis of key findings, limitations, and future research directions toward more reliable predictive intelligence.

\textbf{Summary of Our Contributions. }
We summarize the main contributions of this work as follows:
\begin{itemize}[leftmargin=2em]
    \item We introduce the notion of \emph{LLM-as-a-Prophet}, framing open-domain forecasting of real-world events as the next grand challenge of language models' intelligence.  
    \item We develop \prophet, a live and extensible benchmark that continuously collects live real-world forecasting tasks across diverse domains. The framework decomposes forecasting into distinct stages in order to support controlled evaluations and incorporates  multiple scoring metrics, capturing statistical accuracy, calibration, and economic value, in order to provide a comprehensive view of forecasts' quality.
    \item We conduct thorough evaluations on state-of-the-art LLMs across 1300+ resolved real-world events, and perform comprehensive experiments to highlight several caveats in LLM forecasting that may have been overlooked in previous studies. Our results   reveal both emerging strengths -- such as calibrated uncertainty and reasoning alignment -- as well as persistent bottlenecks in information aggregation and foresight near event resolution. 
\end{itemize}

\subsection{Connection and Comparison to Previous Works. } 

\textbf{Understanding and advancing LLMs' forecasting capabilities. } A key goal of our work is to understand the novel paradigm of \emph{LLM-as-a-Prophet} and analyze how different capabilities (e.g., knowledge internalization, source usage, etc.) affect  LLMs'   predictive intelligence. This research goal shares similarities to a few recent works on understanding and diagnosing \emph{special aspects} of LLMs' forecasting capabilities. For instance, \citet{daillms,chenghaozhu2025your} study LLMs'  \emph{temporal generalization} capability by challenging LLMs  to forecast future events curated from news articles.  Both works show that LLMs' forecasting accuracy degrades over time, even when they are armed with retrieval augmented generation (RAG). \citet{paleka2025consistency} studies whether LLMs can make consistent forecasts; for example, a logical AI should not predict  that both the Democratic and Republican parties have a 60\% chance of winning the 2024 US presidential election. Towards that end, they  build a   proper-scoring-rule forecasting benchmark to measure   the consistency of LLMs' predictions. Similar to both works' research methodology, we also build a  benchmark platform \prophet\ to study our research question.  However, to our knowledge, our work is the first to investigate general predictive intelligence within the LLM-as-a-Prophet paradigm. Finally, we also note the  recent line of works advancing the forecasting capabilities of AI systems using language models \citep{zou2022forecasting,halawi2024approaching}. Though our goal of benchmarking is different, we envision that insights from our analysis about the strengths and limitations of current LLMs for forecasting could  benefit future research for advancing AI's forecasting capabilities.

\noindent \textbf{Forecasting Benchmarks. }  Forecasting has recently become a popular challenge for benchmarking LLMs' capabilities. Besides being a real challenge to LLMs, 
%sophisticated intellectual endeavor  that involves multiple advanced capabilities such as  comprehension, critical thinking, reasoning and causal inference, 
it also avoids the thorny issue of benchmark contamination due to the evaluating LLMs on ``future'' events \citep{daillms,karger2024forecastbench}. To  our knowledge,  \citep{jin2021forecastqa} is perhaps one of the earliest to test such forecasting capabilities of language models. They introduced ForecastQA, an evaluation dataset consisting of 10,392 crowdsourced  multiple-choice questions, and the performance of language models at the time (mainly BERT models)
still significantly lags behind human performance. Since \citet{jin2021forecastqa}, there has been a progressive line of work  developing more and more challenging  forecasting benchmarks for more advanced models, by integrating prediction market events \citep{zou2022forecasting}, extracting events from  news articles \citep{zhang2024analyzing,wang2025openforecast}, curating  future-oriented questions from   websites \citep{wildman2025bench}, using open-ended queries \citep{guan2024openep}, and lately  developing dynamic benchmarks with live event and leaderboard updates \citep{zeng2025futurex,karger2024forecastbench,FutureBench}.  

We also develop a benchmark, \prophet, which we intentionally ground on thousands of prediction market events, leveraging their incentive-aligned participation and standardized resolution criteria to obtain a more rigorous testbed that mitigates selection bias from question design. However, our goal goes beyond ranking models: we aim to understand the \emph{LLM-as-a-Prophet} paradigm and analyze how different model capabilities affect forecasting ability. We build \prophet\ primarily for this analytical purpose, similar to \citet{daillms,paleka2025consistency}. 

Towards this end, our work departs from previous benchmarks in several ways. First, we provide a comprehensive evaluation of LLMs' predictive intelligence across multiple dimensions, including Brier score, calibration, and economic value. In Section~\ref{subsec:accuracy}, we illustrate how these metrics differ and when each should be preferred. In contrast, most prior work primarily focuses on a single metric, such as Brier score \citep{halawi2024approaching,karger2024forecastbench}, calibration \citep{zou2022forecasting}, or accuracy \citep{zeng2025futurex,daillms}, which does not fully capture forecast quality (as we further demonstrate in Section~\ref{sec:evals}). Second, as our goal is to better understand frontier LLMs' forecasting capabilities, we require a natural baseline forecaster. By grounding \prophet\ in prediction market events, we can use market-implied probabilities as a cost-effective and theoretically justified baseline \citep{arrow2008promise}. Finally, beyond reporting evaluation metrics, we further analyze the LLM-as-a-Prophet paradigm and provide insights into how forecasts are shaped by specific capabilities such as knowledge internalization and source usage.

\section{Prophet Arena: A Live Benchmark for Predictive Intelligence}
\label{sec:arena}
\subsection{Key Definitions and Notations}

\textit{\small (\textbf{Note:} To aid exposition, we borrow the terminology of prediction markets for the definitions below. The general concept of forecasting, however, extends beyond market-based contexts to any form of future prediction.)}

\textbf{Event. } Let  $\{E_i\}_{i=1}^K$ denote the set of forecasting \emph{events}. An \textit{event} is the overarching question or subject concerning a future real-world occurrence. It serves as a high-level container for one or more tradable \textit{markets}. We consider an event \textbf{not} as a tradable asset; rather, it sets the context, scope, and resolution criteria for the markets that fall under it. \par 

\begin{itemize}[leftmargin=1em]
    \item \textbf{Example $E_1$:} ``Who will win the 2025-26 NBA Championship?"
    \item \textbf{Example $E_2$:} ``Which individuals will President Trump officially meet in 2025?"
\end{itemize}

\textbf{Market. } Each event $E_i$ contains one ore more (binary) \emph{markets} $\{M_{ij}\}_{j=1}^{N_i}$. A \textit{market} is a specific, tradable proposition under an event that will ultimately resolve to either \texttt{Yes} (True) or \texttt{No} (False). It represents a verifiable and unambiguous instantiation of the event’s overarching question. 

\begin{itemize}[leftmargin=1em]
    \item \textbf{Example market under $E_1$:} ``The Boston Celtics will win the 2025 NBA Championship.''
    \item \textbf{Example market under $E_2$:} ``President Trump will officially meet with Emmanuel Macron.''
\end{itemize}

Markets within the same event may be either \textit{mutually exclusive} (e.g., only one NBA team can win the championship) or \textit{non-exclusive} (e.g., a president may meet multiple individuals, or none). The only requirement is that each market be binary, thereby defining a well-posed resolution outcome.

\textbf{Event Resolution. }
An event $E_i$ is said to \textit{resolve} at time $\tau_i$, when all of its markets ${M_{ij}}$ have their outcomes $o_{ij} \in \{0,1\}$ determined, where $o_{ij}=1$ indicates that market $M_{ij}$ resolved to \texttt{Yes}.\footnote{In practice, different markets within an event may resolve at different times; we take $\tau_i$ to be the latest resolution time among them.}

\textbf{Contract. }
For a given market $M_{ij}$, a \textit{\texttt{Yes} contract} $q^Y_{ij}$ is a binary random variable that pays out 1 if $o_{ij} = 1$ and 0 otherwise. A corresponding \textit{\texttt{No} contract} $q^N_{ij}$ is defined analogously, paying out when the \texttt{Yes} contract does not. Each contract comes with a price/value $q_{ij} \in [0,1]$, representing its market price or implied probability; this will be formally defined in~\cref{sec:evals}.

\textit{(Hereafter, when the event index $i$ is clear from context, we drop the subscript $i$ and write $M_j, o_j$.)}

\subsection{The Full Prophet Arena Pipeline}

\begin{figure}[!t]
    \vspace{-3mm}
    \centering
    \includegraphics[width=0.9\linewidth]{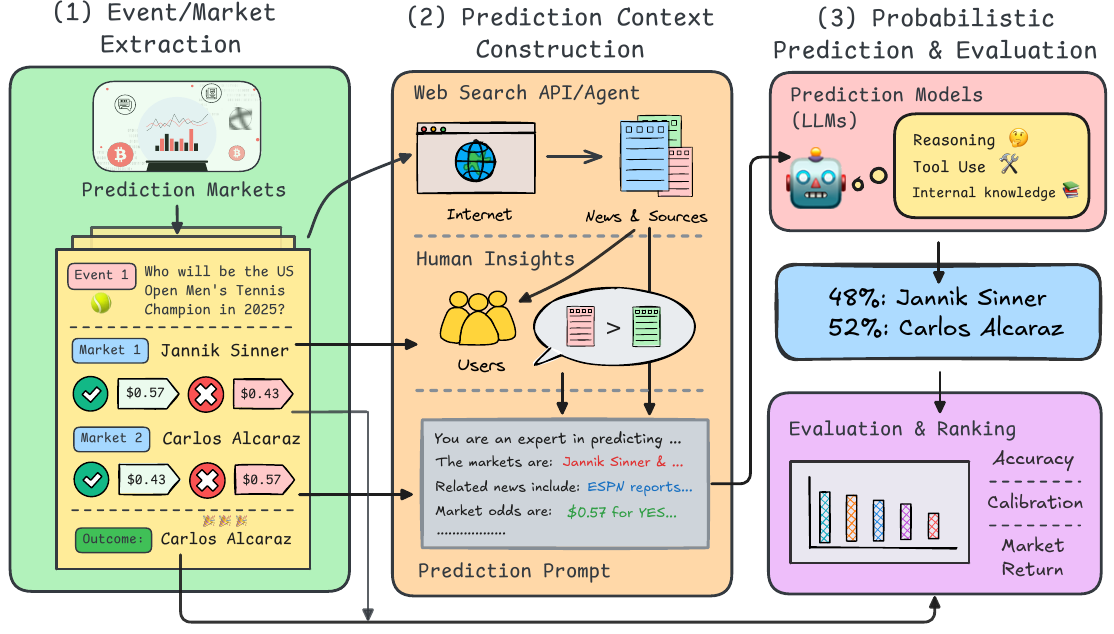}
    \caption{\textbf{Overview of the \prophet\ pipeline.} The outcomes of events are unknown at extraction time. LLM predictions are generated at pre-scheduled forecast times, and evaluation occurs only after the events resolve, comparing model forecasts against the realized outcomes.}
    \label{fig:flowchart}
    \vspace{-3mm}
\end{figure}

\prophet\ is implemented as a live, continuously updated pipeline for evaluating forecasts on real-world events. As illustrated in~\cref{fig:flowchart}, it consists of three main stages -- (1) event and market extraction, (2) prediction context construction, and (3) probabilistic forecasting with evaluation -- which together form an end-to-end workflow for assessing predictive intelligence at scale.

\textbf{(1) \: Event and Market Extraction. }
We continuously collect forecasting events from active prediction markets across diverse domains such as politics, economics, sports, entertainment, and science. Events are selected based on their popularity, diversity, and recurrence patterns, ensuring both temporal and topical coverage. Each day, 20 new events are added to our event pool. For the evaluations in this paper, we apply a cutoff date of October~11,~2025, and include only events that resolved prior to this date. An additional filter is applied to rule out predictions that are too close to the event resolution time (we explain the filtering detail in~\cref{subsec:performance}). The resulting dataset comprises 1{,}367 resolved events encompassing 72{,}136 markets. The use of live markets guarantees that all questions pertain to genuine future outcomes—free from training-data contamination—and that their resolutions can be objectively verified once finalized.

\textbf{(2) \: Prediction Context Construction. }
 Once an event $E_i$ is extracted, \prophet\ schedules a series of \textit{prediction times} $T_i = \{t_i^{(1)}, t_i^{(2)}, \ldots\}$ before the event resolves, enabling models to update their forecasts as market conditions and public information evolve. Accordingly, both the model-predicted probabilities and contract prices are time-dependent (i.e. our notation is actually time-depedent, e.g. $p_{ij} \equiv p_{ij}^{(t)}$). For clarity of exposition, however, we assume in the main text that each model issues a single forecast per event (i.e. $|T_i| = 1$); details on the full scheduling algorithm and multi-horizon aggregation are provided in~\cref{app:prediction_context}.

For each event, \prophet\ constructs a unified \textit{prediction context} $C_i$ that all models receive identically. This context contains:
\begin{itemize}[leftmargin=2em]
\item \textbf{Relevant information sources} retrieved by an LLM-based search agent using web queries that collect titles, snippets, timestamps, and URLs of recent news or reports; and
\item \textbf{Market snapshots} including the latest \texttt{Yes/No} contract prices and trading volumes, from which implied probabilities (i.e. market contract prices $q_{ij}\in[0,1]$) are derived.
\end{itemize}
Providing identical contexts isolates differences in models’ reasoning and calibration rather than their retrieval capabilities. The search component in \prophet\ is fully \textit{searcher-agnostic}: new search agents can be added or replaced without altering the forecasting protocol or evaluation procedure. In all experiments presented in this paper, we use a single LLM-based searcher instantiated with \modelshort{gpt-4o} equipped with web access.

\textbf{(3) \: Probabilistic Forecasting and Evaluation. }
Given an event $E_i$ and its constructed context $C_i$, each model produces a probabilistic forecast for every market $M_{ij}$ within the event. The model outputs a predicted probability $p_{ij}\in[0,1]$, interpreted as its belief that the market will resolve to \texttt{Yes}, together with a short natural-language rationale. All forecasts are logged for later analysis but only the probabilities are used for quantitative evaluation. Once the event resolves and the true outcomes $o_{ij}$ become available, \prophet\ evaluates the forecasts along multiple complementary dimensions -- which are detailed in~\cref{sec:evals}. Together with the event extraction and context construction stages, this final step completes the \prophet\ pipeline -- turning live, real-world questions into a continuous, contamination-resistant benchmark for predictive intelligence.

\textbf{(*) \: Bonus feature: Market Baseline. }
To establish a fair and interpretable anchor for comparing different LLM forecasters, we include a \textit{Market Baseline} -- a synthetic forecaster whose prediction ($p_{ij}^{\mathrm{mb}}$) is defined as the market-consensus probability that market ($M_{ij}$) will resolve to \texttt{Yes}. In practice, this probability is inferred from the normalized contract prices. For instance, if the \texttt{Yes} and \texttt{No} contract prices are ($q^{Y}_{ij} \equiv q_{ij} = 0.8$) and ($q^{N}_{ij} \equiv 1 - q_{ij} = 0.2$), respectively, then the market baseline assigns an 80\% probability to the \texttt{Yes} outcome (i.e. $p_{ij}^{\mathrm{mb}} \equiv q^Y_{ij}$).\footnote{In practice, contract prices may not sum exactly to one because of exchange transaction fees, requiring slight normalization.} As we will demonstrate in~\cref{sec:evals}, this market baseline serves as an informative benchmark for assessing \emph{forecast difficulty} and contextualizing model performance. When LLMs outperform the market baseline, they demonstrate genuine predictive advantage over the aggregated human consensus in the real-world market.

\subsection{Design Choices of \:\prophet\ and Differences from Other Forecasting Benchmarks}
Because our goal is to thoroughly evaluate what capabilities affect LLMs' predictive intelligence, \prophet\ is built with different design choices  compared to several recent forecasting benchmarks, which are designed primarily to rank LLMs or agents based solely on forecasting accuracy. To our best knowledge, \prophet\ is the first large-scale benchmark that continuously runs evaluation on real-market events under a multi-horizon protocol and within a modularized forecasting pipeline.  We further illustrate these differences in \cref{tab:related}, whereas  highlight the four key design choices of \prophet\ below.   

%Despite the few recent benchmarks for forecasting, \prophet\ is distinguished by the following four key design choices (see comparisons   in \cref{tab:related}). These turns out to be crucial for our latter thorough evaluation of   LLM's forecasting capabilities.     
\begin{enumerate}[leftmargin=1.2em,itemsep=2pt,topsep=2pt]
\item \textbf{Probabilistic forecasts:} Future events are intrinsically random, so we elicit each market's probabilistic forecast   from each model, rather than a single choice of the most likely outcome. Notably, probabilistic forecasting is also  the  standard for real-world  forecasting platforms in general, including leading ones such as   \cite{Metaculus} and       \cite{GoodJudge}.        
\item \textbf{Multi-horizon protocol:} We implement a multi-horizon forecasting protocol that sets a schedule for the models to make prediction across various timestamps before the event resolves; we defer details on the scheduling algorithm and multi-horizon aggregation to~\cref{app:prediction_context}. 
This enables our temporal analyses of how models update their forecasts as market conditions and public information evolve across the full lifecycle of an event.
\item \textbf{Modularized forecasting pipeline:} The process of prediction is broken down from source collection to probability elicitation to market actionability. This enables a comprehensive and controlled study of LLMs' forecasting performance at different modules.
\item \textbf{Market return metrics:}   \prophet~allows the evaluation of market profitability of each forecast, measuring models' \emph{relative} advantage over market consensus.   
\end{enumerate}

\begin{table}[ht]
\centering
%\small
\begin{tabular}{lccccc}
\toprule
\textbf{Benchmark} & Live events & Probabilistic & Multi-horizon & Modularized & Return metrics \\
\midrule
\textsc{MIRAI}         & --         & --           & \checkmark    & --          & -- \\
\textsc{ForecastBench} & \checkmark & \checkmark   & \checkmark    & --          & -- \\
\textsc{FutureBench}   & \checkmark & --           & --            & --          & -- \\
\textsc{FutureX}       & \checkmark & --           & --            & --          & -- \\
\textbf{\prophet}      & \checkmark & \textbf{\checkmark} & \textbf{\checkmark} & \textbf{\checkmark} & \textbf{\checkmark} \\
\bottomrule
\end{tabular}
\vspace{3mm}
\caption[Comparison between \prophet\ and related forecasting benchmarks]{%
\textbf{Comparison between \prophet\ and related forecasting benchmarks.}
We compare against MIRAI~\citep{DBLP:journals/corr/abs-2407-01231}, ForecastBench~\citep{karger2024forecastbench}, FutureBench~\citep{FutureBench}, and FutureX~\citep{zeng2025futurex}.
}
\vspace{-3mm}
\label{tab:related}
\end{table}
\section{Evaluating Forecasts: Caveats, Metrics, and Results}
\label{sec:evals}

The golden metric in machine learning for evaluating prediction of binary (or multiary) outcomes is the \emph{accuracy} measure (i.e., 0-1 loss). Indeed, several recent studies on benchmarking LLMs'  forecasting capabilities (e.g., \citet{zeng2025futurex,karger2024forecastbench,wildman2025bench})  have used models' accuracy as an indicator of their  forecasting  capabilities. These studies ask LLMs to predict a deterministic outcome and then evaluate this prediction's accuracy (i.e., 0-1 loss). As a first caveat, we point out the limitation of measuring    probabilistic forecasts via the accuracy metric. This limitation is  due to the  random nature of future events, which make them intrinsically different from typical classification tasks (e.g., recognizing whether a cat is in an image or not) with a deterministic answer. This is also why real-world  forecasting platforms, including leading ones such as   \cite{Metaculus} and       \cite{GoodJudge}, almost always elicit probabilistic forecasting.\footnote{Some classification algorithms, such as logistic regression, also compute probabilities. However, unlike forecasting, such probabilities are  not a requirement on the output but rather for the sake of algorithm design. Indeed, these algorithms often convert the probability to a deterministic output as its final prediction.}
 
\begin{caveatbox}{\textbf{Caveat 1:} Accuracy cannot fully measure the quality of probabilistic forecasts.} 
To see this, consider a binary random event $E$ with groundtruth probability $p^* = 0.6$. Suppose Alice has a perfect forecast of the probability as $p^A = 0.6$ whereas Bob's forecast is $p^B=1$. To evaluate their predictions' accuracy, each forecaster must give a predicted \emph{outcome}  $o\in \{0,1 \}$. In this case, it is natural for both to predict $o = 1$ (as they both believe $o=1$ is more probable), which yields the same accuracy $\E_{o\sim p^*} \Pr(o=1) = 0.6$. Therefore, the accuracy metric fails to distinguish   Alice's perfect forecast of the  event's probability from Bob's forecast, which is extremal and not precise. The intrinsic reason of this failure is that using accuracy alone ignores prediction confidence, which is reflected in forecasted probabilities. In this example, Bob is overly confident  about the outcome $o=1$ compared to Alice, despite having the same accuracy as her. Nevertheless, correctly measuring the confidence of a forecast is  crucial for downstream decision making tasks, such as risk control in financial markets \citep{rigotti2005uncertainty}.       
\end{caveatbox}

\subsection{Evaluating Forecasting from Three Dimensions. }\label{subsec:metrics}
Given the \textbf{Caveat 1} above, we identify three different dimensions of forecasting evaluation  in order to obtain a comprehensive understanding of LLMs' forecasting capabilities. These three    dimensions are (1) \emph{forecasting loss} \citep{gneiting2007strictly, glenn1950verification} which measures the \emph{absolute} quality of a probabilistic prediction; (2) \emph{calibration error} \citep{degroot1983comparison,guo2017calibration}  which measures reliability (formally, statistical consistency) of a probabilistic prediction; and (3) \emph{market return} \citep{mallikarjuna2019evaluation} which measures the \emph{relative} advantage over current market's consensus.\footnote{Market consensus about an event's forecast is generally difficult to obtain. However, a good proxy of such data is available for \prophet~as it fetches forecasting events from prediction markets, which are widely believed to offer good approximation of the market's consensus on the event's probability \citep{arrow2008promise,berg2008results}.} As we illustrate below,  these dimensions  capture fundamentally different aspects of a forecast, enabling us to build a  comprehensive understanding of LLMs predictive intelligence. For each dimension,  we choose the most standard metrics.

\subsubsection{Scoring Rules to Measure Forecasting Loss}
\label{subsec:accuracy}
       
The standard approach for evaluating probabilistic forecasts is through  \textit{proper scoring rules} \citep{gneiting2007strictly}, which quantifies the discrepancy between predicted probabilities and realized outcomes. A scoring rule is \emph{proper} if it ensures that the perfect probabilistic forecast achieves the optimal expected score. 
\prophet~adopts one of the most popular proper scoring rules,   the \textbf{Brier score}~\citep{glenn1950verification}, 
which is defined for each event $E_i$ as\footnote{Careful readers might notice that definition given here has some subtle difference from a ``textbook version.'' We delay a detailed explanation to~\cref{app:brier-details}.}
\begin{equation}
\label{eq:brier}
BS_i := \frac{1}{m_i} \sum_{j=1}^{m_i} \bigl(p_{ij} - o_{ij}\bigr)^2. 
\end{equation}
Therefore, the overall Brier score of an LLM is the average   across all $n$ events, $BS := \frac{1}{n} \sum_{i=1}^n BS_i.$ 
The Brier score is the mean squared error for probabilistic forecasts, hence   smaller scores indicate better forecasts. It is easy to verify that Brier score separates  the quality of Alice's and Bob's   forecasts  in the above example:  Bob's expected Brier score  is $\E_{o \sim p^*} (p^B - o)^2 = 0.4$, whereas Alice's   is $\E_{o \sim p^*} (p^A - o)^2 = 0.24$.

\subsubsection{Calibration Errors to Measure Reliability} \label{subsec:calibration}

In addition to achieving low Brier scores, a good forecast should also be \emph{reliable} -- that is,  when it assigns probability $\Tilde{p}$ to \texttt{Yes}, the event should indeed occur about $\Tilde{p}$ fraction of the time. This notion of \emph{reliability} of a forecaster dates back to   \citep{murphy1973new}  and has been conventionally studied in the machine learning literature as \textit{calibration} \citep{degroot1983comparison,guo2017calibration,kalai2024calibrated} \footnote{We note that recent works on  forecast evaluations have separately considered Brier scores (e.g., \citep{halawi2024approaching,karger2024forecastbench}) and calibration errors (e.g., \citep{zou2022forecasting}), though no work has conducted comprehensive evaluation of all these metrics to our knowledge, especially  models' market returns.}.

Formally, in our binary market setting, let  $M = \{1,\cdots, m\}$ denote the set of markets. Suppose a forecaster provides predicted probabilities $\tilde{p}_k$ for all markets in a set $M_k \subset M$ ($m_k = |M_k|$ denotes its cardinality). Then the \textbf{expected calibration error (ECE)} of this forecaster is defined as
\begin{equation}
\label{eq:perfect-calibration}
   ECE = \frac{1}{m} \sum_k \bigg| \sum_{j \in M_k} \mathbb{P}(o_j = 1| p_j=\Tilde{p_k}) - m_i \Tilde{p}_k \bigg| .
\end{equation} 
Intuitively, ECE captures how much a probabilistic forecast differs from the  real averaged probability, given this forecast level. A lower ECE means the forecast is more reliable, though it does not necessarily imply small forecasting loss. This difference is reflected in our LLM evaluations below, whereas an illustrative mathematical example can be found in~\cref{app:ece-differ-brier}. 

 In practice, however, exactly repeated predictions are rare, so computing $\mathbb{P}(o_j = 1| p_j=\Tilde{p}_i)$ in~\cref{eq:perfect-calibration} is infeasible. A standard solution -- used throughout the applied literature -- is to approximate ECE by partitioning predictions into probability bins, and comparing each bin’s empirical accuracy to its average predicted probability. This binned version is what most prior work simply calls ``ECE.'' Since the implementation is routine and well-known, we defer the details to \cref{app:ece-details}.

\subsubsection{Market Return to Measure Economic Values}
\label{subsec:market-return}
 \prophet~ uses events from real-world prediction markets, hence allows us to evaluate the economic value of a forecast based on current market prices. We thus introduce  \emph{Average Return}  as a natural metric to capture \emph{how profitable it would be to trade in the market using LLM forecasts?}
Formally, consider a (binary) market with  market price $q^{Y}$  per share  for the \texttt{Yes} contract and $q^{N}$ for   the \texttt{No} contract. Given forecasted probability $p$ for an \texttt{Yes} outcome, we use the following natural betting strategy: allocate a unit budget (\$1) to buy $1/q_Y $ shares of \texttt{Yes} contracts if $ \frac{p}{q^Y} \geq \frac{1-p}{q^N}$, or to buy $1/q^N$ shares of \texttt{No} contracts otherwise.\footnote{This betting strategy  can be shown to maximize expected return under risk neutrality. We defer its proof to~\cref{app:unified-betting}, describing a   general  framework for optimizing betting strategies under different risk-preferences.}  After the event resolution, if the bought contracts match  the event outcome, the return is   simply the number of these contracts; otherwise, the return is $0$. The \textbf{Average Return} of an LLM is then defined as the average of these  returns across all markets, under a unit budget allocated to each market.

\begin{remarkbox}{\textbf{Remark:} Intrinsic Differences among the Three  Metrics.} 
\textbf{First},   the  difference between the Brier score and ECE is well-known \citep{murphy1973new}.\footnote{\cite{murphy1973new} shows that the Brier score can be decomposed into three terms: the ECE   (also coined   it ``reliability score'' by \cite{murphy1973new}), the \emph{uncertainty} score that captures inherent randomness of the to-be-forecasted event, and the \emph{resolution} score that captures forecasts' variance.} A forecast with better/smaller ECE could have worse Brier score (see an example in \cref{app:ece-differ-brier}). In practice,  ECE is crucial when decision makers have \emph{risk preferences} since smaller ECE ensures that the risk encoded in the forecasted probabilities is more reliably captured, leading to better risk-preference-adjusted decisions -- even when the forecasted  probabilities have worse Brier scores. In \cref{app:ece-differ-brier}, we illustrate this insight with a simple example in which a forecaster with better ECE but much worse Brier score could lead to better  risk-adjusted  utility.

\textbf{Second}, while both market return and Brier score assess forecast quality, they differ fundamentally. The Brier score is an \emph{absolute} metric, measuring a forecast’s closeness to the ground truth, independent of external factors like market prices. In contrast, market return is a \emph{relative} metric, capturing how much a forecast outperforms the market’s current belief (interpreted as the contract price~\citep{wolfers2006interpreting}). This distinction is already evident in our evaluations. In \cref{app:brier-vs-return}, we provide a concrete example to illustrate that forecasts with worse Brier scores can achieve higher market returns.

\textbf{Third}, calibration is not related to the total market return, but is indicative about how balanced the market return is from betting on the \texttt{Yes} contracts and \texttt{No} contracts.  To formalize this intuition,  we prove in \cref{app:calibration-and-return}  that a well-calibrated and symmetric forecaster -- intuitively, one that is not systematically more aggressive or conservative than the market -- will have balanced returns from both contract types. 
\end{remarkbox}

\subsection{Evaluation Results and Analysis Across Different Dimensions }
\label{subsec:performance}
\subsubsection{Caveats from Temporal Analysis of Forecasting Results}

\begin{figure}[ht!]
    \begin{minipage}[c]{0.5\textwidth}
        \includegraphics[width=\linewidth]{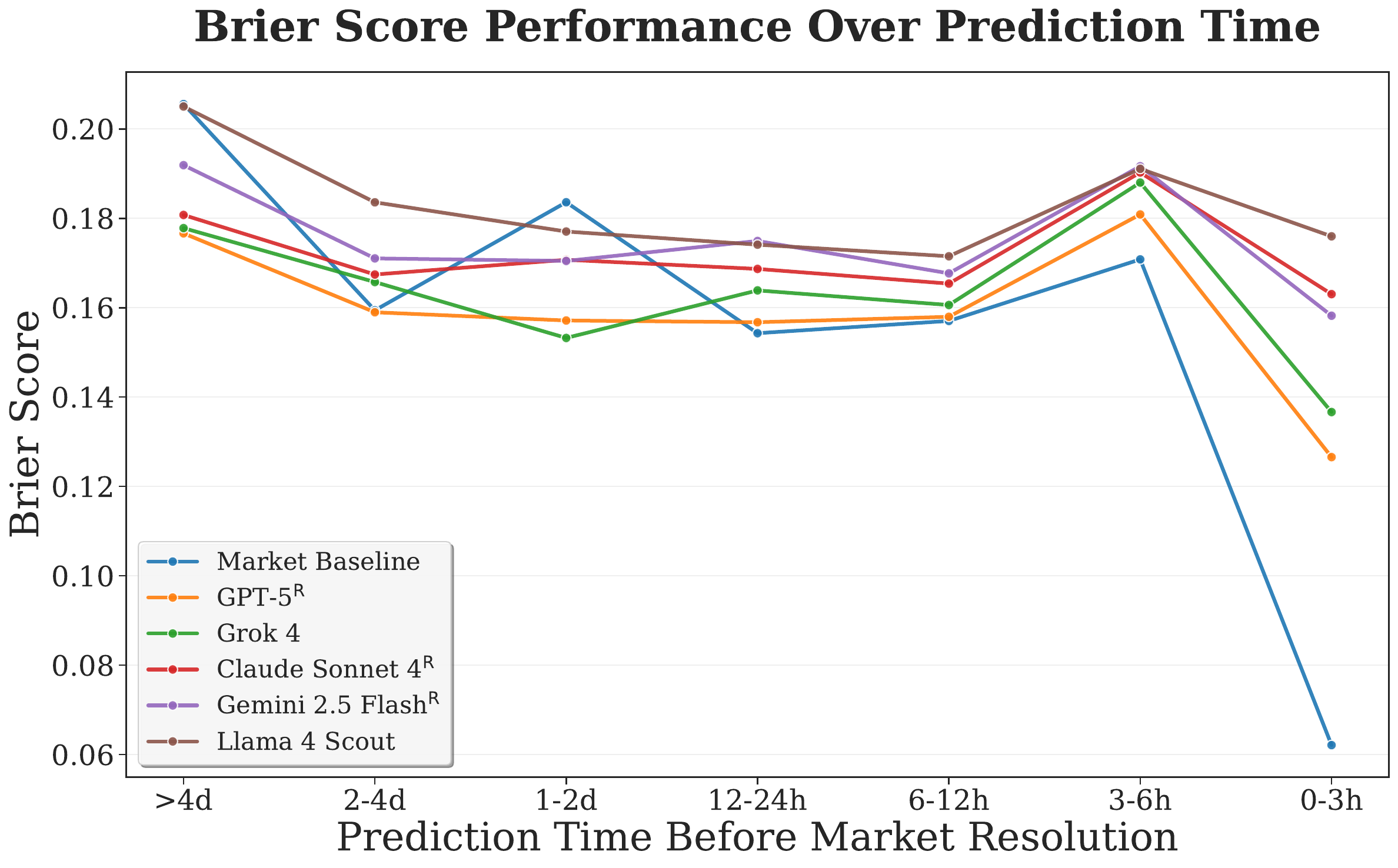}
    \end{minipage}
    \hfill
    \begin{minipage}[c]{0.45\textwidth}
    \vspace{3mm}
    \caption{\textbf{Brier score as a function of prediction time before resolution.} 
    Predictions made closer to the market resolution time tend to be better in Brier score, as additional information becomes available. While some LLMs outperform the market baseline at longer horizons, the market adjusts rapidly to new signals and achieves the highest short-term accuracy near event resolution.}
    \label{fig:brier-over-time}
    \end{minipage}
    \vspace{-3mm}
\end{figure}

Before presenting our main evaluation results, we first examine how predictive accuracy varies with the time remaining before event resolution. As shown in~\cref{fig:brier-over-time}, the average Brier score is plotted across lead-time intervals, where, for instance, the ``0-3 h'' bin represents predictions made within three hours before resolution. For both the market baseline and representative LLMs, accuracy generally improves as the resolution time approaches -- reflecting that additional information becomes available and predictive signals strengthen.\footnote{In certain domains, such as live sports, markets may remain open during the event itself, allowing traders to incorporate real-time developments as outcomes unfold.} Interestingly, the market baseline lags behind several frontier LLMs when predictions are made far in advance, suggesting that LLMs can effectively synthesize broader prior knowledge and reason under noisy settings at long horizons. However, as resolution nears, markets incorporate breaking information and news updates more rapidly than LLMs, quickly surpassing LLMs in short-term accuracy. 
\emph{This observation motivates two methodological takeaways that inform our evaluation design:}

 \begin{caveatbox}{\textbf{Caveat 2:} Raw Brier scores can be misleading, and having a baseline is instrumental in understanding predictive intelligence.}
Predictive difficulty varies drastically with lead time: the same event may be highly uncertain a week in advance but almost deterministic hours before resolution. Hence, comparing raw Brier scores across horizons conflates forecasting skill with intrinsic event difficulty. Including the \textit{market baseline} as a dynamic reference forecaster provides a meaningful normalization, offering a heuristic measurement of ``how predictable an event is'' at each point in time.
 \end{caveatbox}

\begin{caveatbox}{\textbf{Caveat 3:} Predictions too close to resolution should be excluded.}
Near-resolution forecasts are dominated by real-time information access rather than reasoning ability. Because \prophet\ holds the retrieval component fixed across models, such predictions no longer reflect intrinsic model capabilities. Consequently, we exclude all forecasts made within three hours before event resolution from subsequent evaluations. 
\end{caveatbox}

\subsubsection{Evaluation Results and Discussions} 
Throughout the main body of the paper, we highlight (the same) five representative LLMs\footnote{Models are chosen to span proprietary and open-source families, reasoning and non-reasoning variants, and a range of performance levels in the full ranking.} out of the 23 evaluated in total (the full results are available in~\cref{tab:full-eval-results} of \cref{app:full-eval}).  As shown in \cref{tab:main-evaluation-result},  frontier proprietary models demonstrate similar Brier score performances as the the \textit{Market Baseline}, and consistently outperform the later in terms of calibration and market return. 

\begin{table}[!th]
% \small
\centering
\begin{threeparttable}
\setlength{\tabcolsep}{6pt}
\renewcommand{\arraystretch}{1.15}
\begin{tabular}{lcccccc}
\toprule
& \multicolumn{2}{c}{\textbf{Forecasting Loss}} & \multicolumn{2}{c}{\textbf{Calibration Error}} & \multicolumn{2}{c}{\textbf{Market Return}} \\
\cmidrule(lr){2-3}\cmidrule(lr){4-5}\cmidrule(lr){6-7}
\textbf{LLM} & {$\downarrow$ \textbf{Brier} (95\% CI)} & {\textbf{Rank}} & {$\downarrow$ \textbf{ECE}} & {\textbf{Rank}} & {$\uparrow$ \textbf{Average} (95\% CI)} & {\textbf{Rank}} \\
\midrule
GPT-5$^\texttt{R}$~$\triangle$
  & 0.184 ($\pm$ 0.006) & {\large \ding{172}} & 0.042 & {\large \ding{173}} & 0.943 ($\pm$ 0.042) & {\large \ding{172}}\\
Grok-4$^\texttt{R}$~$\triangle$ 
  & 0.189 ($\pm$ 0.005) & {\large \ding{173}} & 0.043 & {\large \ding{174}} & 0.864 ($\pm$ 0.052) & {\large \ding{175}}\\
Claude Sonnet 4$^\texttt{R}$~$\triangle$
  & 0.194 ($\pm$ 0.006) & {\large \ding{174}} & 0.041 & {\large \ding{172}} & 0.909 ($\pm$ 0.101) & {\large \ding{173}}\\
Gemini 2.5 Flash$^\texttt{R}$~$\triangle$  
  & 0.197 ($\pm$ 0.007) & {\large \ding{175}} & 0.067 & {\large \ding{176}} & 0.883 ($\pm$ 0.053) & {\large \ding{174}}\\
Llama-4-Scout~$\triangle$
  & 0.219 ($\pm$ 0.008) & {\large \ding{176}} & 0.060 & {\large \ding{175}} & 0.805 ($\pm$ 0.040) & {\large \ding{176}}\\
\midrule
Market Baseline & 0.187 ($\pm$ 0.006) & N/A & 0.069 & N/A & 0.899 ($\pm$ 0.043) & N/A \\
\bottomrule
\end{tabular}
\end{threeparttable}
\vspace{2mm}
\caption{\textbf{Evaluation of five representative LLMs.} For Brier and Average Return, bootstrapped $95\%$ confidence intervals are reported. Superscript $^\texttt{R}$ is used to denote a reasoning model, with its reasoning configuration in~\cref{app:full-llm-list}. The full results for all 23 LLMs are provided in~\cref{tab:full-eval-results}. Although our benchmark updates in real time, for the purpose of writing this paper we need to fix a dataset. Our evaluation is conducted on 1{,}367 events that were resolved before October 11, 2025.} \label{tab:main-evaluation-result}
\vspace{-3mm}
\end{table}
Notably, the relative rankings of models differ depending on which evaluation metric is used, illustrating the complementary perspectives offered by accuracy, calibration, and profitability. Concretely, Brier scores fall in a narrow band $[0.18, 0.22]$ (for reference, pure random guess has expected Brier score $0.25$). By contrast, calibration differences are more pronounced: strong models typically achieve $\text{ECE} \leq 0.05$, whereas weaker ones fall in the $[0.06, 0.2]$ range. Nevertheless, all the selected LLMs demonstrate better calibration than the market baseline. For market performance, even \modelshort{gpt-5}$^\texttt{R}$, the top-ranked model, fails to reach break-even (Average Return $< 1$), and most models fall below $0.9$. Since event-level payoffs depend heavily on market-implied probabilities, the resulting returns exhibit substantial variance, as evidenced by the wide confidence intervals. In~\cref{app:sharpe-ratio}, we further discuss the \textit{Sharpe ratio}~\citep{sharpe1998sharpe} of our betting strategy, which normalizes Average Return by volatility, providing a more stable comparison of models' economic performance. 

Overall, our results suggest that absolute forecasting skill and relative profitability against prediction markets are still challenging for today’s LLMs. A more fine-grained investigation could help  deepen our understanding about what could make  an LLM   a good prophet.  \cref{fig:eval-results} draws  for the best and worst models (Left \& right) their  reliability diagrams regarding calibration errors \citep{guo2025deepseek}, where predicted probability ($x$-axis) is compared against realized frequency ($y$-axis) at different probability bins. While their calibration is similar in intermediate ranges, the stronger model -- \model{o3} -- performs much better in the extreme bins ($0$-$0.1$ and $0.9$-$1.0$), where it almost always predicts correctly. Because such extreme forecasts occur frequently, this advantage helps explain the gap in both Brier score and market return.

The empirical results reported in this paper are based on evaluations conducted over 1,367 events resolved prior to October 11, 2025. These events were collected via our real-time Kalshi data pipeline, and consequently the distribution of event categories reflects the underlying composition: 76\% Sports, 8\% Entertainment, 7\% Politics, and 9\% Other. To ensure that the results remain robust to event composition, we additionally verify that these performance patterns are robust to reweighting the dataset toward a more balanced distribution across categories. Full results from these balanced-subset evaluations are reported in Appendix~\ref{app:category-rebalancing}. Regardless, we plan to continue expanding and diversifying the event pool in future releases of \prophet.

\begin{figure}[ht!]
  \centering
  \begin{minipage}[c]{0.65\textwidth} % left block: the two images
    \begin{minipage}[c]{0.49\textwidth}
      \includegraphics[width=\linewidth]{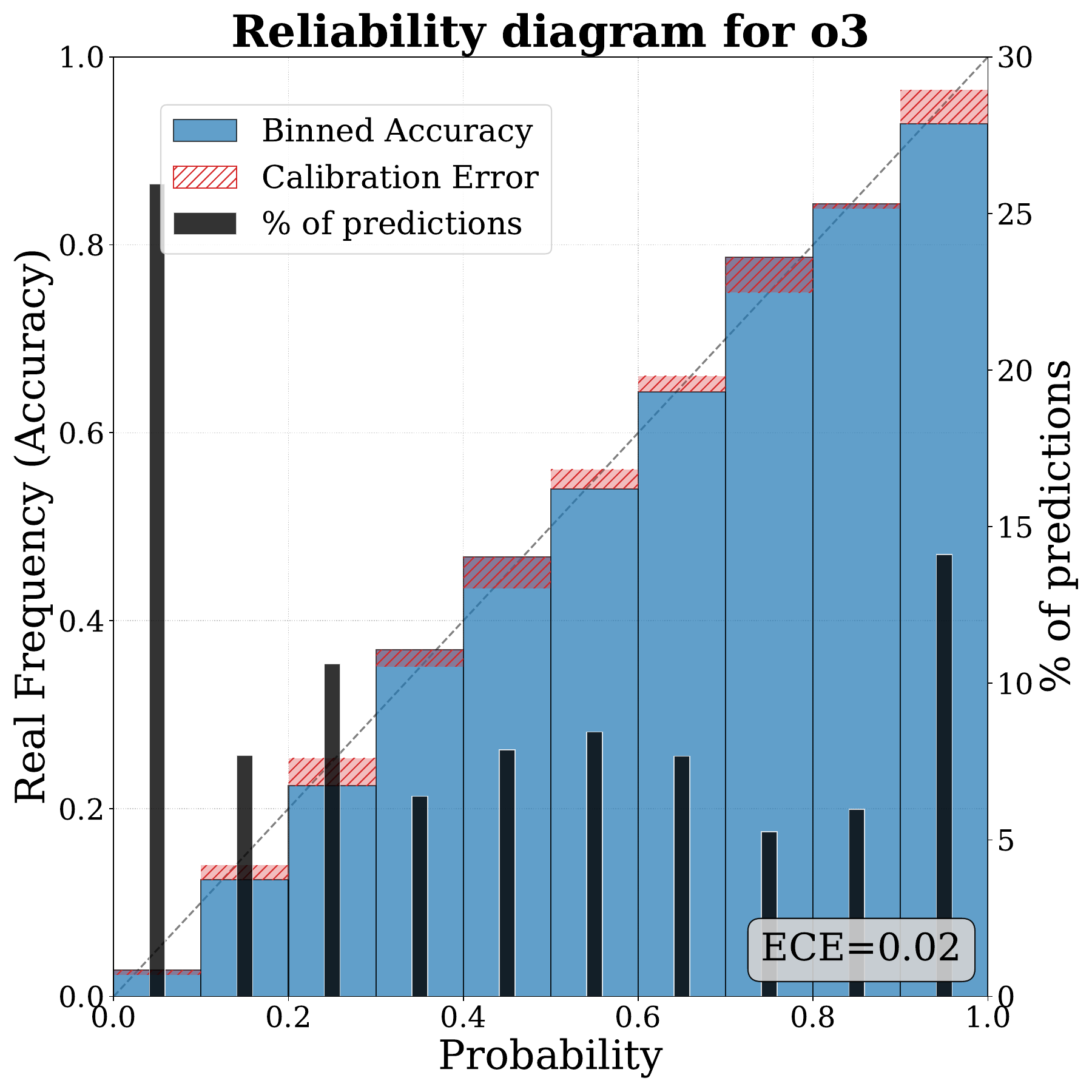}
    \end{minipage}\hfill
    \begin{minipage}[c]{0.49\textwidth}
      \includegraphics[width=\linewidth]{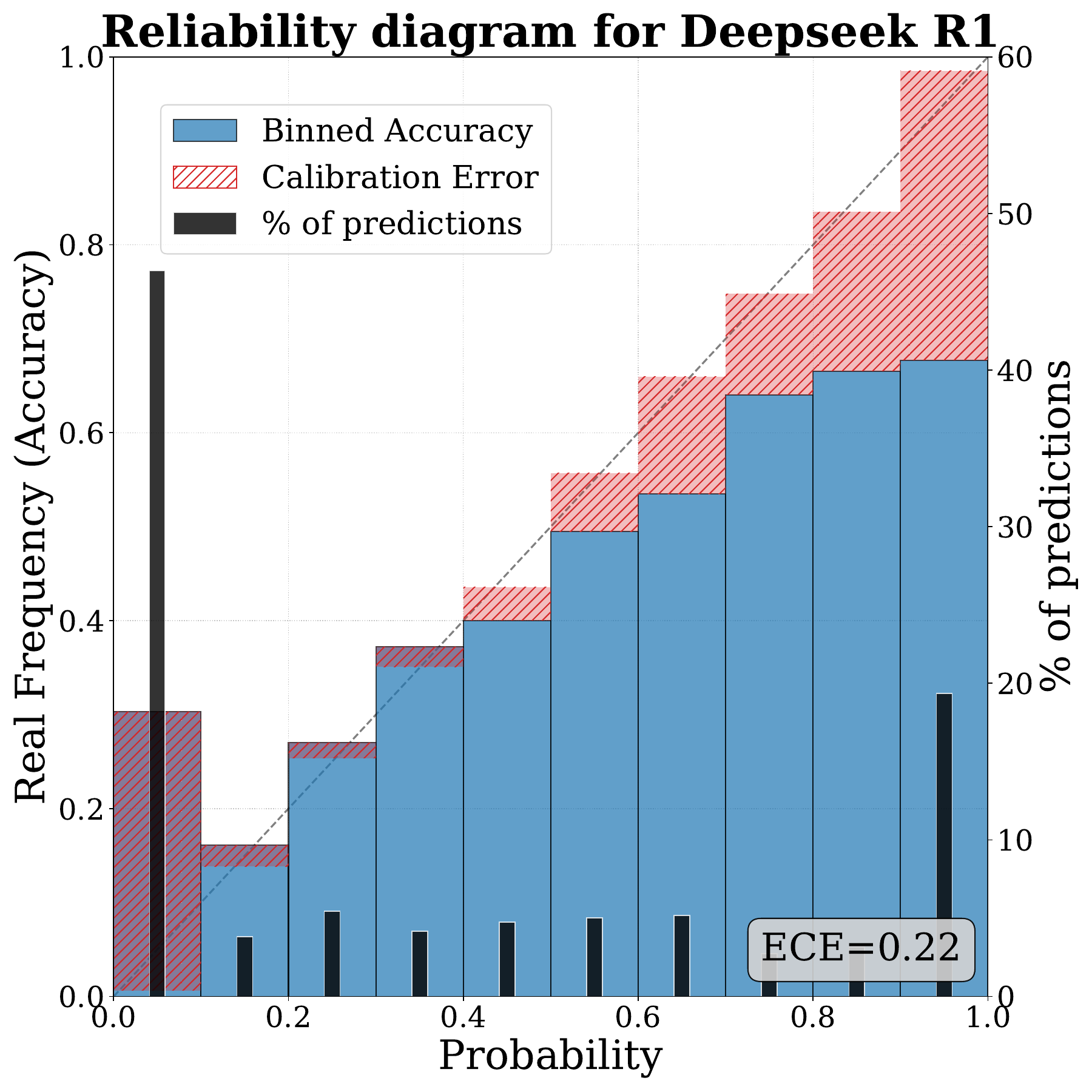}
    \end{minipage}
  \end{minipage}%
  \hfill
  \begin{minipage}[c]{0.33\textwidth}
    \caption{\small \textbf{Reliability diagrams for the best and worst LLMs ranked by calibration score (ECE).} 
    The \textbf{black} histogram indicates fraction of predicted probabilities in each bin. 
    The calibration error within each bin appears as the height of the \textcolor{red}{red} rectangle, i.e., the gap between accuracy and confidence. 
    The reported ECE score (bottom right) corresponds to a weighted sum of these errors, weighted by the distribution. 
    Superscript $^\texttt{R}$ denotes a reasoning model.} 
    \label{fig:eval-results}
  \end{minipage}
\vspace{-5mm}
\end{figure}

% \section{Prophet Arena Data Analysis}
% \qc{Maybe use this section as the data analysis part.}
% \jw{can merge this section to Section 3.4}

% \qc{I reran the average return by brier score bin analysis and the results are that they don't really relate much to each other. I'm thinking of doing this "by difficulty" analysis on top of this: tying together problem difficulty (how easy to find edge) with this brier/average return correlation analysis.}

\section{An In-Depth Analysis of LLM-as-a-Prophet}
\label{sec:exps}

In this section, we deploy our tailored \prophet\ framework to perform a systematic investigation into the emerging paradigm of LLM-as-a-Prophet. Due to the resource constraint, our experiment results in this section uses a subset of 100 events sampled uniformly from our full benchmark.\footnote{We make this dataset publicly available at \url{https://huggingface.co/datasets/prophetarena/Prophet-Arena-Subset-100}.} For clarity, this section prioritizes the presentation of interesting     findings and novel insights, during which we shall also point the reader to corresponding appendices for more thorough evaluations and discussions.

We begin with a series of sanity checks to ensure that existing LLMs demonstrate reasonable level of understanding and reasoning capabilities to make forecasts.
Our analysis proceeds from two complementary perspectives. The first adopts a mechanistic approach, designing controlled experiments to uncover the principles and failure modes that govern model behavior. The second follows a granular evaluation procedure, assessing how well current LLMs can reason about future events when embedded in realistic forecasting environments.
Together, these experiments establish an empirical foundation for understanding what it means for a model to ``forecast'', and delineate the boundary of its current predictive intelligence. 

\subsection{Robustness and Consistency Checking}
The basis to accomplish the general forecasting tasks is to have a good understanding of the question as well as the nature of probability. Hence, we examined two fundamental capabilities of existing LLM models: (i) \textit{robustness of probability elicitation}, where calibration remains stable under prompt variations and alternative probability estimation methods, and (ii) \textit{logical consistency}, where most LLMs correctly understand structures such as mutually exclusive or nested markets. For the majority of models, both capabilities appear already reliable and largely mature. Detailed results are provided in \cref{app:probability-elicitation,app:reasoning-consistency}.

\subsection{Mechanistic Analysis of LLM-as-a-Prophet}
We begin by taking a mechanistic view of LLM predictive intelligence, aiming to uncover how it arises from the interaction of distinct model capabilities, in aim to identify the bottlenecks and causal links that determine their effectiveness. To this end, we design experiments that examine the key factors behind strong forecasting performance, starting from a model's internal knowledge (section \ref{subsubsec:internalization}), to the quality and accessibility of external sources (section \ref{subsubsec:context}), to its ability to integrate those sources effectively (section \ref{sec:modelsConservative}).

\subsubsection{Can Internalized Knowledge Become Foresight?}\label{subsubsec:internalization}

Forecasting begins with what a model already knows --- the \textit{internalized} knowledge.
A key question is whether models possess accurate representations of past outcomes and can effectively leverage this knowledge to inform present forecasts.
To investigate this, we examine how well models recall and interpret historical events, where we retrieve 100 past events from Kalshi that occurred prior to each model’s knowledge cutoff date and evaluate their responses under the recall prompt described in~\cref{app:mem_prompts}.

\begin{figure}[!ht]
\centering
\begin{minipage}[c]{0.5\textwidth}
  \includegraphics[width=\linewidth]{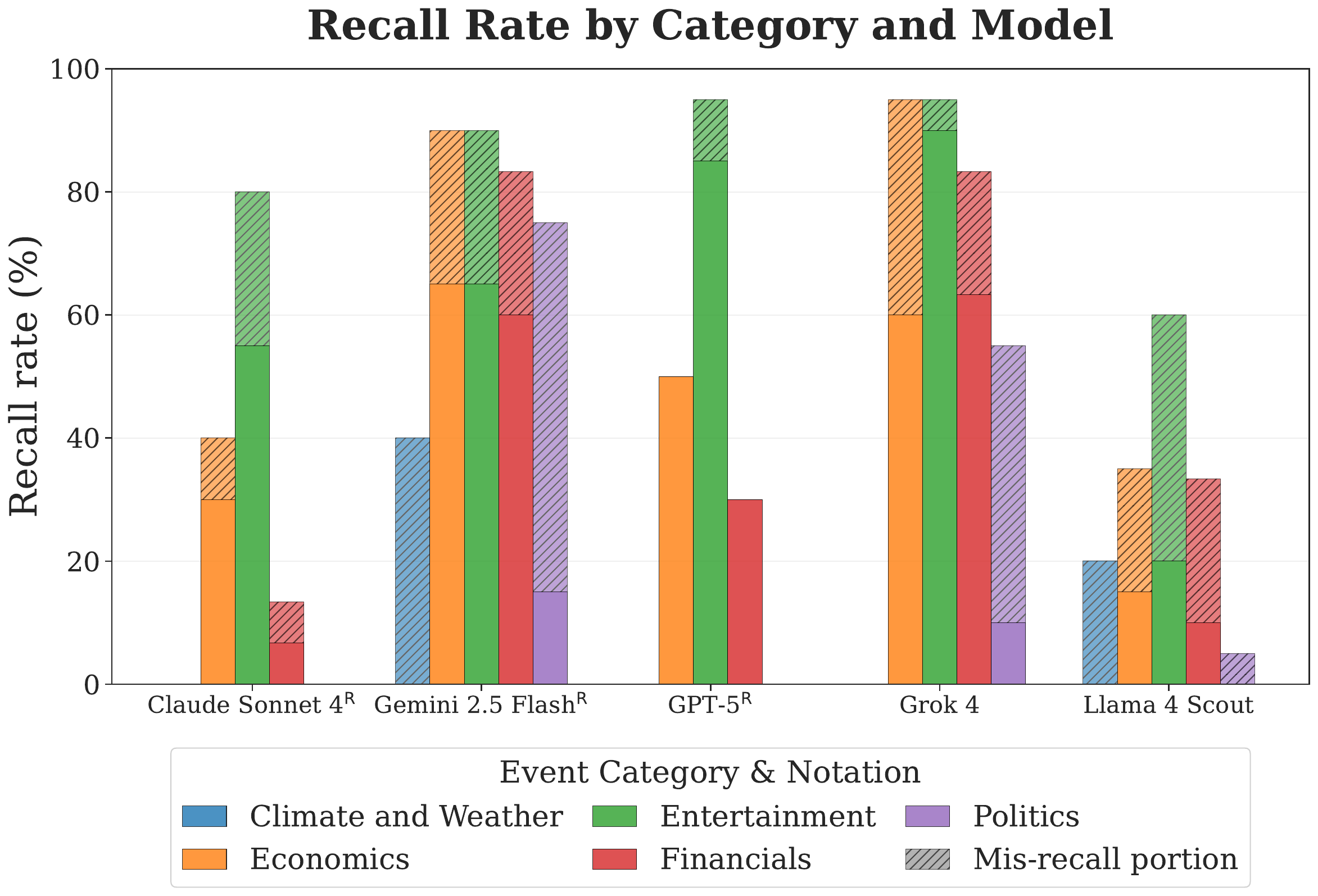}
\end{minipage}
%\hfill
\, \quad 
\begin{minipage}[c]{0.4\textwidth}
  \caption{\textbf{Recall rate by event category and model using the knowledge internalization recall prompt (\cref{app:recall_prompt}).} Shaded region represents the percentage of events reported to be recognized by the LLM but having a wrong recall.}
  \label{fig:recall_rate}
\end{minipage}
\end{figure}

\paragraph{Event recall varies by topics and models.} In~\cref{fig:recall_rate}, we observe that models most reliably recall events in \textit{Entertainment}. By contrast, \textit{Climate and Weather} and \textit{Politics} display low recall and frequent mis-recall. Two factors likely contribute. First, \textit{Weather} prompts often require fine-grained, date-stamped facts (e.g., “\textit{Highest temperature in Miami on Aug 29, 2023?}”), which are less likely to be memorized. Second, due to 2023 regulatory constraints on Kalshi's election markets, \textit{Politics} in our dataset skews toward politics-adjacent indicators (e.g., ``\textit{Biden 538 approval rating on Aug 30, 2023?}”), which varies daily and may be sparsely represented in training corpora (\cite{kalshi}). \par 
Despite these broad patterns, models differ in recall accuracy. For \textit{Economics} and \textit{Politics}, \modelshort{gpt-5} (High) correctly answered all events it claimed to recall. In contrast, models like \modelshort{llama-4-scout} and \modelshort{gemini-2.5-flash} (Reasoning) reported recognizing events in all categories. However, for categories like \textit{Climate and Weather} as well as \textit{Politics}, almost all of their recalled events were false.

\textbf{Event recall is approximate, not precise.} Consider the event ``\textit{Billboard Hot 100 \#1, Jul 13, 2023?}". \modelshort{gemini-2.5-flash} recalled (\cref{app:event_recall}) that Olivia Rodrigo’s \textit{Vampire} displaced Morgan Wallen’s \textit{Last Night} at number one, which is correct. However, it aligned the answer with the chart dated July 15, 2023, treating it as equivalent to July 13, 2023. Thus, while the model demonstrated knowledge of the outcome, it failed to recall the precise alignment between dates and chart releases. This illustrates that event recall in LLMs is approximate: models often retain coarse associations (the correct song and transition) but lack fidelity on exact temporal details. An intriguing open question is to quantify the effect of inaccurate past knowledge internalization on the model's forecasting capability in future events.

\subsubsection{How Do Contexts Shape Forecasts?}
\label{subsubsec:context}
Besides the internalized knowledge, making good forecasts also requires good context information --- \prophet\ retrieves relevant news sources from the Internet and provide the live market data from Kalshi. 
We first analyze how these different information sources affect model performance. \cref{fig:new-sources-category} shows the average Brier scores across all tested LLMs under four conditions: access to both market data and news sources, market data only, news sources only, and none.
The results demonstrate a clear performance hierarchy; as expected, models with access to both market data and news sources output the best predictions, while those without access to either source of information exhibit the poorest performance. 

Beyond performance differences, the variance patterns reveal deeper insights. Interestingly, models using only market data perform only slightly worse, on average, than those with both market data and news sources; however, the key difference lies in the variability of predictions. Combining multiple high-quality sources substantially reduces variance in prediction quality, suggesting that sources still offer valuable signals and perspectives that enable more consistent forecasting. Thus, while market data is powerful precisely because it aggregates information from a plethora of sources and trends, adding a few carefully chosen high-quality sources can still help stabilize and refine the signals the information provides. \cref{app:llm_finding_srcs} includes a deeper analysis of LLMs’ abilities to find and utilize high-quality sources.

\begin{figure}[ht!]
    \centering
    \includegraphics[width=0.9\linewidth]{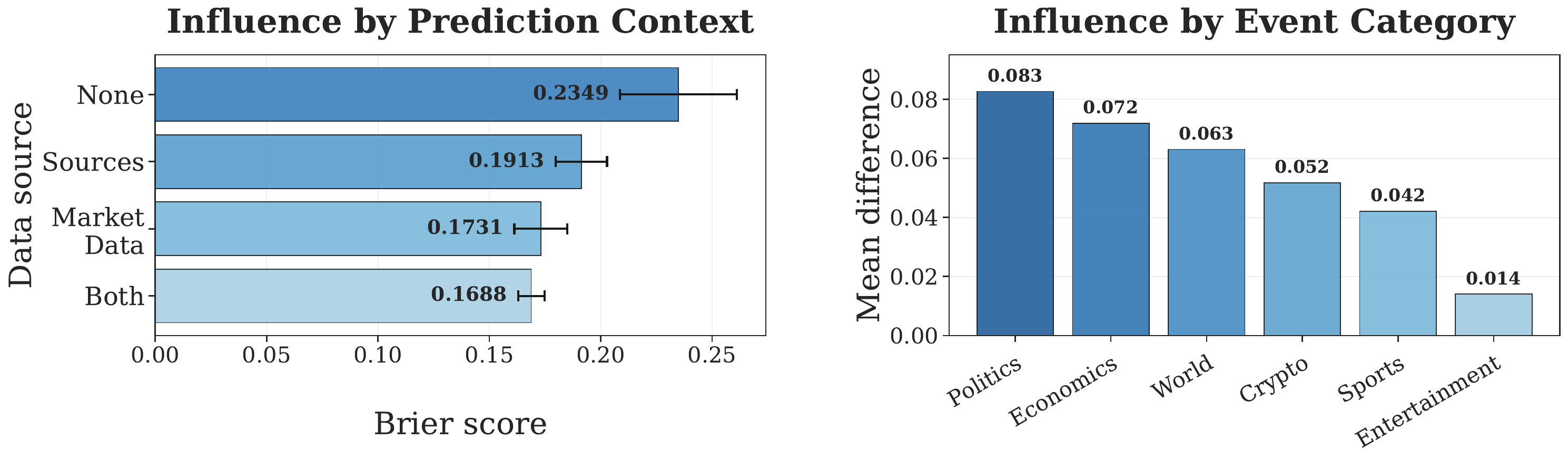}
    \vspace{-2mm}
    \caption{\textbf{Prediction quality across different contexts and event categories.} Left panel shows average Brier scores across evaluated LLMs for prompts with varying information availability (none, sources only, market data only, or both). Error bars represent the interquartile range (IQR) of model performance. Right panel displays mean Brier scores of events in each category.}
    \label{fig:new-sources-category}
\end{figure}

\textbf{Sources can clarify or confound predictions.}
As shown in ~\cref{fig:new-sources-category}, on average, adding sources improves mean Brier score. However, the effect is heterogeneous: not only does the magnitude of the effect vary based on event category, it also does not necessarily strictly improve prediction quality, as shown in the case study in~\cref{app:bitcoin}. Regarding category differences, the benefit of adding sources is not uniform. In areas like politics, where events can be interpreted through multiple perspectives, incorporating information from varied outlets and institutions appears to add useful context. In contrast, in domains such as entertainment or sports, the marginal value of additional sources seems smaller. This highlights that more information is not necessarily better; the effectiveness of sources depends on their relevance to the prediction task.

\subsubsection{How Models Engage with Sources}
\label{sec:modelsConservative}
Having established that access to contextual information improves forecasting performance, we next examine how models actually engage with those sources. In particular, we study how models calibrate their beliefs in light of new evidence: do they tend to keep their prior belief or open to adjust in response to external signals? Do they amplify confidence when multiple sources are consistent, or moderate it when signals diverge?
 
\paragraph{Sources drive LLMs' forecasts to be more conservative than markets.}
\cref{fig:scout_vs_gpt5} compares model prediction probabilities to the market baseline's probabilities on those markets resolved to \texttt{Yes} (i.e., the ``winner''). Since market data is included in the LLM prompts -- most models generally align closely with market predictions. However, across a large majority of events, LLMs consistently output more conservative probabilities. A representative example is \modelshort{llama-4-scout}, shown in \cref{fig:llama-scout-parity}: even when markets assign near-certain probabilities (close to one), the model remains hesitant, rarely producing equally extreme predictions. This reflects a systematic conservatism  where the model, when being fed  with additional external  sources, tend to  underweight outcomes the market views as almost certain. 
As illustrated in \cref{fig:gpt5-parity} \& \cref{fig:grok4-parity}, although both \modelshort{gpt-5} and \modelshort{grok-4} adopt a cautious approach in their probability assignments, they generally track the market more closely across the full probability range and avoid the same degree of reluctance. \modelshort{claude-sonnet-4} exhibits a slightly more assertive pattern: in the mid-probability range (around 0.5-0.6), it occasionally assigns probabilities slightly above the market-implied probabilities. Nevertheless, across most events, it displays significant conservatism, and particularly at the higher end of the probability spectrum, \modelshort{claude-sonnet-4} also is reluctant to place extreme probability predictions. Overall, while conservatism is a common trait across models, the extent of hesitation varies, with some models exhibiting considerably stronger reluctance than others. Similar trends have appeared in other models as well.

\begin{figure}[htbp]
  \centering
  \begin{subfigure}[t]{0.35\linewidth}
    \centering
    \includegraphics[width=\linewidth]{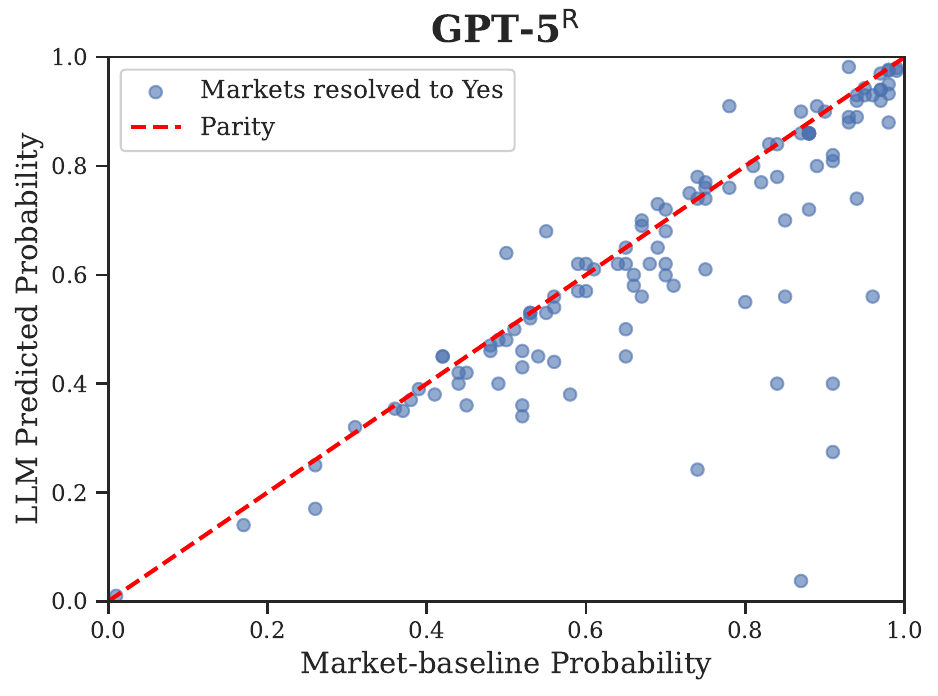}
    \phantomcaption
    \label{fig:gpt5-parity}
  \end{subfigure}
  \hspace{0.05\linewidth}
  \begin{subfigure}[t]{0.35\linewidth}
    \centering
    \includegraphics[width=\linewidth]{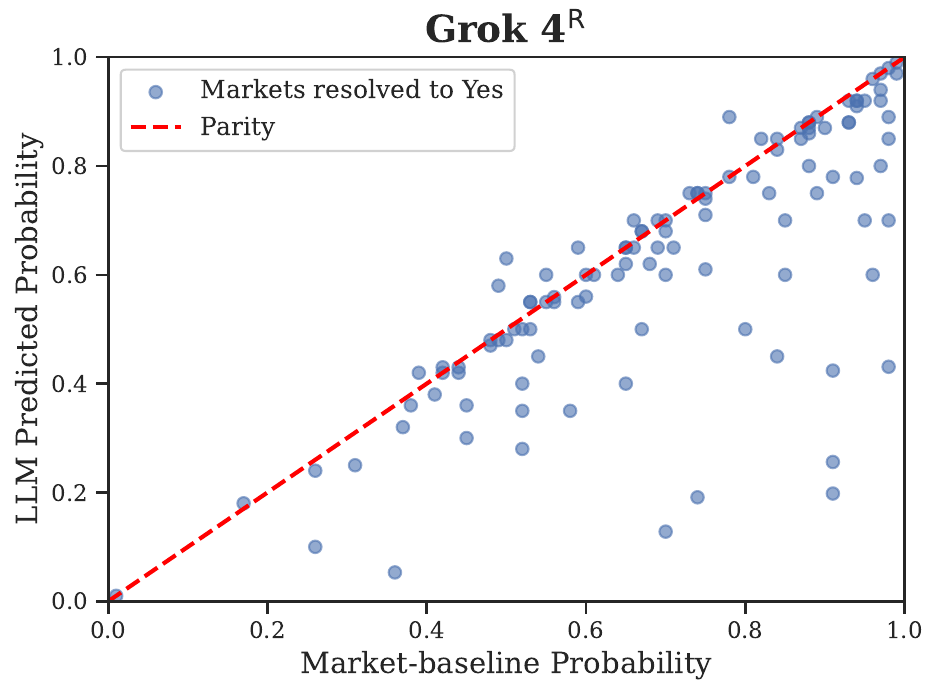}
    \phantomcaption
    \label{fig:grok4-parity}
  \end{subfigure}
  \vspace{0.8em}
  \begin{subfigure}[t]{0.35\linewidth}
    \centering
    \includegraphics[width=\linewidth]{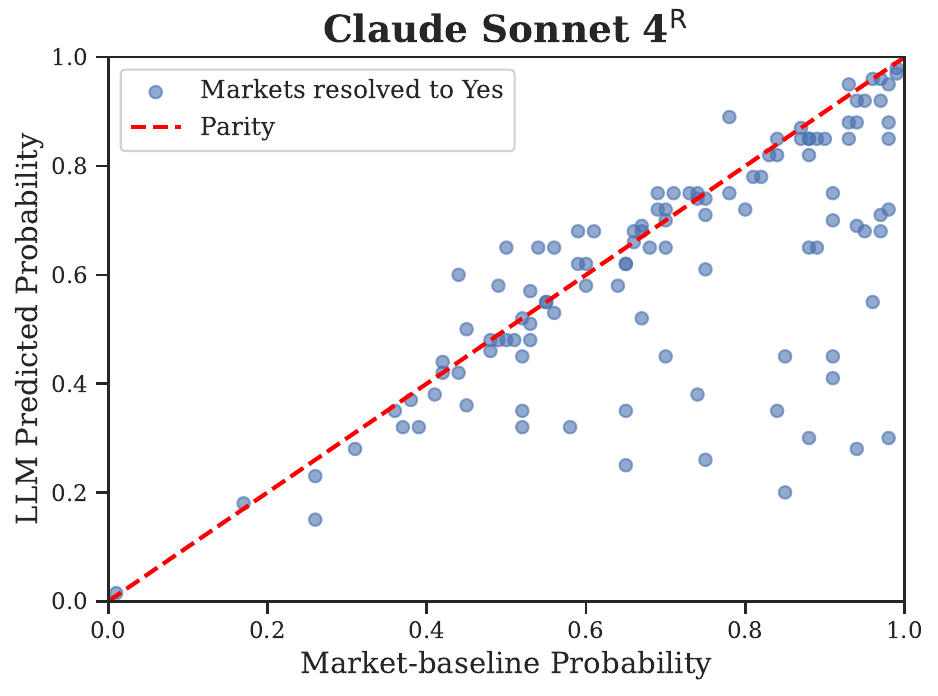}
    \phantomcaption
    \label{fig:claude4-parity}
  \end{subfigure}
  \hspace{0.05\linewidth}
  \begin{subfigure}[t]{0.35\linewidth}
    \centering
    \includegraphics[width=\linewidth]{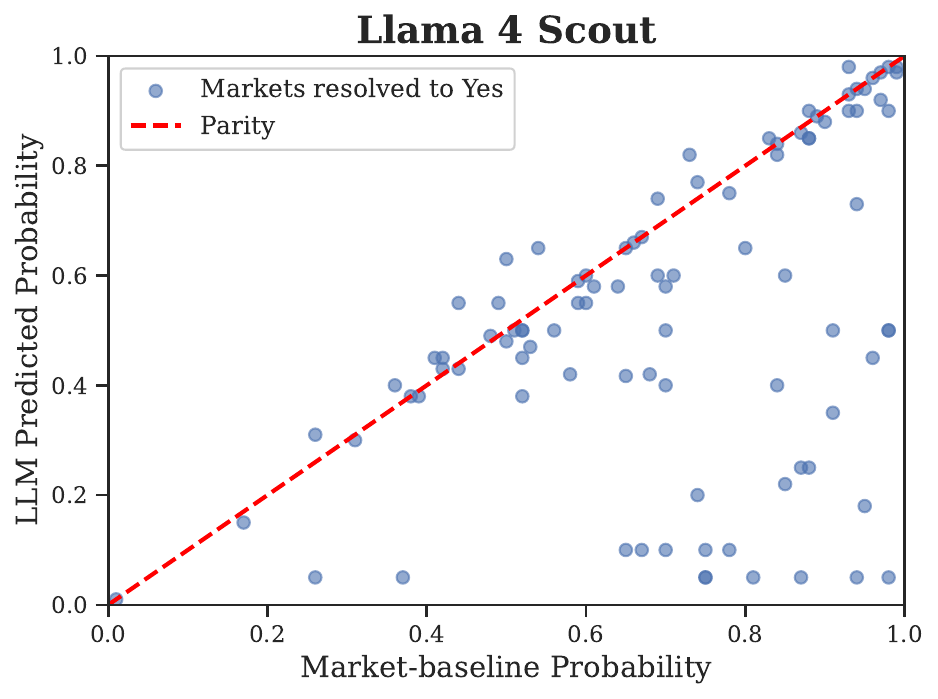}
    \phantomcaption
    \label{fig:llama-scout-parity}
  \end{subfigure}
  \vspace{-5mm}
  \caption{\textbf{LLM-predicted probabilities versus market baselines for markets resolved to \texttt{Yes}.} Each scatter plot compares model-predicted probabilities with market-implied probabilities. Points along the diagonal indicate cases where the model’s forecast exactly matches the market’s implied probability of a \texttt{Yes} resolution.}
  \label{fig:scout_vs_gpt5}
  \vspace{-3mm}
\end{figure}

\subsection{Granular Analysis of LLM-as-a-Prophet}
\label{sec:granularAnalysis}
The reported probabilities of LLMs compress a rich decision-making process into a single number: two models may yield similar predicted probabilities and scores while relying on dramatically different reasoning processes. To address this, we open the black box and evaluate the process underlying each forecast. Specifically, we adopt an LLM-as-a-judge framework \citep{zheng2023judging} to assess the soundness of reasoning along five critical dimensions: source selection, evidence extraction, reasoning synthesis, reasoning-to-prediction alignment, and recognition of prediction uncertainty. Full prompts and additional details of the evaluation framework are provided in~\cref{app:reasoning}. 

To assess the reliability of the LLM-as-a-judge framework, we conducted a blinded human evaluation on a random subset of 170 predictions made by the LLMs. Human raters scored the same five dimensions using the identical rubric provided to the LLM judge. Overall, human and LLM ratings were closely aligned (mean difference $< 0.5$ out of a maximum of 4, with standard deviation $\approx 0.3$ across all dimensions). Full details of the study design and results are provided in \cref{app:reasoning:rater}.

As shown in \cref{tab:llm_judge_eval}, the models demonstrate broadly comparable performance in source utilization, evidence extraction, and uncertainty analysis. However, substantial disparities emerge in the reasoning synthesis and reasoning-prediction alignment categories. For example, when comparing \modelshort{gpt-5} with \modelshort{gemini-2.5-flash}, the differences in reasoning synthesis (0.95) and reasoning-to-prediction alignment (0.30) are markedly larger than the relatively minor gaps in source use (0.12), evidence extraction (0.00), and uncertainty analysis (0.20). Due to the significant deficiencies in those two key categories, the other models demonstrate a significant gap in prediction quality relative to \modelshort{gpt-5}, supporting the findings in \cref{tab:main-evaluation-result}. These findings indicate a potential ceiling effect: once models attain proficiency in retrieval and evidence extraction, further performance gains depend primarily on advances in higher-order reasoning rather than incremental improvements in information access.

\begin{table}[h]
\centering
\begin{tabular}{lcccccc}
\toprule
\textbf{LLM} & \textbf{Sources} & \textbf{Evidence} & \textbf{Reas. Synth.} & \textbf{Align.} & \textbf{Uncert.} & \textbf{Average Score} \\
\midrule
\modelshort{gpt-5}$^\texttt{R}$      & \textbf{3.69} & \textbf{3.66} & \textbf{4.14} & \textbf{3.97} & \textbf{3.94} & \textbf{3.88} \\
\modelshort{gemini-2.5-flash}$^\texttt{R}$      & 3.57 & \textbf{3.66} & 3.19 & 3.67 & 3.74 & 3.57 \\
\modelshort{grok-4}          & 3.40 & 3.51 & 3.33 & 3.48 & 3.66 & 3.48 \\
\modelshort{claude-sonnet-4}$^\texttt{R}$     & 3.53 & 3.47 & 2.93 & 3.39 & 3.75 & 3.41 \\
\modelshort{llama-4-scout}     & 2.97 & 2.88 & 2.29 & 2.37 & 2.87 & 2.68 \\
\bottomrule
\end{tabular}
\vspace{2mm}
\caption{\textbf{LLM performance on reasoning evaluation criteria across dataset events.} Each dimension is scored on a standardized 5-point scale, where 1 and 5 indicate poor and excellent performance, respectively.  Average scores are presented for each model, with \textbf{bold} values indicating the best-performing model for each criterion. Models are ordered by descending overall average score. 
}
\label{tab:llm_judge_eval}
\end{table}

\enlargethispage{\baselineskip}
\section{Conclusion}
\label{sec:conclusion}
This paper systematically evaluates the prospects and challenges of using LLMs to forecast future events, a paradigm coined LLM-as-a-Prophet. Towards that end, we build \prophet, a benchmark that allows modularized analysis about various aspects of existing LLMs' predictive intelligence. Our thorough experiments demonstrates the promise of LLM-as-a-Prophet reflected in the small forecasting loss, calibration error, strong reasoning synthesis and alignment of frontier models. However, we also identify key bottlenecks and highlight avenues for further progresses, such as better curation of context sources, more accurate internalization of knowledge and improving forecasts near events' resolution.

\bibliography{preprint}
\bibliographystyle{iclr2026_conference}

\newpage
\appendix
\section*{\centering \Large $\clubsuit$ Appendix: Table of Contents}
\vspace{2mm}
\begin{enumerate}[leftmargin=*, itemsep=3mm, label=\textcolor{MidnightBlue}{\textbf{\Alph*.}}]
    \item \textbf{\hyperref[app:pipeline]{Prophet Arena Pipeline Details}} \dotfill Page \pageref{app:pipeline}
    \item \textbf{\hyperref[app:technical-details]{Technical Details}} \dotfill Page \pageref{app:technical-details}
    \item \textbf{\hyperref[app:additional-experiments]{Additional Experiment Results}} \dotfill Page \pageref{app:additional-experiments}
    \item \textbf{\hyperref[app:case_studies]{Case Studies}} \dotfill Page \pageref{app:case_studies}
    \item \textbf{\hyperref[app:prompts]{Prompts}} \dotfill Page \pageref{app:prompts}
\end{enumerate}

\newpage 
\section{Definitions and Prophet Arena Pipeline Details}
\label{app:pipeline}
\subsection{Event Extraction}
\label{app:event_extraction}
\prophet\, sources unresolved events from Kalshi, a live prediction platform with events spanning across finance, sports, politics, sports, and entertainment. 
To ensure informativeness and comparability, we filter by \emph{Popularity} (volume/liquidity/volatility), \emph{Diversity} (domain balance), and \emph{Recurrence} (repeated formats). \prophet\, periodically retrieves 20 unresolved events each day at 12 AM (UTC).  \par 
\paragraph{Event.} Defined formally, let  $\{E_i\}_{i\in[K]}$ denote the set of evaluated \emph{events}. An \textit{event} is the overarching question or subject concerning a future real-world occurrence. It serves as a high-level container for one or more tradable \textit{markets}. In many prediction markets, an event itself is \textbf{not} a tradable asset; rather, it sets the context, scope, and resolution criteria for the markets that fall under it. \par 

\begin{itemize}[leftmargin=2em]
    \item \textbf{Example 1:} ``Who will win the 2025-26 NBA Championship?"
    \item \textbf{Example 2:} ``Which individuals will President Trump officially meet in 2025?"
\end{itemize}

\paragraph{Market.} Each event $E_i$ contains \emph{markets} $\{M_{ij}\}_{j\in[N_i]}$. A \textit{market} is a specific, tradable proposition under an event that resolves to a definitive \texttt{Yes} or \texttt{No} outcome. Each market represents a potential, verifiable answer to the event's overarching question. For a given market, a \textit{\texttt{Yes} contract} is a 0-1 random variable that achieves value 1 if the \texttt{Yes} outcome is realized, and 0 otherwise. A \textit{``NO'' contract} is defined similarly, and always pays out in the opposite direction as the ``YES'' contract. \par 

\begin{itemize}[leftmargin=2em]
    \item \textbf{Under Event 1, a market could be:} 
    
    ``The Boston Celtics will win the 2025 NBA Championship."
    \item \textbf{Under Event 2, a market could be:} 
    
    ``President Trump will officially meet with Emmanuel Macron in 2025."
\end{itemize}

\paragraph{Event Resolution.} An event $E_i$ resolves at time $\tau_{i}$ with realized outcome indicator $o_{ij}\in\{0,1\}$, where $o_{ij}=1$ means the corresponding market $M_{ij}$ of the event is realized (\texttt{Yes}). When the event index $i$ is clear from context, we often drop the subscript $i$ and write $M_j, o_j$.

\subsection{Prediction Context}
\label{app:prediction_context}
\paragraph{Prediction scheduling.} For each event $E_i$, the benchmark specifies a finite set of pre-resolution forecast times (horizons) $\mathcal{T}_{i}\subset(-\infty,\tau_{i})$. 
We construct $\mathcal{T}_i$ by placing each subsequent forecast halfway between the current forecast time and the event close time $\tau_i$.
Let the first forecast time be $t_i^{(0)}<\tau_i$ and define the initial gap $\Delta_i^{(0)} := \tau_i - t_i^{(0)} > 0$.
For $k\ge 0$,
\[
t_i^{(k+1)} \;=\; \frac{t_i^{(k)} + \tau_i}{2}
\quad\Longleftrightarrow\quad
t_i^{(k)} \;=\; \tau_i - 2^{-k}\,\Delta_i^{(0)},
\;\;\;
\Delta_i^{(k+1)} \;=\; \frac{\Delta_i^{(k)}}{2}.
\]
To avoid excessive clustering near $\tau_i$, we enforce a minimum time gap $\delta_{\min}>0$ between the last forecast and the close time.

\paragraph{Context construction.} For each event $E_i$ and forecasting time $t \in \mathcal{T}_{i}$ we construct a curated context $C_{i,t}$ shared across models; this isolates forecasting ability in \(p_{ij,t}\) from retrieval variability. The context $C_{i,t}$ consists of two components: relevant news sources and market data. The relevant news sources are retrieved by LLM searchers. The prompt for search is shown below. The market snapshot is fetched from Kalshi API, and contains three fields for each market: \texttt{last\_price} (price of the last transaction), \texttt{yes\_ask} (asking price for buying \texttt{Yes}), \texttt{no\_ask} (asking price for buying \texttt{No}). From that snapshot we extract the \emph{implied probability} $q_{ij,t}\in[0,1]$ for a \texttt{Yes} contract at time $t$ (with \texttt{No} priced at $1-q_{ij,t}$).\footnote{Implied probabilities are reverse-engineered from contract prices; transaction fees may cause \texttt{YES/NO} prices not to sum to 1.} \par 

At each forecasting time $t_i \in \mathcal{T}_i$, LLM searchers will be dispatched and live market snapshot will be retrieved. Together, relevant news sources from LLM searchers and market snapshots from Kalshi will serve as the prediction context.\par 

Importantly, \prophet\, is \emph{searcher-agnostic}: the search component is an independent, pluggable module and adding new searchers does not change the forecasting protocol or the scoring of $p_{ij,t}$. \prophet\ actively updates to include new LLM searchers. In this paper, we instantiate a single LLM searcher using \modelshort{gpt-4o} with web search enabled and all experiments use that configuration (\cref{app:search_prompt}).

% \subsection{LLM-as-a-Prophet}
\subsection{Probabilistic Forecasting to Account for Uncertainty}
\label{app:prediction}
\paragraph{LLM-predicted Probabilities.} Given $(E_i,M_{ij},C_{i,t},t)$, a model must output an \emph{LLM probability},
\[
p_{ij,t}\in[0,1],
\]
interpreted as its belief that $M_{ij}$ will realize \texttt{Yes} at time $t$, accompanied by a natural-language rationale (logged for analysis but not used in scoring). The prediction prompts are documented in \cref{app:prediction_prompt}.

\paragraph{Implied Probabilities.} At each $t\in\mathcal{T}_{i}$, a \texttt{Yes} (resp. \texttt{No}) contract is valued at $q_{ij,t}$ (resp.\ $1-q_{ij,t}$). The \textit{implied probability} $q_{ij}$ represents the (human) market-consensus belief that the \texttt{Yes} outcome will come true. \par 

\subsection{Resolution and Evaluation}
\label{app:resolution}

\paragraph{Edge and Utility.} We denote a (yes) \textit{edge} $e_{ij} := \frac{p_{ij}}{q_{ij}}$ as the likelihood-ratio between the LLM-predicted and implied probabilities. A larger edge signals the LLM to be more confident (than the market) that a market will be realized. Similarly, we define the (no) edge to be $\tilde{e}_{ij} := \frac{1-p_{ij}}{1-q_{ij}}$. \par We assume that the \textit{price} of a single \texttt{Yes} contract simply equals the implied probability $q_{ij}$, and the price of a single \texttt{No} contract is thus $1 - q_{ij}$.\footnote{In practice, (1) we actually ``reverse-engineer'' the implied probabilities from market contract prices, and (2) the prices of \texttt{Yes/No} contracts might not sum to 1 due to transaction fees taken by the exchange.} In the sequel, our strategy is limited to buying contracts (taking a long position). But all contracts can be purchased in fractional amounts. \par Our simulated trading policies (used only for \emph{relative} metrics) considers the utility function with risk-aversion hyperparameter $\gamma \in [0, 1]$. It maps any wealth (i.e. payoff) to a utility.

\paragraph{Prediction Evaluation.} After the close time $\tau_i$, we will retrieve the outcomes of event $E_i$ from Kalshi. 
For each market $M_{ij}$ and each horizon $t\in\mathcal{T}_{i}$, we score $(p_{ij,t},o_{ij})$ with \emph{absolute} proper scoring rules (e.g., Brier-based) and, separately, evaluate \emph{relative} Average Return by simulating trades against $q_{ij,t}$ under $U_\gamma$. Scores are then aggregated across $j$, $i$, with the specific calculations detailed in the subsequent evaluation section. 

The below table summarizes the notations that will appear in the later math expressions:

\bgroup
\def\arraystretch{1.5}
\begin{tabular}{p{1in}p{3.75in}}
$\displaystyle E_i,$ & $i \in [K]$, the $i$-th event in our evaluation set.\\
$\displaystyle M_{ij} \:(M_j)$ & $j \in [N_i]$, the $j$-th market of the $i$-th event, we drop the $i$ subscript when the market is obvious (same for below). \\
$\displaystyle p_{ij} \:(p_j)$ & the LLM-predicted probability that $M_{ij}$ will realize.\\
$\displaystyle q_{ij} \:(q_j)$ & the market implied probability that $M_{ij}$ will realize.\\
$\displaystyle e_{ij}/ \tilde{e}_{ij} \:(e_j/\tilde{e}_j)$ & the (yes/no) edge of $M_{ij}$.\\
$\displaystyle o_{ij} \:(o_j)$ & the indicator of whether $M_{ij}$ is realized.\\
$\displaystyle U_\gamma(w)$ & the utility function with risk-aversion hyperparameter $\gamma \in [0, 1]$. It maps any wealth (i.e. payoff) to a utility.

\end{tabular}
\egroup

\subsection{Comprehensive List of Evaluated LLMs}
\label{app:full-llm-list}

\begin{table}[!th]
    % \small
    \centering
    \setlength{\tabcolsep}{6pt}
    \renewcommand{\arraystretch}{1.15}
    \begin{tabular}{l l c c}
    \hline
    \textbf{LLM} & \textbf{Citation} & \textbf{Open Weight?} & \textbf{Default Reasoning} \\
    \hline
    GPT-5 & \citet{gpt5} & No & High \\
    o3 & \citet{o3} & No & High \\
    o3-Mini & \citet{o3} & No & Medium \\
    o4-Mini & \citet{o3} & No & High \\
    Gemini 2.5 Pro & \citet{comanici2025gemini} & No & Enabled \\
    Gemini 2.5 Flash & \citet{comanici2025gemini} & No & Enabled \\
    Grok-4 & \citet{xai-grok4} & No & Enabled \\
    Grok-3-Mini & \citet{xai-grok3} & No & Enabled \\
    GPT-4.1 & \citet{gpt41} & No & N/A \\
    Claude Sonnet 4 & \citet{claude4} & No & Enabled \\
    Kimi-K2 & \citet{team2025kimi} & \href{https://huggingface.co/moonshotai/Kimi-K2-Instruct}{Yes} & N/A \\
    GPT-4o & \citet{gpt4o} & No & N/A \\
    Llama 4 Maverick & \citet{meta2025llama4} & \href{https://huggingface.co/meta-llama/Llama-4-Scout-17B-16E-Instruct}{Yes} & N/A \\
    Llama 4 Scout & \citet{meta2025llama4} & \href{https://huggingface.co/meta-llama/Llama-4-Maverick-17B-128E-Instruct}{Yes} & N/A\\
    DeepSeek-V3 & \citet{liu2024deepseek} & \href{https://huggingface.co/deepseek-ai/DeepSeek-V3}{Yes} & N/A \\
    DeepSeek-R1 & \citet{guo2025deepseek} & \href{https://huggingface.co/deepseek-ai/DeepSeek-R1}{Yes} & Enabled \\
    Qwen3-235B & \citet{yang2025qwen3} & \href{https://huggingface.co/Qwen/Qwen3-235B-A22B-Thinking-2507}{Yes} & N/A \\
    Gemini 2.0 Flash & \citet{gemini20} & No & N/A\\
    Gemini 2.0 Flash (Lite) & \citet{gemini20} & No & N/A\\
    \hline
    \end{tabular}
    \vspace{3mm}
    \caption{\textbf{Comprehensive list of LLMs evaluated in} \prophet~(as of submission time).}
\end{table}

\textbf{Remark.} The column \textbf{Default Reasoning} specifies the reasoning configuration used when a model is referenced without qualifiers (e.g. \modelshort{gpt-5}, or when we write \modelshort{gpt-5}$^\texttt{R}$ in~\cref{tab:main-evaluation-result}). In the full evaluation table (\cref{tab:full-eval-results}), some models appear multiple times under different reasoning settings; each such variant is treated as a distinct model (e.g., GPT-5 (Minimal)).

Reasoning configuration is inherently model-dependent. For ``hybrid reasoning'' models that allow toggling between thinking and non-thinking modes, the configuration is specified as either \emph{enabled} or \emph{disabled}. For models that expose explicit control over reasoning effort, the configuration is expressed in levels (e.g., \emph{minimal}, \emph{medium}, \emph{high}).
\newpage
\section{Technical Details}
\label{app:technical-details}
\subsection{Further Intuitions behind Brier Score} \label{app:brier-details}

In \prophet, the set of markets $\{M_{ij}\}_{j=1}^{m_i}$ under an event $E_i$ need not be mutually exclusive. This has two implications: (1) the realized outcome vector $\mathbf{o}_i = (o_{i1}, \ldots, o_{im_i})'$ may contain multiple ones rather than being strictly one-hot, and (2) the predicted probabilities $\{p_{ij}\}_j$ may sum to more than one. Our Brier score formulation remains robust to these cases because the score is always evaluated at the \emph{market level}, where each individual market is binary -- resolving either to \texttt{Yes} or \texttt{No}, but not both. If we pool all markets across all events and relabel them $M_1, \ldots, M_k$, the standard binary Brier score is simply
\begin{equation}
\label{eq:brier-og}
BS = \frac{1}{k} \sum_{j=1}^k (p_j - o_j)^2.
\end{equation}

Our event-level definition in \cref{subsec:accuracy} can be viewed as a \emph{weighted} version of \cref{eq:brier-og}. Since events vary greatly in the number of associated markets, directly pooling them would let large events dominate the metric. To mitigate this, we assign each market a weight $w_{ij} = 1/m_i$, inversely proportional to the number of markets in its event. This yields the final form:
\begin{equation*}
BS = \frac{1}{\sum_i \sum_j w_{ij}} \sum_{i=1}^N \sum_{j=1}^{m_i} w_{ij} (p_{ij} - o_{ij})^2
= \frac{1}{N} \sum_{i=1}^N \left( \frac{1}{m_i} \sum_{j=1}^{m_i} (p_{ij} - o_{ij})^2 \right).
\end{equation*}

This weighting ensures that each event contributes equally, regardless of its size, while still respecting the binary resolution of individual markets.

% \subsection{A Better-calibrated forecast  can have a worse Brier score.} 
% \label{app:ece-differ-brier}
% As mentioned in~\cref{subsec:calibration}, calibration errors and Brier scores measure different aspects of forecasts. This difference can be illustrated with a simple example. Suppose there are two markets $M_1, M_2$ with ground-truth probability $p^*_1 = 0.9, p^*_2 = 0.1$. Alice's prediction is $p^A_1 = 1, p^A_2 = 0$ whereas Bob's prediction is $p^B= 0.5$ for both markets. It is not difficult to see that Alice has a better/smaller Brier score, but she turns out to have a worse/larger ECE. Specifically, since Alice has two different probability predictions, her ECE is $\frac{1}{2}(|p^*_1 - p^A_1| + |p^*_2 - p^A_2|) = 0.1$ whereas Bob has a single probability prediction with the ECE equaling $\frac{1}{2}\big|p^*_1  + p^*_2 - 2 p^B| \big|  = 0$. That is, Bob's prediction of $0.5$ probability  is perfectly calibrated and trustworthy as, conditioned on this probability, events' average probability is indeed $0.5$.

\subsection{Empirical Estimation of Calibration Error}
\label{app:ece-details}

In~\cref{subsec:calibration}, we introduced the formal definition of expected calibration error (ECE) in terms of true conditional probabilities. In practice, this definition cannot be computed exactly, as the conditional terms $\mathbb{P}(o_j = 1 \mid p_j = \tilde{p}_i)$ are not directly observable. Instead, the standard approach in the applied literature is to approximate ECE via a binned empirical estimate.

Formally, let $\{(p_j, o_j)\}_{j=1}^m$ denote a set of predicted probabilities and realized outcomes. We first partition the unit interval $[0,1]$ into $B$ disjoint bins $\{I_b\}_{b=1}^B$, and assign each prediction $p_j$ to its corresponding bin. Let $M_b = \{j : p_j \in I_b\}$ be the index set of predictions falling into bin $b$, and $m_b = |M_b|$ its size. Define

\begin{equation*}
\hat{p}_b = \frac{1}{m_b}\sum_{j \in M_b} p_j, 
\quad 
\hat{o}_b = \frac{1}{m_b}\sum_{j \in M_b} o_j ,    
\end{equation*}

as the average predicted probability and empirical frequency of \texttt{Yes} outcomes in bin $b$, respectively. The \textbf{empirical expected calibration error} is then given by
\begin{equation}
\widehat{ECE} = \frac{1}{m} \sum_{b=1}^B m_b \cdot\big| \hat{o}_b - \hat{p}_b \big| .
\end{equation}

Intuitively, $\widehat{ECE}$ measures the weighted average discrepancy between empirical accuracy and average predicted probability across bins, with weights proportional to bin counts. Throughout our experiments, all reported calibration results correspond to this empirical version (with $B = 10$).

\subsection{Brier Score vs ECE.}
\label{app:ece-differ-brier}
As discussed in~\cref{subsec:metrics}, calibration errors and Brier scores measure different aspects of forecasts. We start with a simple example to illustrate that a better-calibrated forecast  can have a worse  Brier score.  Suppose there are two markets $M_1, M_2$ with ground-truth probability $p^*_1 = 0.9, p^*_2 = 0.1$. Alice's prediction is $p^A_1 = 1, p^A_2 = 0$ whereas Bob's prediction is $p^B= 0.5$ for both markets. It is not difficult to see that Alice has a better/smaller Brier score, but she turns out to have a worse/larger ECE. Specifically, since Alice has two different probability predictions, her ECE is $\frac{1}{2}(|p^*_1 - p^A_1| + |p^*_2 - p^A_2|) = 0.1$ whereas Bob has a single probability prediction with the ECE equaling $\frac{1}{2}\big|p^*_1  + p^*_2 - 2 p^B| \big|  = 0$. That is, Bob's prediction of $0.5$ probability  is perfectly calibrated and reliable as, conditioned on this probability, events' average probability is indeed $0.5$.

\textbf{A forecaster  with smaller ECE but  larger Brier score induces better risk-adjusted decision making.} Consider an event $E$ with binary outcome $o\in \{0, 1\}$ and a ground-truth probability $p^* = 0.5$ to have $o=1$. Suppose forecaster $A$ predicts probability $p_A:= \Pr(o = 1)$ which correlates with the realized outcome $o$ in the following way
\[
(p_A, o)) = \begin{cases}
    (1, 1) , & \text{with joint probability } 0.5(1-\epsilon)   \\
    (0, 0) ,  & \text{with joint probability }   0.5(1-\epsilon) \\
    (1, 0) , & \text{with joint probability } 0.5 \epsilon    \\
    (0, 1) , & \text{with joint probability }   0.5 \epsilon  \\  
\end{cases} 
\] 
where $\epsilon$ is a small value. In other words, forecaster $A$ predicts $p_A = 1$ or $p_A = 0$, each with probability $0.5$ and with $\epsilon$ fraction of errors. These are   extremal and uncalibrated forecasts but is almost always correct. It is easy to verify that the Brier score of $p_A$ is $0.5 \epsilon  \times (1-0)^2 + 0.5 \epsilon  \times (1-0)^2 = \epsilon$, whereas expected calibration error (ECE) is $0.5 \times   [1 - (1-\epsilon)]^2 + 0.5 \times   [1 - (1-\epsilon)]^2  = 0.5 \epsilon^2$. 

Next consider forecaster $B$ predicts probability $p_B:= \Pr(o = 1)$ which correlates with the realized outcome $o$ in the following way
\[
(p_B, o)) = \begin{cases}
    (2/3, 1) , & \text{with joint probability } 0.5    \\
    (2/3, 0) ,  & \text{with joint probability }   0.25 \\
    (0, 0) , & \text{with joint probability }   0.25  \\  
\end{cases} 
\] 
That is, forecaster $B$ predicts $p_B = 2/3$ for $75\%$ of the time, and predicts $p_B = 0$ otherwise. It is easy to see that $p_B$ is perfectly calibrated with $ECE = 0$, but has  Brier score equal $0.75 \times [\frac{2}{3} (2/3 - 1)^2 + \frac{1}{3} (2/3 - 0)^2] = 1/6$. 

Clearly, $p_B$ is more well-calibrated than $p_A$ but has much worse Brier score, especially  when $\epsilon$ is small. However, we show that $p_B$ is better for risk-averse decision making,  than $p_A$. Suppose a decision maker has $1$ unit of fund and would like to buy contracts for   event $E$'s outcome. Suppose the current market price for  the contract $o=1$ and $o=0$ are both $0.5$ per contract.   Adopting a classic risk-averse utility function, we assume her utility $U(x) = \log (x)$ (i.e., $x$ amount of return means $U(x)$ utility to her). 

If the decision maker adopts forecaster $A$, since $A$ makes extremal forecasts, even under risk-averse utility $\log(x)$ her strategy is to spend all the  $1$ unit of fund to buy two units of contract   $o=1$ whenever $p_A= 1$, and buy $2$ units of $o=0$ otherwise. With $\epsilon$ probability of losing all her fund, her expected \emph{risk-adjusted} utility is thus $ (1-\epsilon) U(2) + \epsilon U(0)$, which goes to $-\infty$ when $U(x) = 0$, regardless how small $\epsilon$ is. 

If the decision maker adopts the perfectly calibrated forecaster $B$, she would spend all fund to buy $o=0$ contract when $p_B = 0$, but split her $1$ unit of fund to buy $2B_1$ of $o=1$ contracts and $2B_0$ of $o=0$ contracts, so as to maximize her risk-adjusted utility as $ \max_{B_0 + B_1 = 1} 2/3 U(2B_1) + 1/3 U(2B_0)$. Simple calculation shows that optimal $B_1 = 2/3$, leading to optimal risk-adjust utility $2/3 U(4/3) + 1/3 U(2/3) = \log(\frac{32}{27})/3$, which is strictly larger than her initial utility $\log 1 = 0$.    

\textbf{Remarks. } A few remarks are worthwhile to mention about this example. First, forecaster $p_A$ with a small $\epsilon$ will be better for a   risk-neural decision maker. However, in this case, while the decision maker will have larger expected utility, she will have to bear the possibility of losing all her $1$ unit of fund, though also enjoy the possibility of doubling her fund to $2$ units. This is often not desirable in real-world applications, such as financial decisions in which risk control is extremely important. In such cases, more well-calibrated forecaters are preferred. Second,  our example has extremely negative utility   $-\infty = U(0)$. This helps us to   cleanly highlight the underlying conceptual message, but simple tweak to the utility function can remove that extreme situation yet arrive at the same conclusion.

\subsection{Differences between Brier Scores and Market Returns}
\label{app:brier-vs-return} 
We provide a simple example to prove the following fact. 
 \begin{fact}\label{fact:brier-vs-return} \it   There exist a binary prediction market with two forecaster $A,B$ such that $A$ has strictly higher market return than $B$ but has strictly worse/higher Brier score.    
\end{fact}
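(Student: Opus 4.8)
The plan is to exhibit an explicit counterexample built from two binary markets, exploiting the structural mismatch between how the two metrics depend on a forecast. The key observation I would start from is that the Average Return of the betting strategy in \cref{subsec:market-return} depends on a forecast $p$ \emph{only through which side of the market price it falls on}: with normalized prices $q^Y = q$ and $q^N = 1-q$, the buy-\texttt{Yes} condition $\frac{p}{q^Y} \ge \frac{1-p}{q^N}$ simplifies to $p \ge q$, so the strategy is a threshold (bang-bang) rule and the realized return takes only the values $1/q$, $1/(1-q)$, or $0$, independent of the magnitude of $p$ beyond the threshold. The Brier score $(p-o)^2$, by contrast, is sensitive to that magnitude. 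Decoupling these two dependencies is exactly what makes the Fact possible.

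Next I would argue why a single market cannot witness the Fact, which pins down the structure of the example. On one market the two forecasters either land on the same side of $q$ --- giving identical returns, hence no strict return gap --- or on opposite sides, in which case the forecaster betting toward the realized outcome simultaneously collects the positive return and has the smaller squared error. Thus on a single market return and Brier always agree, and the divergence must be manufactured through aggregation over at least two markets.

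The construction then splits the work across two markets. Market $1$ creates the return gap: I choose $q_1$ and outcome $o_1$ so that $A$ bets toward the realized outcome (earning a positive return) while $B$ bets against it (earning zero). Market $2$ creates a \emph{reversed} Brier gap while keeping returns tied: I choose $q_2$ and $o_2$ so that both forecasters land on the same, losing side of $q_2$ --- tying their market-$2$ returns at zero --- but push $A$'s forecast to an extreme far from $o_2$ while keeping $B$'s forecast near the threshold. A concrete instantiation is $q_1 = q_2 = \tfrac12$, outcomes $(o_1,o_2) = (1,0)$, with $(p^A_1,p^A_2) = (0.6, 0.99)$ and $(p^B_1,p^B_2) = (0.4, 0.51)$: then $A$'s average return is $1$ versus $B$'s $0$, yet $A$'s Brier score $\approx 0.57$ strictly exceeds $B$'s $\approx 0.31$.

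The main obstacle is the quantitative balancing across the two markets. Market $1$ unavoidably hands $A$ both a return advantage \emph{and} a Brier advantage, so Market $2$ must be tuned so that the squared-error penalty it inflicts on $A$'s extreme forecast strictly outweighs $A$'s accumulated Brier lead from Market $1$, all while leaving the Market-$2$ returns exactly equal. Verifying that a single choice of parameters satisfies this trio of strict inequalities is a short direct computation, which I would present as the final check to complete the proof.
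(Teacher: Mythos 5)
Your proof is correct, but it takes a genuinely different route from the paper's. The paper keeps a \emph{single} binary market and works in \emph{expectation} over a hypothesized ground-truth probability: with ground truth $0.6$ and market price $0.5$, the forecaster predicting $0.45$ is closer to the truth (better expected Brier) yet sits on the wrong side of the mispricing and earns negative expected return, while the forecaster predicting $0.9$ is farther from the truth but on the profitable side and earns positive expected return. You instead work with \emph{realized} outcomes, where you correctly observe that one market can never separate the metrics --- the return is a bang-bang function of which side of $q$ the forecast lands on, and when the forecasters straddle $q$ the one betting toward the realized outcome necessarily has the smaller squared error --- so you aggregate over two markets, one to open the return gap and one to reverse the Brier gap while tying returns at zero; your numbers check out ($A$'s average return $1$ vs.\ $B$'s $0$, $A$'s Brier $\approx 0.57$ vs.\ $B$'s $\approx 0.31$). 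Each approach buys something: the paper's is more economical and cleanly isolates the conceptual point (Brier measures distance to truth, return measures which side of the price you are on), but it relies on an unobservable ground-truth probability and expectations over it; yours matches how the benchmark actually scores (realized outcomes averaged over markets), and your single-market impossibility observation explains exactly why the paper had to resort to expectations. The only caveat is that the Fact literally says ``a binary prediction market'' (singular), which your two-market construction does not satisfy verbatim, though it does witness the intended claim about the Average Return and Brier metrics as the paper defines them.
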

\begin{proof}

 Suppose we bet on a single event with binary outcomes, with ground-truth probability of \texttt{Yes} being $0.6$ (for the event to be realized), and prediction market price $0.5$. Consider two different probabilistic predictions of this event’s \texttt{Yes} realization: model A predicts $0.45$ and model B predicts $0.9$.

The expected Brier Score of A is:
\begin{align*}
1 - \big[ 0.6 \cdot (0.45 - 1)^2 + 0.4 \cdot (0.45 - 0)^2 \big] 
&= 1 - 0.2625 = 0.7375
\end{align*}

The expected Brier Score of B is:
\begin{align*}
1 - \big[ 0.6 \cdot (0.9 - 1)^2 + 0.4 \cdot (0.9 - 0)^2 \big] 
&= 1 - 0.33 = 0.67
\end{align*}

So A has a higher Brier Score. However, because A predicts $0.45$, lower than the $0.5$ prediction market price, A will short \texttt{Yes} (or equivalently, buy \texttt{No}) at $0.5$. Meanwhile, B predicts $0.9$, much higher than the prediction market price, so B will buy \texttt{Yes}. Respectively, A and B’s expected returns will be:
\begin{align*}
0.6 \cdot (-1 + 0.5) + 0.4 \cdot (0.5) &= -0.1 \\
0.6 \cdot (1 - 0.5) + 0.4 \cdot (-0.5) &= 0.1
\end{align*}
Therefore, A has a higher Brier score, but lower returns.
    
\end{proof}
 This example uncovers a key difference between the two metrics. The Brier Score measures how close a prediction is to the ground truth and, importantly, has nothing to do with market prices. Since A’s prediction above is closer to the ground truth, it receives a higher Brier Score. However, returns on the market are not only driven by the true probability but also by the market price. Therefore, even though B’s prediction is exaggerated, it lies on the correct side of the market mispricing (buying ``Yes'' when the outcome is more likely than price suggests), thereby achieving higher returns.

\subsection{A Unified Framework for Utility-Maximizing Betting Strategy}
\label{app:unified-betting}

In \cref{subsec:market-return}, we introduced \textbf{Average Return} under a simple strategy: bet the full budget on \texttt{Yes} whenever $p > q$, and on \texttt{No} otherwise, where $p$ is the model’s predicted probability and $q$ is the market price of a \texttt{Yes} contract. Here we generalize this idea by developing a unified framework for betting strategies in binary markets. This framework serves two purposes:
\begin{enumerate}[leftmargin=2em]
    \item It formalizes betting as a utility-maximization problem, allowing us to flexibly encode different risk preferences.
    \item It shows that the simple strategy used in the main text is a special case of this unified framework, corresponding to the risk-neutral setting.
\end{enumerate}

\textbf{Contracts and market prices.} Each binary market resolves to either \texttt{Yes} or \texttt{No}. A share of a \texttt{Yes} contract pays \$1 if the outcome is \texttt{Yes} and \$0 otherwise; the price of this contract is denoted $q$. Symmetrically, a share of a \texttt{No} contract costs $1-q$ and pays \$1 if the outcome is \texttt{No}. These prices are often interpreted as market-implied probabilities~\citep{wolfers2006interpreting}, though our framework does not rely on this interpretation.

\textbf{Utility function.} Let $U: \mathbb{R}_{\ge 0} \to \mathbb{R}$ denote a utility function mapping a payoff $w$ to its perceived value (satisfaction) $U(w)$. We focus on the class of constant relative risk aversion (CRRA) utilities:
\begin{equation}
\label{eq:crra}
    U_\gamma(w)=
\begin{cases}
\dfrac{w^{1-\gamma}}{\,1-\gamma\,}, & 0\le\gamma<1,\\[6pt]
\log w, & \gamma=1
\end{cases}
\end{equation}
where $\gamma \in [0, 1]$ indexes risk aversion. At $\gamma=0$, $U_\gamma$ is linear (risk-neutral); at $\gamma=1$, it reduces to $\log w$ (log utility, risk-averse); and for $\gamma \in (0,1)$, it interpolates smoothly between the two. \cref{fig:crra-utility} visualizes representative cases.

\begin{figure}
    \centering
    \includegraphics[width=0.6\linewidth]{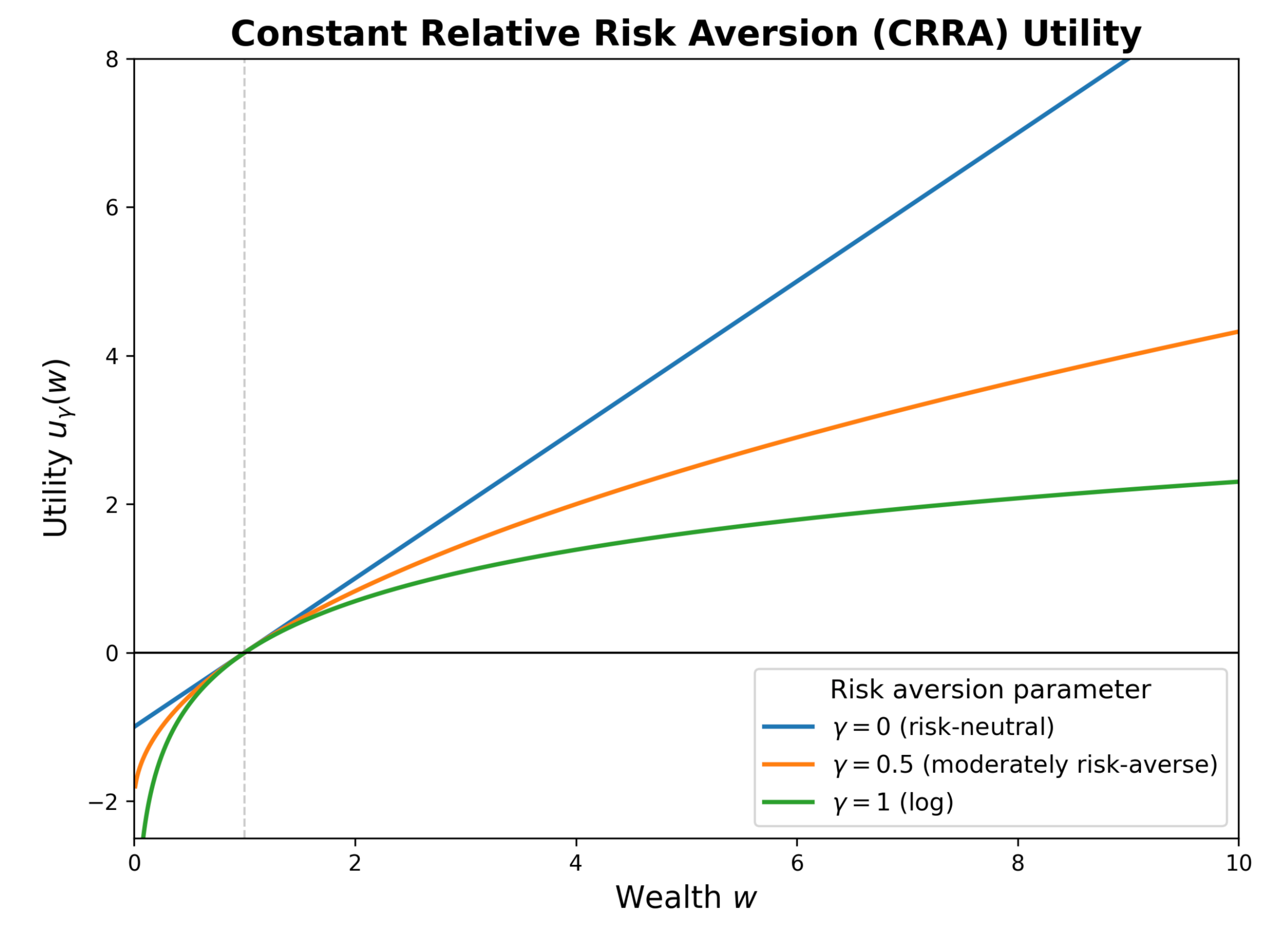}
    \caption{\textbf{Visualization of CRRA utility functions.}}
    \label{fig:crra-utility}
\end{figure}

\textbf{Budget allocation as optimization.} Fix a unit budget of \$1 for each market. Let $a_Y$ and $a_N$ denote the amounts allocated to \texttt{Yes} and \texttt{No} contracts, with $a_Y + a_N = 1$. Purchasing $a_Y$ dollars of \texttt{Yes} contracts yields $a_Y/q$ shares (i.e., the pay off is $a_Y/q$ if the outcome is \texttt{Yes}); similarly, $a_N/(1-q)$ shares of \texttt{No} pay off if the outcome is \texttt{No}. Given belief (predicted probability) $p$ and risk preference $\gamma$, the optimal allocation is the solution to
\begin{equation}
\label{eq:budget-optim}
    \max_{a_{Y}, a_N \geq 0 \: :\:  a_Y + a_N = 1} \left[ p \cdot U_\gamma\left(\frac{a_Y}{q}\right) + (1-p)\cdot U_\gamma\left(\frac{a_N}{1-q}\right) \right].
\end{equation}
This objective is the expected utility of betting under $p$.

\textbf{Closed-form solutions.} For $\gamma > 0$, the optimization admits a closed-form solution:
\begin{align} \label{eq:budget-solution} \quad a_Y^* \equiv a_Y^*(\gamma) = \frac{q^{\,1-\frac1\gamma}\;p^{\,\frac1\gamma}}{q^{\,1-\frac1\gamma}\;p^{\,\frac1\gamma} + (1-q)^{\,1-\frac1\gamma}\;(1-p)^{\,\frac1\gamma}} ,\quad \quad a_N^* \equiv a_N^*(\gamma) = 1 - a_Y^*, 
\end{align}
At $\gamma = 0$, we take the limit $\gamma \to 0^+$, yielding
\begin{equation*}
a_Y^*(0) \equiv \lim_{\gamma \to 0} a_Y^*(\gamma) = 
\begin{cases}
1, & \text{if} \:\: p > q,\\
0, & \text{if} \:\: p \leq q
\end{cases}.
\end{equation*}
which recovers the strategy used in the main text (\cref{subsec:market-return}). At the other extreme, when $\gamma = 1$, the optimal strategy is to allocate $a_Y^* = p$ and $a_N^* = 1-p$, i.e., to bet proportionally to one’s own probabilities, independent of the market price. Overall, this unified framework highlights how different betting strategies arise from different risk preferences. The ``all-in'' rule in the main text is simply the risk-neutral optimum.

\subsection{Market Returns for a Calibrated Predictor}
\label{app:calibration-and-return}

In~\cref{sec:evals}, we introduced the evaluation pipeline in \prophet, including how market return is calculated based on market decisions (buying either the YES or NO contract), which are derived from predicted and market-implied probabilities ($p_{j}$, $q_{j}$) as well as the realized outcome $o_{j}$ under the following decision rule: \textit{buy YES contract whenever $p_{j} > q_{j}$, and buy NO otherwise.}

Here we further make the connection between a (perfectly) calibrated predictor and \textbf{market returns conditioning on the type of decision made}. Before stating the main result, the following mild assumptions and definitions are needed.

\begin{assumption}[Data Generation]\label{ass:data-generation}
    We assume that the triples $(o_j, p_j, q_j)$ are drawn i.i.d. from some joint probability distribution $\mathcal{D}$. The expectation $\mathbb{E}[\cdot]$ below is taken over $\mathcal{D}$.
\end{assumption}

\begin{assumption}[Sufficiency of Calibrated Predictors] 
\label{ass:sufficiency}
Perfect calibration implies that $\forall c \in [0, 1]$,
\begin{equation*}
    \mathbb{E}[o_j \mid p_j = c] = 1 \cdot \mathbb{P}[o_j = 1\mid p_j = c] + 0 \cdot \mathbb{P}[o_j = 0\mid p_j = c] = c.
\end{equation*}
We futher assume that a calibrated predictor is \textbf{sufficient} for the market outcome: the outcome $o_j$ is conditionally independent of the market price $q_j$ given the forecast $p_j$, i.e. $\mathbb{E}[o_j \mid p_j, q_j] \equiv \mathbb{E}[o_j \mid p_j]$
\end{assumption}

\begin{definition}[Symmetric Disagreement]
    We say a predictor is \textbf{symmetric} (against the market) if the probability distribution of $d_j := p_j - q_j$ is \textbf{symmetric about zero}. We let $f(x)$ denote the p.d.f. of $d_j$ and the definition implies (1) $f(x) = f(-x)$ for all $x \in \mathbb{R}$, and (2) $\mathbb{P}[p_j > q_j] = \mathbb{P}[p_j \leq q_j]$. In other words, $d_j$ represents the ``disagreement'' that the predictor perceives over the market, and a \textbf{symmetric} predictor is not consistently more conservative or more aggressive.
\end{definition}

\begin{theorem}[Symmetric Expected Returns for Calibrated Predictors]
\label{thm:prophet-indifference}
Let $R_Y = o_j - q_j$ be the return on a YES contract and $R_N = (1-o_j) - (1-q_j) = q_j - o_j$ be the return on a NO contract. For a predictor that is both \textbf{perfectly calibrated} and \textbf{symmetric} against the market, the expected returns conditioned on the betting decision are equal, i.e.
\begin{equation}
    \mathbb{E}[R_Y \mid p_j > q_j] = \mathbb{E}[R_N \mid p_j < q_j].
\end{equation}
\end{theorem}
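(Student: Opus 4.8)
The plan is to reduce each of the two conditional expectations to a single quantity involving the disagreement variable $d_j := p_j - q_j$, and then close the argument by symmetry. First I would expand the left-hand side by linearity of conditional expectation,
\[
\mathbb{E}[R_Y \mid p_j > q_j] = \mathbb{E}[o_j \mid p_j > q_j] - \mathbb{E}[q_j \mid p_j > q_j].
\]
The crucial move is to handle $\mathbb{E}[o_j \mid p_j > q_j]$ using \cref{ass:sufficiency}. Since the event $\{p_j > q_j\}$ is measurable with respect to $\sigma(p_j,q_j)$, the tower property gives
\[
\mathbb{E}[o_j \mid p_j > q_j] = \mathbb{E}\big[\mathbb{E}[o_j \mid p_j, q_j] \mid p_j > q_j\big] = \mathbb{E}[p_j \mid p_j > q_j],
\]
where the inner identity $\mathbb{E}[o_j \mid p_j, q_j] = \mathbb{E}[o_j \mid p_j] = p_j$ is sufficiency followed by perfect calibration. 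Substituting back collapses the left-hand side to $\mathbb{E}[p_j - q_j \mid p_j > q_j] = \mathbb{E}[d_j \mid d_j > 0]$.

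I would then carry out the mirror computation on the right-hand side. Writing $R_N = q_j - o_j$ and applying the same tower-property/calibration step to $\mathbb{E}[o_j \mid p_j < q_j] = \mathbb{E}[p_j \mid p_j < q_j]$ reduces the right-hand side to $\mathbb{E}[q_j - p_j \mid p_j < q_j] = \mathbb{E}[-d_j \mid d_j < 0]$. At this point the whole statement has been rephrased purely in terms of the law of $d_j$, with no further reference to outcomes or contract structure.

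Finally I would invoke symmetry to equate the two reduced quantities. Since the density of $d_j$ satisfies $f(x) = f(-x)$, the substitution $x \mapsto -x$ shows that both the numerators $\int_0^\infty x\,f(x)\,dx$ and $\int_{-\infty}^0 (-x)\,f(x)\,dx$ agree, and likewise the normalizing masses $\mathbb{P}[d_j > 0]$ and $\mathbb{P}[d_j < 0]$ agree; hence $\mathbb{E}[d_j \mid d_j > 0] = \mathbb{E}[-d_j \mid d_j < 0]$, which is exactly the claim. I do not expect a real obstacle here: the calculation is short, and the only point needing care is the justification of the tower-property step, namely that one may condition on the \emph{event} $\{p_j > q_j\}$ rather than on the variables $(p_j,q_j)$ themselves, which is legitimate precisely because that event lies in $\sigma(p_j,q_j)$ so that the sufficiency identity passes through. (The absence of an atom of $d_j$ at zero also makes $\{p_j < q_j\}$ and $\{p_j \le q_j\}$ interchangeable, matching the symmetry definition's $\mathbb{P}[p_j > q_j] = \mathbb{P}[p_j \le q_j]$.)
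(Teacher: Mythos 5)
Your proposal is correct and follows essentially the same route as the paper's proof: the tower property plus sufficiency and calibration to replace $o_j$ by $p_j$, reduction of both sides to conditional expectations of the disagreement $d_j$, and the change-of-variables argument under the symmetric density. Your added remarks on conditioning on the event in $\sigma(p_j,q_j)$ and on the atom at zero are fine points the paper leaves implicit, but the substance is the same.
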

\begin{proof}
Let $E_Y = \mathbb{E}[o_j - q_j \mid p_j > q_j]$ and $E_N = \mathbb{E}[q_j - o_j \mid p_j < q_j]$ denote the average returns on YES and NO contracts, respectively. We begin by analyzing the expected outcome $o_j$ under each condition. By the Law of Iterated Expectations and the sufficiency of the calibrated predictor (Assumption~\ref{ass:sufficiency}), we can replace the outcome $o_j$ with the prediction $p_j$ inside the expectation:
\begin{align*}
    \mathbb{E}[o_j \mid p_j > q_j] &= \mathbb{E}\left[\mathbb{E}[o_j \mid p_j, q_j] \mid p_j > q_j\right] = \mathbb{E}[p_j \mid p_j > q_j], \\
    \mathbb{E}[o_j \mid p_j < q_j] &= \mathbb{E}\left[\mathbb{E}[o_j \mid p_j, q_j] \mid p_j < q_j\right] = \mathbb{E}[p_j \mid p_j < q_j].
\end{align*}
Substituting these back into the expressions for $E_Y$ and $E_N$:
\begin{align*}
    E_Y &= \mathbb{E}[p_j \mid p_j > q_j] - \mathbb{E}[q_j \mid p_j > q_j] = \mathbb{E}[p_j - q_j \mid p_j > q_j], \\
    E_N &= \mathbb{E}[q_j \mid p_j < q_j] - \mathbb{E}[p_j \mid p_j < q_j] = \mathbb{E}[q_j - p_j \mid p_j < q_j].
\end{align*}
The theorem is proven if we can show that $\mathbb{E}[p_j - q_j \mid p_j > q_j] = \mathbb{E}[q_j - p_j \mid p_j < q_j]$. Let $d_j = p_j - q_j$ be the disagreement variable. The equality becomes:
$$
\mathbb{E}[d_j \mid d_j > 0] = \mathbb{E}[-d_j \mid d_j < 0] 
$$
We show this using the integral definition of conditional expectation. The symmetric disagreement assumption implies $f(x) = f(-x)$ and $\mathbb{P}(d_j > 0) = \mathbb{P}(d_j < 0)$. Consider the right-hand side:
$$ 
\mathbb{E}[-d_j | d_j < 0] = \frac{\int_{-\infty}^0 -x f(x) \,dx}{\mathbb{P}(d_j < 0)}. 
$$
Applying the change of variable $u = -x$, we have $x = -u$ and $dx = -du$. Therefore
$$ 
\frac{\int_{\infty}^0 -(-u) f(-u) (-du)}{\mathbb{P}(d_j < 0)} = \frac{-\int_{\infty}^0 u f(-u) du}{\mathbb{P}(d_j < 0)} = \frac{\int_0^{\infty} u f(-u) du}{\mathbb{P}(d_j < 0)}. 
$$
Using the symmetry properties $f(-u)=f(u)$ and $\mathbb{P}(d_j < 0)=\mathbb{P}(d_j > 0)$, this is equal to:
$$ 
\frac{\int_0^{\infty} u f(u) du}{\mathbb{P}(d_j > 0)} = \mathbb{E}[d_j | d_j > 0]. 
$$
Thus, we have shown $\mathbb{E}[-d_j | d_j < 0] = \mathbb{E}[d_j | d_j > 0]$, which concludes the proof.
\end{proof}

\subsection{Bootstrap Confidence Intervals} 
Most metrics in this paper can be summarized by the following two steps:
\begin{enumerate}
    \item We first calculate a collection of scores $s_1, \ldots, s_N$ at either the event level or the market level, where $N$ denotes the total number of events or markets.
    \item We then aggregate these scores into a single statistic via a mapping $T:\mathbb{R}^N \to \mathbb{R}$ (typically the arithmetic mean).
\end{enumerate}
To assess the uncertainty of the resulting point estimate, we construct a nonparametric bootstrap confidence interval (CI) at level $(1-\alpha)\%$~\citep{diciccio1996bootstrap}. Specifically, we form a symmetric interval around the point estimate using bootstrap resampling. A pseudo-code for our implementation is given below in~\cref{algo:symmetric-ci}.

\begin{algorithm}
\caption{Symmetric Nonparametric Bootstrap CI Centered at Point Estimate $\hat{\theta}$}
\label{algo:symmetric-ci}
\begin{algorithmic}[1]
\REQUIRE Data $s_1,\ldots,s_N$; statistic $T(\cdot)$; number of bootstrap replicates $B$; target level $1-\alpha$; 
% optional tolerance $\tau \ge 0$.
\ENSURE Symmetric CI $[\hat{\theta}-h,\ \hat{\theta}+h]$ and achieved bootstrap mass $k/B$.

\STATE Compute point estimate $\hat{\theta} \leftarrow T(s_1,\ldots,s_N)$.
\FOR{$b = 1$ \TO $B$}
  \STATE Draw a bootstrap sample $\{s^{*(b)}_1,\ldots,s^{*(b)}_N\}$ by sampling with replacement from $\{s_i\}_{i=1}^N$.
  \STATE Compute bootstrap re-estimate $\theta^{*(b)} \leftarrow T(s^{*(b)}_1,\ldots,s^{*(b)}_N)$.
\ENDFOR

\STATE Compute deviations $d_b \leftarrow \big|\theta^{*(b)} - \hat{\theta}\big|$ for $b=1,\ldots,B$.
\STATE Sort $\{d_b\}$ into order statistics $d_{(1)} \le \cdots \le d_{(B)}$.

\STATE $h \leftarrow d_{(k)}$.
\RETURN Symmetric CI $[\hat{\theta}-h,\ \hat{\theta}+h]$.
\end{algorithmic}
\end{algorithm}

\subsection{Robustness to Event-Category Rebalancing}
\label{app:category-rebalancing}

To examine whether our results are sensitive to the distribution of event categories, we perform robustness analyses using rebalanced subsets of the evaluation data. The main dataset is heavily skewed toward Sports events; to assess the impact of this imbalance, we construct alternative evaluation sets in which the proportion of Sports events is reduced while still retaining a mix of the other categories. One subset retains Sports as the largest category but reduces its dominance to roughly 50\% (Sports: 0.5, Entertainment: 0.15, Politics: 0.15, Other: 0.2). Another subset ensures that all four categories are equally represented (25\% each). All analyses use only submissions on or before 2025--09--10.

Table~\ref{tab:combined-results} reports the Brier scores across the original and rebalanced evaluation sets. The relative ordering of the top five models remain unchanged across all distributions, confirming that their comparative performance is robust to event-category composition. As the proportion of Sports events decreases, overall Brier scores improve slightly, suggesting that Sports events are intrinsically more challenging to predict. 

\begin{table}[h!]
\centering
\begin{tabular}{lccc}
\toprule
Forecaster & Original (N=816) & Moderate (N=250) & Full (N=152) \\
\midrule
GPT-5\textsuperscript{R} & 0.179 $\ (\pm 0.008)$ & 0.146 $\ (\pm 0.013)$ & 0.128 $\ (\pm 0.015)$ \\
Grok 4\textsuperscript{R} & 0.186 $\ (\pm 0.009)$ & 0.156 $\ (\pm 0.014)$ & 0.141 $\ (\pm 0.018)$ \\
Claude Sonnet 4\textsuperscript{R} (Thinking) & 0.190 $\ (\pm 0.009)$ & 0.162 $\ (\pm 0.016)$ & 0.148 $\ (\pm 0.020)$ \\
Gemini 2.5 Flash\textsuperscript{R} (Reasoning) & 0.195 $\ (\pm 0.009)$ & 0.165 $\ (\pm 0.016)$ & 0.157 $\ (\pm 0.021)$ \\
Llama 4 Scout & 0.230 $\ (\pm 0.011)$ & 0.202 $\ (\pm 0.019)$ & 0.210 $\ (\pm 0.025)$ \\
Market Baseline & 0.188 $\ (\pm 0.008)$ & 0.164 $\ (\pm 0.010)$ & 0.149 $\ (\pm 0.013)$ \\
\bottomrule
\end{tabular}
\captionsetup{skip=6pt}

\caption{Brier scores across the original and rebalanced evaluation subsets. Numbers in parentheses denote 95\% confidence intervals. The ``Moderate'' subset contains 50\% Sports events, while the ``Full'' subset has 25\% Sports events.}
\label{tab:combined-results}

\end{table}

As such, these results provide evidence that the conclusions drawn in the main text regarding LLM performance are robust to variations in the event-category distribution, though we intend to expand the diversity of our events in the future.

\newpage
\section{Additional Experiment Results}
\label{app:additional-experiments}

\subsection{Full Evaluation Results}\label{app:full-eval}
\begin{table}[!th]
    \centering
    \begin{threeparttable}
    \setlength{\tabcolsep}{4pt}
    \renewcommand{\arraystretch}{1.15}
    \begin{tabular}{lcccccc}
    \toprule
    & \multicolumn{2}{c}{\textbf{Forecasting Loss}} & \multicolumn{2}{c}{\textbf{Calibration Error}} & \multicolumn{2}{c}{\textbf{Market Return}} \\
    \cmidrule(lr){2-3}\cmidrule(lr){4-5}\cmidrule(lr){6-7}
    \textbf{LLMs} & {$\downarrow$ \textbf{Brier} (95\% CI)} & {\textbf{Rank}} & {$\downarrow$ \textbf{ECE}} & {\textbf{Rank}} & {$\uparrow$ \textbf{Average} (95\% CI)} & {\textbf{Rank}} \\
    \midrule
    GPT-5$^\texttt{R}$ (High)~$\triangle$ & 0.184 ($\pm$ 0.006) & {\large \ding{172}} & 0.042 & {\large \ding{175}} & 0.943 ($\pm$ 0.042) & {\large \ding{177}}\\
    GPT-5$^\texttt{R}$ & 0.187 ($\pm$ 0.005) & {\large \ding{173}} & 0.044 & {\large \ding{177}} & 0.890 ($\pm$ 0.040) & 13\\
    Market Baseline & 0.187 ($\pm$ 0.006) & {\large \ding{174}} & 0.069 & 19 & 0.899 ($\pm$ 0.043) & 10\\
    GPT-5$^\texttt{R}$ (Minimal) & 0.188 ($\pm$ 0.006) & {\large \ding{175}} & 0.036 & {\large \ding{173}} & 0.869 ($\pm$ 0.044) & 19\\
    o3$^\texttt{R}$ & 0.188 ($\pm$ 0.005) & {\large \ding{176}} & 0.030 & {\large \ding{172}} & 0.959 ($\pm$ 0.109) & {\large \ding{176}}\\
    Grok-4$^\texttt{R}$~$\triangle$ & 0.189 ($\pm$ 0.005) & {\large \ding{177}} & 0.043 & {\large \ding{176}} & 0.864 ($\pm$ 0.052) & 21\\
    Grok-3-Mini$^\texttt{R}$ & 0.189 ($\pm$ 0.006) & 7 & 0.046 & 7 & 0.869 ($\pm$ 0.043) & 20\\
    GPT-4.1 & 0.192 ($\pm$ 0.007) & 8 & 0.053 & 10 & 0.907 ($\pm$ 0.035) & 8\\
    Gemini 2.5 Pro$^\texttt{R}$ & 0.193 ($\pm$ 0.007) & 9 & 0.061 & 14 & 0.876 ($\pm$ 0.050) & 17\\
    Claude Opus 4.1$^\texttt{R}$ & 0.193 ($\pm$ 0.018) & 10 & 0.054 & 11 & 0.982 ($\pm$ 0.093) & {\large \ding{172}}\\
    Claude Sonnet 4$^\texttt{R}$~$\triangle$ & 0.194 ($\pm$ 0.006) & 11 & 0.041 & {\large \ding{174}} & 0.909 ($\pm$ 0.101) & 7\\
    o3-Mini$^\texttt{R}$ & 0.195 ($\pm$ 0.005) & 12 & 0.046 & 8 & 0.897 ($\pm$ 0.046) & 12\\
    o4-Mini$^\texttt{R}$ (High) & 0.196 ($\pm$ 0.006) & 13 & 0.062 & 16 & 0.874 ($\pm$ 0.040) & 18\\
    Kimi-K2 & 0.197 ($\pm$ 0.008) & 14 & 0.048 & 9 & 0.966 ($\pm$ 0.124) & {\large \ding{174}}\\
    Gemini 2.5 Flash$^\texttt{R}$~$\triangle$ & 0.197 ($\pm$ 0.007) & 15 & 0.067 & 17 & 0.883 ($\pm$ 0.053) & 15\\
    GPT-4o & 0.198 ($\pm$ 0.007) & 16 & 0.058 & 12 & 0.970 ($\pm$ 0.104) & {\large \ding{173}}\\
    DeepSeek-V3 & 0.201 ($\pm$ 0.008) & 17 & 0.061 & 15 & 0.963 ($\pm$ 0.103) & {\large \ding{175}}\\
    Gemini 2.5 Flash & 0.203 ($\pm$ 0.006) & 18 & 0.073 & 20 & 0.859 ($\pm$ 0.042) & 22\\
    Llama-4-Maverick & 0.208 ($\pm$ 0.008) & 19 & 0.067 & 18 & 0.904 ($\pm$ 0.050) & 9\\
    Llama-4-Scout~$\triangle$ & 0.219 ($\pm$ 0.008) & 20 & 0.060 & 13 & 0.805 ($\pm$ 0.040) & 24\\
    Gemini 2.0 Flash (Lite) & 0.224 ($\pm$ 0.013) & 21 & 0.091 & 22 & 0.855 ($\pm$ 0.074) & 23\\
    Gemini 2.0 Flash & 0.224 ($\pm$ 0.013) & 22 & 0.079 & 21 & 0.876 ($\pm$ 0.078) & 16\\
    Qwen3-235B & 0.234 ($\pm$ 0.007) & 23 & 0.118 & 23 & 0.898 ($\pm$ 0.111) & 11\\
    DeepSeek-R1$^\texttt{R}$ & 0.303 ($\pm$ 0.018) & 24 & 0.165 & 24 & 0.884 ($\pm$ 0.058) & 14\\
    \bottomrule
    \end{tabular}
    \end{threeparttable}
    \vspace{3mm}
    \caption{The full evaluation results for all 23 LLMs (including variants) and the market baseline. $\triangle$ denotes models selected for presentation in the main text.}
    \label{tab:full-eval-results}
\end{table}

\subsection{Sharpe Ratio for Average Return}
\label{app:sharpe-ratio}

As discussed in \cref{subsec:performance} and \cref{tab:full-eval-results}, Average Return often exhibits wide confidence intervals, reflecting the high variance of event-level payoffs. In most events, models earn nothing or only marginal gains, while in rare cases they achieve outsized returns. To account for this variability, we complement Average Return with the \emph{Sharpe ratio}~\citep{sharpe1998sharpe}, a standard metric that normalizes expected returns by their volatility. Formally, for asset returns $R_a$, the Sharpe ratio is defined as
\begin{equation}
\label{eq:sharpe}
S_a = \frac{\mathbb{E}[R_a - R_b]}{\sqrt{\text{Var}[R_a - R_b]}},
\end{equation}
where $R_b$ denotes a reference risk-free return. In the \prophet~setting, $R_a$ corresponds to the payoff from each event under the trading strategy of \cref{subsec:market-return}, and we set $R_b = 1$, the return from abstaining (i.e., keeping the budget unbet). The expectation and variance in \cref{eq:sharpe} are estimated by the sample mean and variance over $n$ events.

Sharpe ratios for all evaluated models are reported in \cref{tab:full-sharpe-ratio}, providing a volatility-adjusted comparison of economic performance.

\begin{table}[!th]
    \centering
    \begin{threeparttable}
    \setlength{\tabcolsep}{6pt}
    \renewcommand{\arraystretch}{1.15}
    \begin{tabular}{lcc|lcc}
    \toprule
    \textbf{LLMs} & {$\uparrow$ \textbf{Sharpe Ratio}} & {\textbf{Rank}} & \textbf{LLMs} & {$\uparrow$ \textbf{Sharpe Ratio}} & {\textbf{Rank}} \\
    \midrule
    o3$^\texttt{R}$ & -0.0131 & {\large \ding{172}} & GPT-4.1 & -0.0707 & 13\\
    GPT-5$^\texttt{R}$ & -0.0212 & {\large \ding{173}} & Claude Opus 4.1$^\texttt{R}$ & -0.0763 & 14\\
    Gemini 2.5 Pro$^\texttt{R}$ & -0.0230 & {\large \ding{174}} & Grok-4$^\texttt{R}$~$\triangle$ & -0.0771 & 15\\
    Gemini 2.5 Flash$^\texttt{R}$~$\triangle$ & -0.0270 & {\large \ding{175}} & Grok-3-Mini$^\texttt{R}$ & -0.0821 & 16\\
    o3-Mini$^\texttt{R}$ & -0.0285 & {\large \ding{176}} & o4-Mini$^\texttt{R}$ (High) & -0.0853 & 17\\
    Kimi-K2 & -0.0373 & {\large \ding{177}} & Market Baseline & -0.0897 & 18\\
    DeepSeek-V3 & -0.0381 & 7 & DeepSeek-R1$^\texttt{R}$ & -0.1199 & 19\\
    GPT-4o & -0.0383 & 8 & GPT-5$^\texttt{R}$ (Minimal) & -0.1400 & 20\\
    Qwen3-235B & -0.0389 & 9 & Gemini 2.5 Flash & -0.1483 & 21\\
    Claude Sonnet 4$^\texttt{R}$~$\triangle$ & -0.0404 & 10 & Gemini 2.0 Flash & -0.1604 & 22\\
    GPT-5$^\texttt{R}$ (High)~$\triangle$ & -0.0435 & 11 & Llama-4-Scout~$\triangle$ & -0.1799 & 23\\
    Llama-4-Maverick & -0.0637 & 12 & Gemini 2.0 Flash (Lite) & -0.1842 & 24\\
    \bottomrule
    \end{tabular}
    \end{threeparttable}
    \vspace{3mm}
    \caption{\textbf{Sharpe ratio performance for all 23 LLMs and market baseline.} $\triangle$ denotes models selected for presentation in the main text.}
    \label{tab:full-sharpe-ratio}
\end{table}

\subsection{Probability Elicitation Methods}
\label{app:probability-elicitation}
\cref{sec:arena} introduced our default approach: directly prompting an LLM to verbalize the probability that a market resolves to \texttt{Yes}. In this experiment, we conduct ablation studies over alternative confidence estimation methods. The goal is twofold: (i) test the robustness of our default prompt against reasonable variations, and (ii) illustrate how \prophet\: can serve as a testbed for comparing black-box confidence estimation methods under forecasting settings.

\tightparagraph{Setup.}
We evaluate the five representative LLMs from main text on the \textsc{Prophet-Arena-Subset-100} dataset. Metrics are Brier and ECE scores. We consider two families of confidence estimation methods (seven variants total):

\begin{enumerate}[leftmargin=1.2em,nosep]
    \item \textbf{Verbalized Probability}~\citep{tian2023just}
    \begin{itemize}[leftmargin=1em]
        \item \textit{Prompt variation}: we modify our default prompt to make it: (A) more concise, (B) more verbose, and (C) rewritten by another LLM (\modelshort{grok-4}). Key logistics and formatting instructions are preserved in all variations. These variation prompts are available in~\cref{app:prompt-variation-prompts}.
        \item \textit{Prompt emsemble}~\citep{wightman2023strength}: for each market, we average the probabilities elicited from the default and all variation prompts (i.e. (A), (B), (C) above).
        \item \textit{Bi-direction*} (ours): in addition to eliciting $p_{ij}$ (“probability of \texttt{Yes}”) using default prompt, we also elicit $p^\circ_{ij}$ (“probability of NO”), and calibrate via $\tfrac{1}{2}(p_{ij} + (1 - p^\circ_{ij}))$.
    \end{itemize}
    \item \textbf{Self-consistency}~\citep{wang2022self}
    \begin{itemize}[leftmargin=1em]
        \item \textit{Unweighted}: instead of directly asking for probability, we repeatedly query the model 10 times for a \texttt{Yes/No} decision; probability is the fraction of \texttt{Yes} answers.
        \item \textit{Weighted}~\citep{taubenfeld2025confidence}: we further supplement each \texttt{Yes/No} with a confidence score.\footnote{This confidence reflects uncertainty about the chosen answer, not a direct market probability (e.g., low confidence in \texttt{Yes} signals indecision, not belief in “NO”).} Probabilities are formed by confidence-weighted aggregation.
    \end{itemize}
\end{enumerate}

\tightparagraph{Results (see~\cref{tab:confidence-estimation-results}).} Among the verbalized probability methods, we observe that \textbf{all models exhibit strong robustness to prompt variations}. For accuracy, Brier scores for all models vary by less than one standard deviation ($\approx 0.01$). As a result, \modelshort{gpt-5} ranks the highest under all methods, and \modelshort{gemini-2.5-flash} (Thinking) consistently achieves second place in $5/6$ cases. Prompt ensemble does not lead to substantial improvement, since elicited probabilities are already similar across variations. For calibration, \modelshort{gpt-5} and \modelshort{llama-4-scout} are consistently the best and the worst models, regardless of the prompting method. Despite slightly larger fluctuations among the ECE scores, \textbf{no single method dominate the others for all models} (i.e. achieves the best scores on all columns). Our original Bi-direction* method improves calibration (over the default) on $4/5$ models, supporting the view that LLMs tend to be overconfident toward \texttt{Yes} outcomes.

\begin{table}[!t]
\centering
\setlength{\tabcolsep}{4pt} % Adjust column spacing as needed
\begin{tabular}{lccccc}
\toprule
\textbf{Method Type / Name} & \textbf{\modelshort{grok-4}} & \textbf{\modelshort{gemini-2.5-flash}} & \textbf{\modelshort{claude-sonnet-4}} & \textbf{\modelshort{gpt-5}} & \textbf{\modelshort{llama-4-scout}} \\
\midrule
\textit{Verbalized Prob} \\
\cmidrule(r){1-1} % Line below the method type
 Default & 0.186/0.117 & 0.166/0.036 & 0.173/0.046 & \textbf{0.165/0.020} & 0.196/0.153 \\
 Variation A (Concise) & 0.180/0.102 & 0.172/0.036 & 0.169/0.035 & \textbf{0.162/0.021} & \underline{0.192}/0.134 \\
 Variation B (Verbose) & 0.178/0.124 & 0.166/0.039 & 0.172/0.040 & \textbf{0.160/0.028} & 0.199/0.164 \\
 Variation C (Rewrite) & \underline{0.176}/0.123 & 0.167/\underline{0.027} & 0.172/0.039 & \textbf{0.159/\underline{0.016}} & 0.195/0.167 \\
 Ensemble & 0.177/0.117 & 0.165/0.032 & 0.170/0.043 & \textbf{0.160/0.024} & \underline{0.192}/0.142 \\
Bi-direction* & 0.180/0.101 & \underline{0.164}/0.031 & \underline{0.165}/\underline{0.028} & \textbf{\underline{0.158}/0.023} & 0.203/0.140 \\
\midrule
\textit{Self-consistency} \\
\cmidrule(r){1-1} % Line below the method type
 Unweighted & 0.238/0.115 & \textbf{0.231}/0.110 & 0.241/0.071 & 0.239/\textbf{0.071} & 0.267/0.129 \\
 Weighted & 0.205/\underline{0.091} & 0.201/0.067 & 0.189/0.050 & \textbf{0.181/0.046} & 0.214/\underline{0.125} \\
\bottomrule
\end{tabular}
\vspace{3mm}
\caption{\textbf{Evaluation Results for Different Probability Elicitation Methods .} Each cell contains a pair of \emph{Brier$\downarrow$ / ECE$\downarrow$} scores. \textbf{Bold} denotes the best score for each row, \underline{underline} for each column.}
\label{tab:confidence-estimation-results}
\vspace{-3mm}
\end{table}

In contrast, \textbf{self-consistency methods result in significantly lower accuracies and yield mixed calibration benefits}. With 10 rollouts, the unweighted variant produces coarse-grained probabilities at a resolution of 0.1, limiting accuracy despite incurring higher compute cost. The weighted variant partially alleviates this by using finer-grained confidence signals, producing noticeable calibration gains but still trailing verbalized methods in accuracy.

To sum up, these results show that (i) LLMs are generally robust against prompt ablations, justifying our default prompt choice, and (ii) \prophet\: provides a natural benchmark for evaluating and contrasting future confidence estimation/calibration methods. 

\subsection{Reasoning Consistency of LLMs}
\label{app:reasoning-consistency}

In addition to the primary relative (Brier score) and absolute (Average return) metrics, certain types of forecasting events also enable us to evaluate the consistency -- an important component of reasoning -- of LLMs.
These metrics can be calculated \textbf{solely by looking at the potential event outcomes and the LLM probabilities given to them.} Below we give two concrete examples of such consistency metrics.

\paragraph[]{Logical chain score.\footnote{This is a placeholder name. Feel free to suggest a better one.}} Consider the forecasting event \textit{``The bitcoin price by the end of 2026''}, and two of its outcomes are \textit{``(A) The bitcoin price is above \$200,000''} and \textit{``(B) The bitcoin price is above \$220,000''}, respectively. No matter how good an LLM is at predicting the probabilities for (A) \& (B), we know that anyone with \textbf{consistent reasoning} will give $\mathbb{P}[(A)] \geq \mathbb{P}[(B)]$ since the latter outcome logically implies the former. We denote such a relationship as a \textbf{logical chain}, or $(B) \to (A)$. Obviously, this logical chain can contain more than two outcomes, so we call a logical chain $\mathcal{S} = (S_1) \to ... \to (S_T)$ \textbf{maximal} whenever it satisfies both:
\begin{enumerate}
    \item For all $ 1 \leq t < T$, we have $(S_t) \to (S_{t+1})$,
    \item No other outcome $(K) \notin \mathcal{S}$ satisfies $(K) \to (S_1)$ or $(S_T) \to (K)$.
\end{enumerate}

A single event might contain multiple maximal logical chains $\mathcal{S}_1,...,\mathcal{S}_n$ with lengths $T_1,...,T_n$. For an LLM with probability $\mathbb{P}[(A)]$ for outcome $(A)$, its \textbf{logical chain score} for this event is given by $\frac{1}{n}\sum_{i = 1}^n score(\mathcal{S}_i)$, where
\begin{equation}
    score\left(\mathcal{S}_i= (S_{i1} \to ...\to S_{i T_i})\right) := \frac{1}{T_i -1} \sum_{j=1}^{T_i - 1} \mathbf{1}\{\mathbb{P}[(S_j)] \leq \mathbb{P}[(S_{j+1})]\}
\end{equation}
with $\mathbf{1}(\cdot)$ being the indicator function. The final logical chain score is then \textbf{averaged over all events with at least one chain}. We adopt an LLM-judge (\texttt{Gemini-2.5-Flash}) to automatically detect the maximal logical chains for all our events.

\paragraph[]{Mutually exclusive score.} 

In certain forecasting events, the potential outcomes are \textbf{mutually exclusive}, meaning that exactly one outcome can occur. For example, in the event \textit{``Who will win the NBA championship in 2026?''}, the possible outcomes could be the teams, where only one team can win. A \textbf{maximal set of mutually exclusive outcomes} is a set where 
\begin{enumerate}
    \item each outcome is distinct and all outcomes are mutually exclusive,
    \item the event will resolve to one and only one outcome in the set.
\end{enumerate}
If such a maximal set $\mathcal{S} := \{(S_1),...,(S_m)\}$ with size $m$ exists for a forecasting event, we can calculate the \textbf{mutually exclusive score} at this event as:
\begin{equation}
    score_{\text{ME}} = \mathbf{1}\left\{\sum_{i=1}^m \mathbb{P}[(S_i)] = 1\right\}
\end{equation}
(In practice, we allow the sum to deviate slightly from 1 with some tolerance level $\epsilon$).

We evaluate this score over all events where mutually exclusive outcomes are defined. The identification of maximal set is performed using the same LLM-judge (\texttt{Gemini-2.5-Flash}).

\begin{table}[!th]
    \small
    \centering
    \begin{threeparttable}
    \setlength{\tabcolsep}{6pt}
    \renewcommand{\arraystretch}{1.15}
    \begin{tabular}{lcc}
    \toprule
    \textbf{LLMs} & \textbf{Mutually Exclusive Consistency} & \textbf{Logical Chain Consistency} \\
    \midrule
    DeepSeek-R1$^\texttt{R}$ & 0.996 & 0.987 \\
    o4-Mini$^\texttt{R}$ & 0.995 & 0.998 \\
    GPT-5$^\texttt{R}$ (High) & 0.995 & 0.999 \\
    Gemini 2.5 Flash$^\texttt{R}$ & 0.995 & 0.997 \\
    Gemini 2.5 Pro$^\texttt{R}$ & 0.995 & 0.995 \\
    % Gemini 2.5 Pro & 0.995 & 0.995 \\
    Claude Sonnet 4$^\texttt{R}$ & 0.995 & 0.987 \\
    DeepSeek-V3 & 0.994 & 0.973 \\
    Grok-4$^\texttt{R}$ & 0.994 & 0.994 \\
    GPT-4.1 & 0.994 & 0.973 \\
    Llama 4 Maverick & 0.994 & 0.981 \\
    Llama 4 Scout & 0.994 & 0.979 \\
    Grok-3-Mini$^\texttt{R}$ & 0.994 & 0.901 \\
    Qwen3-235B & 0.994 & 0.988 \\
    GPT-4o & 0.994 & 0.930 \\
    GPT-5$^\texttt{R}$ (Minimal) & 0.994 & 0.962 \\
    GPT-5$^\texttt{R}$ (Medium) & 0.994 & 0.990 \\
    o3-Mini$^\texttt{R}$ & 0.994 & 0.930 \\
    Gemini 2.0 Flash & 0.994 & 0.990 \\
    Kimi-K2 & 0.993 & 0.984 \\
    o3$^\texttt{R}$ & 0.993 & 0.996 \\
    Gemini 2.0 Flash (Lite) & 0.993 & 0.911 \\
    \bottomrule
    \end{tabular}
    \end{threeparttable}
    \vspace{3mm}
    \caption{\textbf{Consistency scores for the LLMs.} Most LLMs evaluated have exhibited excellent performances in both consistency metrics, indicating their mature logical reasoning skills.}
    \label{tab:consistency-22llms-single}
\end{table}

\subsection{Variability in Model Forecasts Despite Identical Inputs}

\begin{figure}[t]
    \centering
    \includegraphics[width=0.4\linewidth]{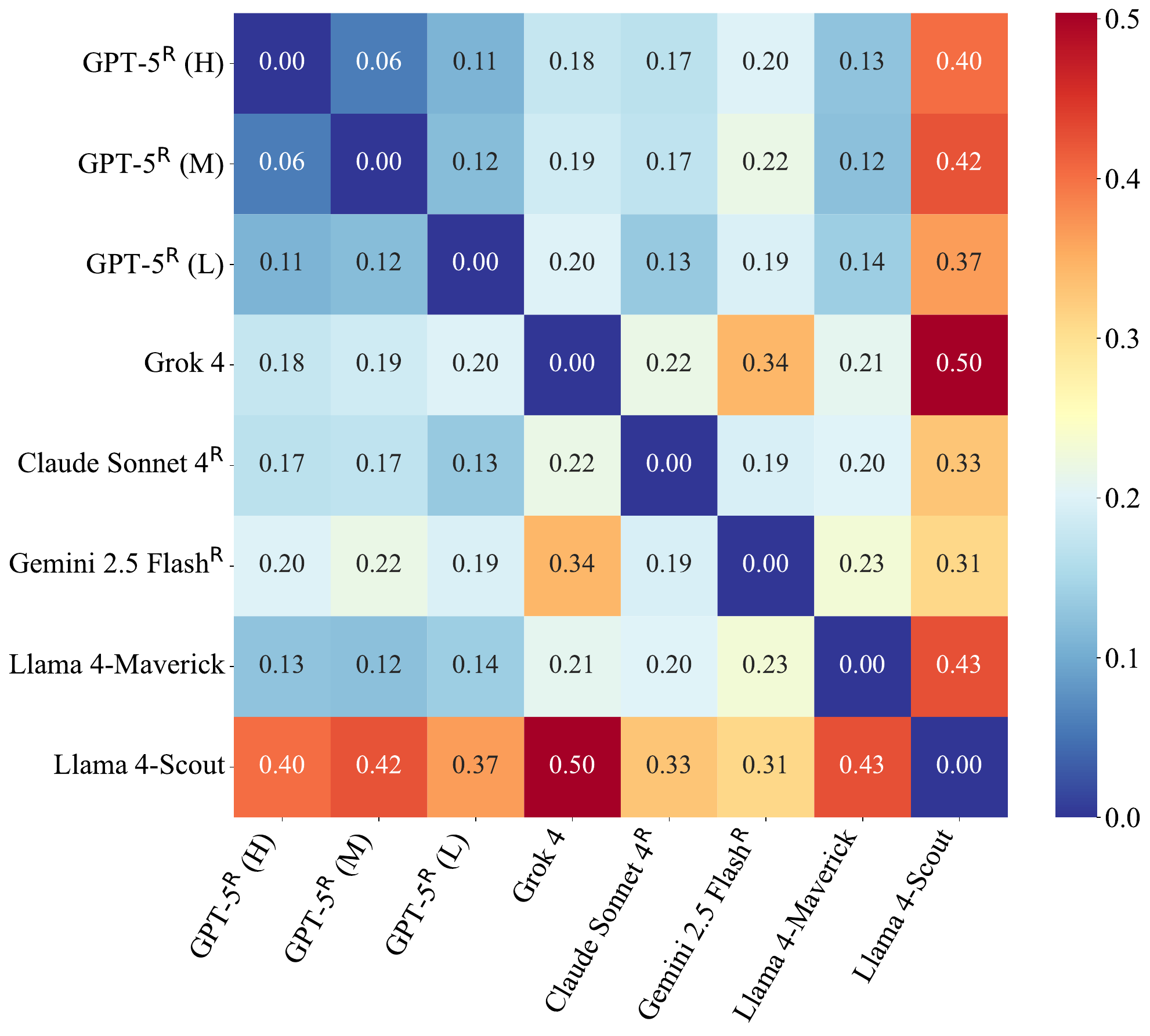}
    \caption{\textbf{Pairwise divergence of model predictions across tasks.} 
    Heatmap shows the average $L_2$ distance between probabilistic predictions of each model pair. 
    \modelshort{gpt-5} H, M, L labels represent high, medium, and minimal reasoning efforts, as explained by \cref{app:full-llm-list}.}
    \label{fig:llmDiffs}
\end{figure}

While all tested LLMs receive the same market data and news sources, they differ substantially in how they combine and weigh each piece of information, resulting in diverse prediction patterns. \cref{fig:llmDiffs} shows averaged pairwise differences in event predictions, and the generally large differences across models suggest that LLMs reason about events in fundamentally different ways, even when given identical inputs.

Interestingly, even within model families, clustering patterns also vary notably. For example, the GPT-5 variants produce relatively homogeneous predictions, reflecting similar training and reasoning capabilities. In contrast, Llama 4 Maverick and Scout, despite belonging to the same model family, exhibit the second largest average pairwise difference (0.43). This divergence illustrates that sharing a model family does not guarantee similar predictions, nor does it imply identical reasoning processes.

\subsection{Evaluating Reasoning using an LLM-as-a-Judge Framework}
\label{app:reasoning}

To directly evaluate LLMs reasoning processes, we employ an LLM-as-a-judge framework \citep{zheng2023judging} to assess the soundness of their reasoning methodologies.  Our evaluation proceeds in two stages: (1) eliciting explicit reasoning from the prediction model, and (2) systematically evaluating this reasoning with an independent evaluator. 

The assessment framework encompasses five critical dimensions, each scored on an 1-5 point scale (where 1 and 5 indicate poor and excellent performance, respectively):

\begin{enumerate}[leftmargin=2em]
    \item \textbf{Source  selection}: Assessment of how effectively models incorporate provided sources and the reliability of their source selection criteria.
    \item \textbf{Evidence extraction from selected sources} : Evaluation of the model's ability to extract relevant evidence from sources and demonstrate sophisticated interpretation beyond surface-level analysis.
    \item \textbf{Reasoning synthesis}: Analysis of how extracted evidence is integrated into coherent justifications, including the model's approach to combining and weighting disparate pieces of evidence.
    \item \textbf{Reasoning-to-prediction}: Assessment of how effectively the reasoning process is translated into the final probabilistic prediction.
    \item \textbf{Recognition of prediction uncertainty}: Examination of the model's capacity to identify and appropriately account for uncertainties and potential counterarguments within their analysis. 
\end{enumerate}

To enhance evaluation reliability, we incorporate a human expert-assessed reference evaluation of an external event not included in the dataset, which serves as a grounding benchmark. Additionally, we lower the temperature setting to 0 for the LLM judge (\modelshort{claude-sonnet-4}) to improve response consistency. We include the full prompt in \cref{app:reasoning_eval_prompt}.

We include the full table of reasoning evaluations with a wider range of LLMs below. The same trends observed in \cref{sec:granularAnalysis} are evident here: models reach near-parity in source utilization, evidence extraction, and uncertainty analysis, while exhibiting substantial disparities in reasoning synthesis and reasoning-to-prediction alignment. These latter dimensions are the primary drivers of differences in overall predictive performance. As such, the findings further suggest that the development of future prediction agents should prioritize advances in higher-order reasoning and the alignment of reasoning with probabilistic forecasts, rather than focusing on marginal improvements in retrieval or evidence handling.

\begin{table}[h]
\centering
\small
\begin{tabular}{lcccccc}
\toprule
\textbf{LLM} & \textbf{Sources} & \textbf{Evidence} & \textbf{Reas. Synth.} & \textbf{Align.} & \textbf{Uncert.} & \textbf{Average Score} \\
\midrule
\modelshort{gpt-5}$^\texttt{R}$ (High)       & 3.69 & 3.66 & \textbf{4.14} & \textbf{3.97} & 3.94 & \textbf{3.88} \\
\modelshort{o3}                & \textbf{3.71} & \textbf{3.74} & 3.93 & 3.78 & 3.87 & 3.81 \\
\modelshort{gemini-2.5-pro}        & 3.70 & 3.69 & 3.39 & 3.92 & \textbf{3.95} & 3.73 \\
\modelshort{gpt-5}$^\texttt{R}$ (Medium)      & 3.69 & 3.65 & 3.69 & 3.66 & 3.94 & 3.73 \\
\modelshort{gpt-5}$^\texttt{R}$ (Minimal)     & 3.69 & 3.64 & 3.26 & 3.58 & 3.90 & 3.61 \\
\modelshort{gemini-2.5-flash}$^\texttt{R}$      & 3.57 & 3.66 & 3.19 & 3.67 & 3.74 & 3.57 \\
\modelshort{grok-4}          & 3.40 & 3.51 & 3.33 & 3.48 & 3.66 & 3.48 \\
\modelshort{claude-sonnet-4}$^\texttt{R}$     & 3.53 & 3.47 & 2.93 & 3.39 & 3.75 & 3.41 \\
\modelshort{llama-4-maverick}   & 3.14 & 3.29 & 2.43 & 2.14 & 3.14 & 2.83 \\
\modelshort{gpt-4o}           & 3.07 & 2.99 & 2.32 & 2.59 & 2.96 & 2.79 \\
\modelshort{llama-4-scout}     & 2.97 & 2.88 & 2.29 & 2.37 & 2.87 & 2.68 \\
\bottomrule
\end{tabular}
\vspace{3mm}
\caption{\textbf{Full table on LLM performance on reasoning evaluation criteria across dataset events.} Each dimension is scored on a standardized 5-point scale, where 1 and 5 indicate poor and excellent performance, respectively. Average scores are presented for each model, with \textbf{bold} values indicating the best-performing model for each criterion. Models are ordered by descending overall average score.}
\label{tab:llm_judge_eval_2}
\end{table}

\subsubsection{Validating LLM-as-a-Judge via Human Ratings} \label{app:reasoning:rater}

To validate the LLM-as-a-judge system and check for alignment bias, we conducted a blinded human adjudication study. We randomly sampled 170 events with LLM-generated reasoning traces and hid both the model identity and LLM judge outputs from human raters. Each reasoning trace was independently scored by humans on a 1-5 scale across the five rubric dimensions (Sources Used, Evidence Extracted, Combination \& Weighting, Uncertainties \& Counterpoints, and Mapping to Final Probabilities). The rubric provided was identical to that given to the LLMs.

Overall, we find a high degree of concordance between human ratings and LLM-as-a-judge scores. As shown in~\cref{tab:mad}, mean absolute differences between human and LLM ratings ranged from 0.37 to 0.44, with an average difference of 0.42, and variances between 0.27 and 0.36. These values are strikingly low given the 1-5 rating scale: all differences are, on average, substantially smaller than 1, meaning that the LLM judge’s score deviates from human judgment by less than half a point. Moreover, across all rubric categories, over 94\% of human ratings were within 1 point of the LLM’s score as seen in~\cref{fig:agreementMatrix}, underscoring the tight alignment between automated and human evaluation.

\begin{table}[h!]
\centering
\begin{tabular}{lc}
\toprule
Rubric Category & Mean Absolute Difference (0-4) (Variance) \\
\midrule
Sources Used & 0.43 (0.30) \\
Evidence Extracted & 0.42 (0.27) \\
Combination Weighting & 0.43 (0.36) \\
Uncertainties Counterpoints & 0.44 (0.28) \\
Mapping To Final Probabilities & 0.37 (0.29) \\
\bottomrule
\end{tabular}
\vspace{3mm}
\caption{Human ratings for rubric categories, with variance shown in parentheses.
Mean absolute difference is calculated as the average $\lvert\,\text{human score} - \text{LLM score}\,\rvert$ across the 170 events. Differences are on a 0-4 scale.}
\label{tab:mad}
\end{table}

\begin{figure}[h!]
    \centering
    \includegraphics[width=0.7\linewidth]{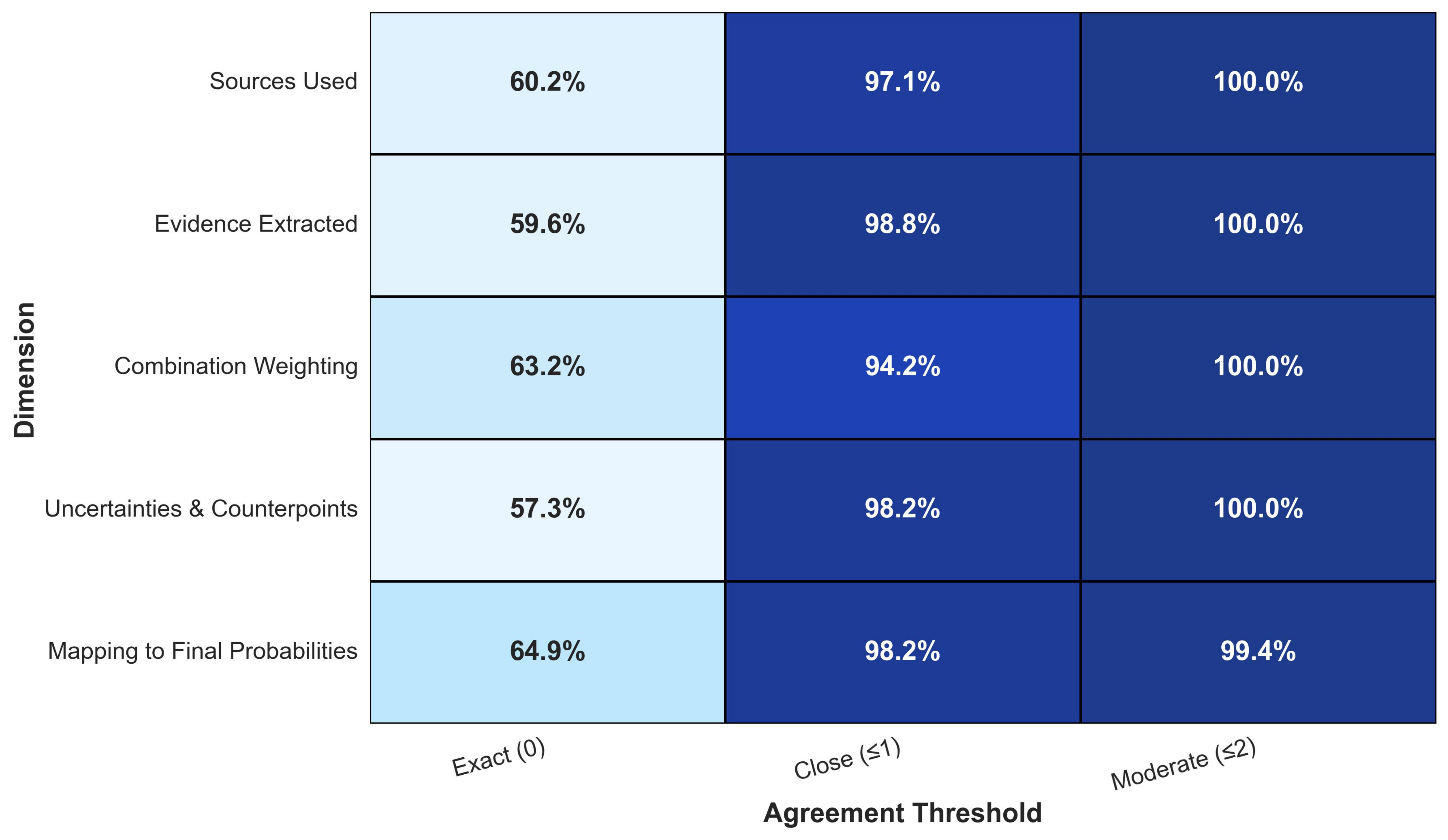}
    \caption{Agreement matrix showing the percentage of human ratings that lie within exact, moderate ($\pm 1$), and larger ($\pm 2$) thresholds of the LLM-judge ratings for each rubric dimension.}
    \label{fig:agreementMatrix}
\end{figure}

\subsubsection{(Lack of) Effect of Scaffolding Prompt on Prediction}
\label{app:scaffoldEffect}

\cref{tab:brier_diff} reports the differences in Brier score between the predictions with the reasoning scaffolding prompt and the predictions with the default, non-scaffolding configurations. Overall, the differences are insignificant for all models, indicating that the probabilistic forecasts remain largely stable regardless of the prompt type. This suggests that the enhanced reasoning elicitation prompt did not meaningfully improve prediction performance, implying that the prompt primarily serves as a structured summary rather than enhancing the models’ underlying predictive capabilities.

\begin{table}[ht!]
\centering
\begin{tabular}{l c}
\toprule
\textbf{Model} & \textbf{Brier Difference} \\
\midrule
GPT-5$^\texttt{R}$ (High) & -0.0012 \\
GPT-5$^\texttt{R}$ (Medium) & 0.0011 \\
GPT-5$^\texttt{R}$ (Minimal) & 0.0017 \\
o3$^\texttt{R}$ & -0.0018 \\
Gemini 2.5 Pro$^\texttt{R}$ & -0.0080 \\
Gemini 2.5 Flash$^\texttt{R}$ & -0.0001 \\
Llama 4 Maverick & 0.0095 \\
Llama 4 Scout & 0.0138 \\
Grok-4$^\texttt{R}$ & -0.0186 \\
Claude Sonnet 4$^\texttt{R}$ & -0.0005 \\
DeepSeek-R1$^\texttt{R}$ & 0.0057 \\
GPT-4o & 0.0053 \\
\bottomrule
\end{tabular}
\vspace{3mm}
\caption{Brier differences between the reasoning scaffolding prompts and non-scaffolding configurations.}
\label{tab:brier_diff}
\end{table}

\subsection{Knowledge Internalization}
\subsubsection{Knowledge Internalization Events}
\label{app:mem_events}
We sample 100 events from Kalshi \citep{kalshi}, with market close time before October 2023 (i.e., before all model knowledge cutoff dates. Note that despite that many popular events on Kalshi are \textit{Sports} events as of August 2025, sports betting was not legal on Kalshi until 2024 \citep{wilmot2025kalshi}. The sampled past events span the categories available on the platform during that period and exhibit differing levels of \emph{temporal granularity}. Some events target a specific timestamp or date (e.g., \textit{NASDAQ price on August 20, 2023}), while others are coarser, period-level questions without a single focal timestamp (e.g., \textit{Will WTI crude oil prices decrease in Q2 2023?}).

\begin{figure}[htpb!]
    \centering
    \includegraphics[width=0.75\linewidth]{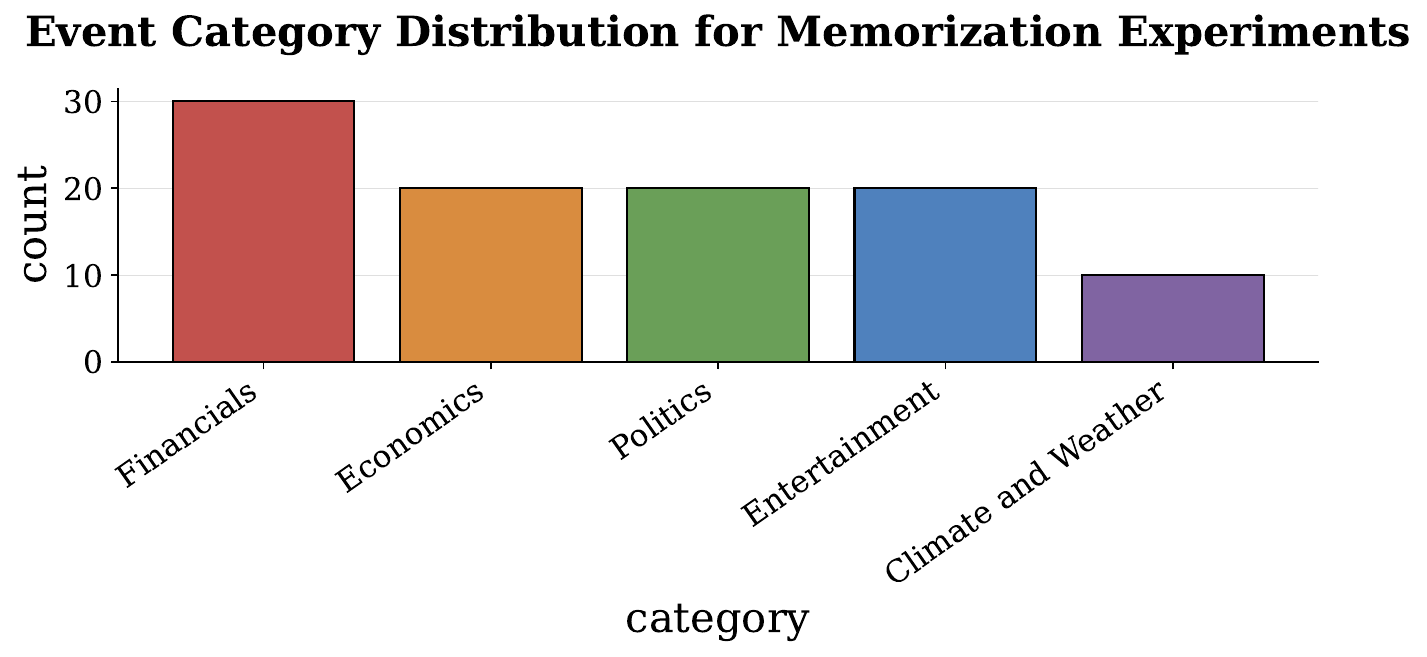}
    \caption{Event distribution for memorization experiments}
    \label{fig:mem_categories}
\end{figure}

\subsubsection{Knowledge Internalization Prompts}
\label{app:mem_prompts}

We use three complementary prompts to test models' knowledge internalization. 

\begin{enumerate}[leftmargin=2em]
    \item \textbf{Prediction prompt (no sources)}: the original forecasting-style prompt used in \prophet\, without sources context (\cref{app:internal_prediction_prompt}). Although framed as a forward-looking prediction, all of these past events are in fact already represented in the model's training data. Success in these cases indicates that the model can implicitly recognize the event as historical and draw on its internalized knowledge to answer correctly. Failure, in contrast, highlights a gap between memorization and reasoning: the model may “know” the fact but still treat the prompt purely as a forecasting task, leading to mis-recall

    \item \textbf{Prediction prompt with sources}: the same forecasting prompt with an additional block (\cref{app:prediction_prompt_w_src}), but augmented with event-specific sources. In this setting, the model is no longer reasoning only from internalized knowledge: it must integrate retrieved evidence with what it already ``remembers." This setup tests whether the model can align its internal recall with external evidence, and whether retrieval corrects, reinforces, or conflicts with its memorized knowledge.

    \item \textbf{Recall prompt}: a specialized prompt that explicitly frames the task as recalling a past outcome (\cref{app:recall_prompt}). This isolates the model’s internalized knowledge, revealing whether it has retained coarse or precise details about prior events.
\end{enumerate}

\subsection{How Good are LLMs at Finding Sources?}
\label{app:llm_finding_srcs}
\begin{figure*}[t]
  \centering
  \begin{subfigure}[t]{0.45\textwidth}
    \centering
    \includegraphics[width=\linewidth,trim=8pt 12pt 18pt 8pt,clip]{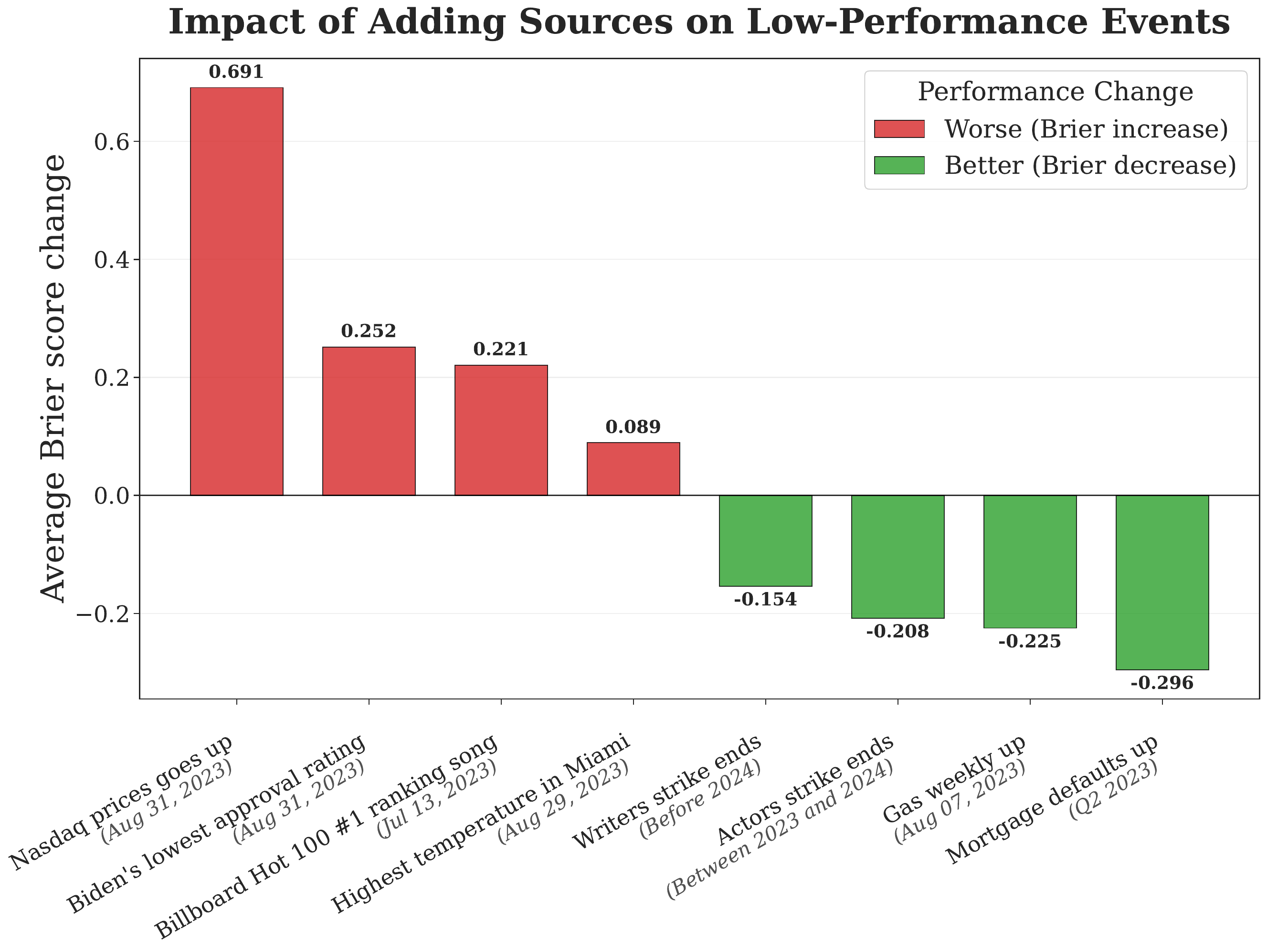}
    \phantomcaption
    \label{fig:brier_changed}
  \end{subfigure}\hfill
  \begin{subfigure}[t]{0.49\textwidth}
    \centering
    \includegraphics[width=\linewidth,trim=8pt 8pt 8pt 8pt,clip]{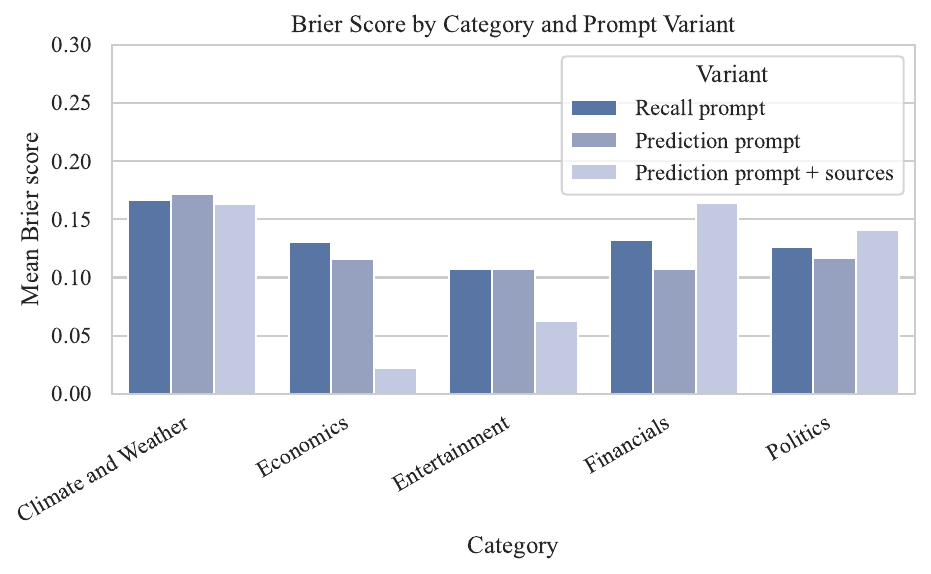}
    \phantomcaption
    \label{fig:mem_by_cat}
  \end{subfigure}
  \caption{\textbf{Impact of adding sources to context on past events.} The “low-performance” events refer to past events that the models have worst recall on. The average Brier score change is computed by evaluating the difference between when models are provided with searched sources vs.\ when models are solely recalling based on memory.}
  \label{fig:sources_impact}
\end{figure*}

We evaluate models' source-finding ability on the set of \textit{past} events (\cref{app:mem_events}), using the search prompt. Because these events have resolved, accurate information is publicly available; competent search should surface the correct evidence and hence, on average, improve the quality of the models' recall.
In \cref{fig:mem_by_cat}, we observe that the addition of sources improve LLM recall quality in certain events but not others. In fact, the figure shows that the events hardest to recall often becomes worse from the source-augmented variant. One likely explanation is 
that retrieved sources, while factual, introduce noise or extraneous details that 
interfere with the model’s internalized recall and reasoning. Events requiring precise, fine-grained recall (e.g.,  financial indicators or political approval ratings) tend to suffer when sources are added, likely because the retrieved evidence contains multiple overlapping numbers, dates, or conditions that confuse the model. By contrast, more salient and broadly covered events (e.g., major stock index movements or entertainment outcomes) generally improve with sources, since the found sources likely have lower variance and higher accuracy. 

This demonstrates that even on past events that have closed, LLM searchers do not yet possess the ability to accurately pinpoint the most useful sources for finer-granularity events. 
\newpage 
\section{Case Studies}
\label{app:case_studies}

\subsection{Differences in Prediction with the Same Information}
\label{app:case_studies_same_info}

\begin{table}[!ht]
\caption{Sources used for Real Madrid vs. Al Hilal SFC match.}
\label{tab:sources_club_wc}
\small
\begin{center}
\begin{tabular}{@{}l p{0.4\linewidth} p{0.3\linewidth}@{}}
\toprule
\textbf{Source} & \textbf{Summary} & \textbf{Title} \\
\midrule
Sporting News & Real Madrid is heavily favored to win against Al Hilal, with betting odds reflecting their dominance despite potential lineup challenges. & Real Madrid vs Al Hilal prediction, odds, betting tips and best bets for Club World Cup final \\
PokerStars Sports & Betting odds indicate a strong expectation for a Real Madrid victory, with a significant likelihood of multiple goals being scored in the match. & Real Madrid v Al-Hilal Betting Odds | PokerStars Sports \\
Sporting News (India) & Analysts predict a 3--1 victory for Real Madrid, citing Al Hilal's reliance on penalties in previous matches and Madrid's superior quality. & Real Madrid vs Al Hilal prediction, odds, betting tips and best bets for Club World Cup final \\
BetsLoaded & Real Madrid is predicted to win against Al Hilal, with recent form and head-to-head statistics favoring the Spanish club. & Real Madrid vs Al Hilal Saudi FC Prediction, Betting Tips (18 June 2025) \\
El Pa\'{\i}s & Under Xabi Alonso, Real Madrid is striving to establish a new identity with a focus on high-pressure play, though the team is still adapting to this approach. & El Real Madrid de Xabi busca nueva identidad en Miami: `Empieza el rock and roll' \\
Sky Sports & Historical data shows Real Madrid's previous victory over Al Hilal, suggesting a favorable outcome for the Spanish team in the upcoming match. & Form and head to head stats Real Madrid vs Al-Hilal \\
AS (Diario AS) & Al Hilal's top scorer, Aleksandar Mitrovi\'c, will miss the match against Real Madrid due to a muscle injury, significantly weakening their offensive capabilities. & Al Hilal pierde a Mitrovic \\
AS (Diario AS) & Real Madrid's potential lineup against Al Hilal may see Rodrygo replacing the ill Mbapp\'e, with new signings Alexander-Arnold and Huijsen expected to start in defense. & Alineaci\'on posible del Real Madrid contra Al Hilal en el Mundial de Clubes \\
AS (Diario AS) & Real Madrid, under new coach Xabi Alonso, faces defensive challenges due to injuries and is uncertain about its tactical formation ahead of the match against Al Hilal. & Nueva era y nueva defensa \\
Reuters & Kylian Mbapp\'e is doubtful for Real Madrid's match against Al Hilal due to a high fever, potentially leaving the team without a recognized center forward. & Mbappe doubtful for Real Madrid's Club World Cup opener against Al-Hilal \\
\bottomrule
\end{tabular}
\end{center}
\end{table}

In the following example, the LLMs predict on the event, the Club World Cup soccer game between Real Madrid and Al Hilal SFC, scheduled for Jun 18, 2025. The sources and market snapshots that LLMs receive are also the same, detailed in \cref{tab:sources_club_wc}. The market prices were around (74\% Madrid, 15\% tie, 14\% Al Hilal). Despite given the same market and source information, models showed differing rationales. \par 

\modelshort{gpt-4o} for example, explicitly adjusted its prediction to account for the possibility of a draw, predicting (70\% Madrid, 20\% tie, 10\% Al Hilal), noting:
\begin{displayquote}
``Sources universally favor Real Madrid, citing their superior quality, historical victory, and Al Hilal’s weakened state without Mitrović. Market predictions strongly back Real Madrid, \textbf{but I adjusted the probabilities slightly to account for Real Madrid’s lineup challenges and adaptation under a new coach. The uncertain defensive setup and potential absence of Mbappé slightly raise the probability of a tie.}”
\end{displayquote}

\modelshort{gemini-2.5-flash}, in contrast, acknowledged the sources that suggest a higher chance of a tie, but nevertheless decides to place a higher weight on Madrid win (79\% Madrid, 13\% tie, 8\% Al Hilal). It reasons that: 

\begin{displayquote}
``I heavily weighed the consistent expert analysis, betting odds, and market data, all strongly favoring Real Madrid. The probability distribution reflects Real Madrid's clear dominance and superior quality, further amplified by the critical injury to Al Hilal's top scorer. \textbf{Although Real Madrid faces minor lineup challenges and is adapting to a new tactical approach, these factors are significantly outweighed by Al Hilal's weakened offensive capabilities and the overall disparity in team strength."}
\end{displayquote}

These fine-grained distinctions are not rarely found in our collected data. They reflect how models reason under uncertainty, and often contributes to the differences in model performances. 

\subsection{Event recall is approximate, not precise}
\label{app:event_recall}
Below is the model output of \modelshort{gemini-2.5-flash} on the event \textit{Billboard Hot 100 \#1, Jul 13, 2023?}, from the Recall prompt in \cref{app:mem_prompts}.
\FloatBarrier
\begin{promptbox}{\modelshort{gemini-2.5-flash} Model Output}
\begin{lstlisting}
Event: TOPSONG-23JUL13
Q: Billboard Hot 100 #1, Jul 13, 2023?

Probabilities:
  - Vampire (Olivia Rodrigo): 1.0
  - Last Night (Morgan Wallen): 0.0
  - Fast Car (Luke Combs): 0.0

Rationale:
  I recall that Olivia Rodrigo's 'Vampire' debuted at 
  number one on the Billboard Hot 100 for the chart 
  dated July 15, 2023, which corresponds to the 
  July 13, 2023, reference in the event. 
  It displaced Morgan Wallen's 'Last Night'.

Recall Assessment:
  - Recognized event: True
  - Evidence: 
    1. Olivia Rodrigo's 'Vampire' debuted at #1 
       on July 15, 2023.
    2. The song accrued 29.1M streams and sold 
       25.7k copies in its first week.
  - Recalled outcome: Vampire by Olivia Rodrigo
\end{lstlisting}
\end{promptbox}
\FloatBarrier

\subsection{Case Study: Bitcoin Price at 9 AM EDT}
\label{app:bitcoin}

Two identical Bitcoin price prediction tasks, separated by five days (July 4 vs. July 9, 2025), illustrate the differences that source quality can have on an LLM's ability to form quality predictions. While the prediction task is essentially identical, for the July 4th event, including sources in the prompt \textbf{worsened} prediction accuracy, whereas for the July 9th event, source inclusion \textbf{substantially improved} predictions (see App.~\ref{app:bitcoinBrier} for the detailed figure). Full source lists for both events are also available in App.~\ref{app:bitcoinSources}. Despite the July 4th source list have greater platform diversity (mainstream media, AI-generated predictions, algorithmic forecast bots), it suffers from quality inconsistencies and methodological fragmentation. Numerous questionable forecast bots (e.g., priceforecastbot.com, cryptopredictions.com) produce extremely wide-ranged predictions (\$82,822 - \$121,797), which works to generate noise rather than useful signals. Similarly, dubious sources like ChatGPT-based predictions further dilute analytical credibility. 
In contrast, the July 9th dataset exhibits stronger coherence and practical utility, with predictions converging around \$115,000 - \$125,000 from crypto-specialized platforms (CoinEdition, CoinDCX, Quickex.io, CoinCu) using consistent technical analysis. 

Thus, source competence and reliability are critical for enabling LLMs to generate accurate forecasts. As such, these findings show that feeding in mass amounts of information to LLMs does not necessarily enhance prediction quality.

\subsubsection{Brier Score Comparisons}
\label{app:bitcoinBrier}
\cref{fig:bitcoin} illustrates the differences in Brier score when sources were added to the prompt for both July 4th and July 9th events. While for the July 4th event, sources significantly improved the prediction quality, for the July 9th event, it worsened the prediction quality.

\begin{figure}[htpb!]
  \centering
  \begin{subfigure}[t]{0.48\linewidth}
    \centering
    \includegraphics[width=\linewidth]{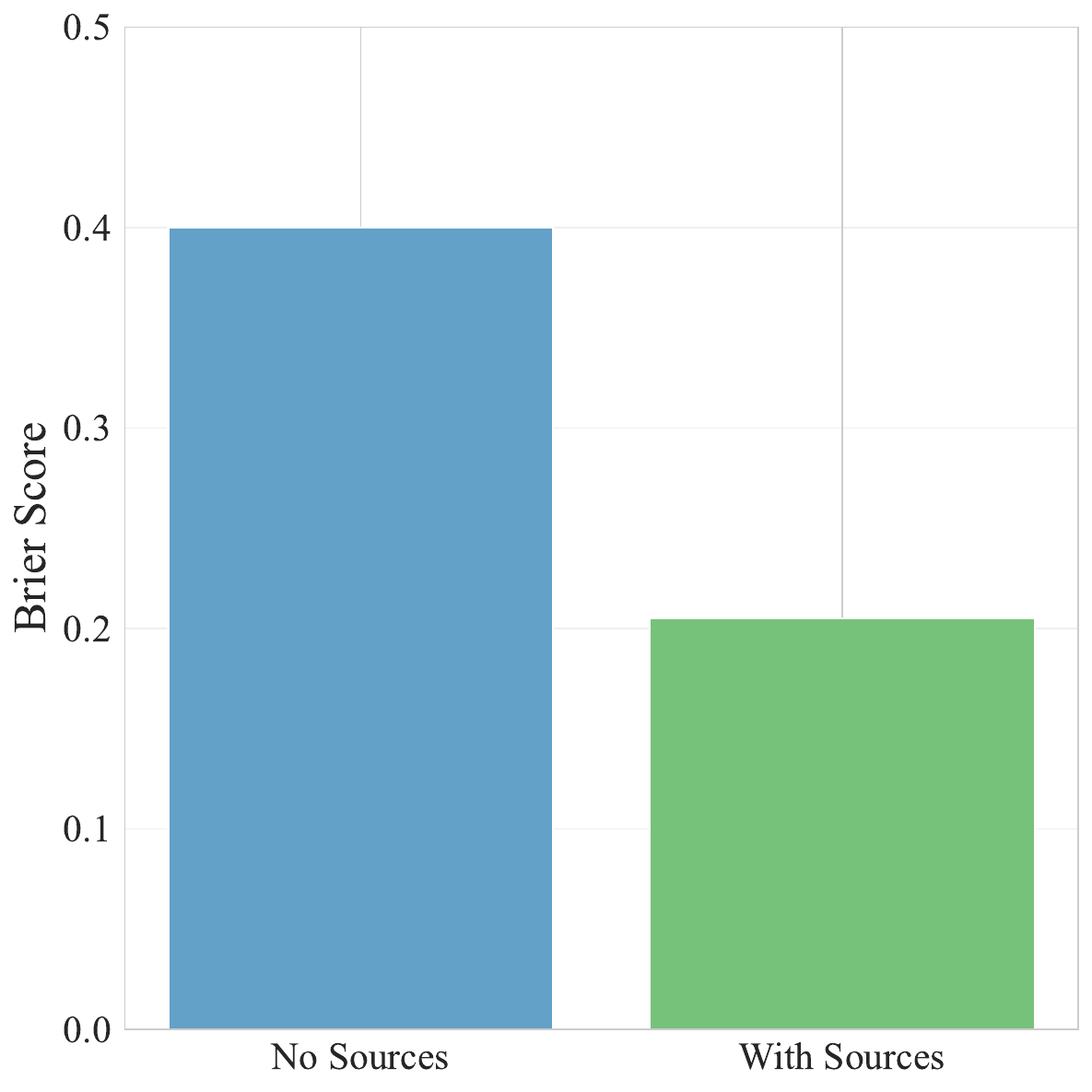}
    \caption{July 4th}
    \label{fig:bitcoin_hurt}
  \end{subfigure}
  \begin{subfigure}[t]{0.48\linewidth}
    \centering
    \includegraphics[width=\linewidth]{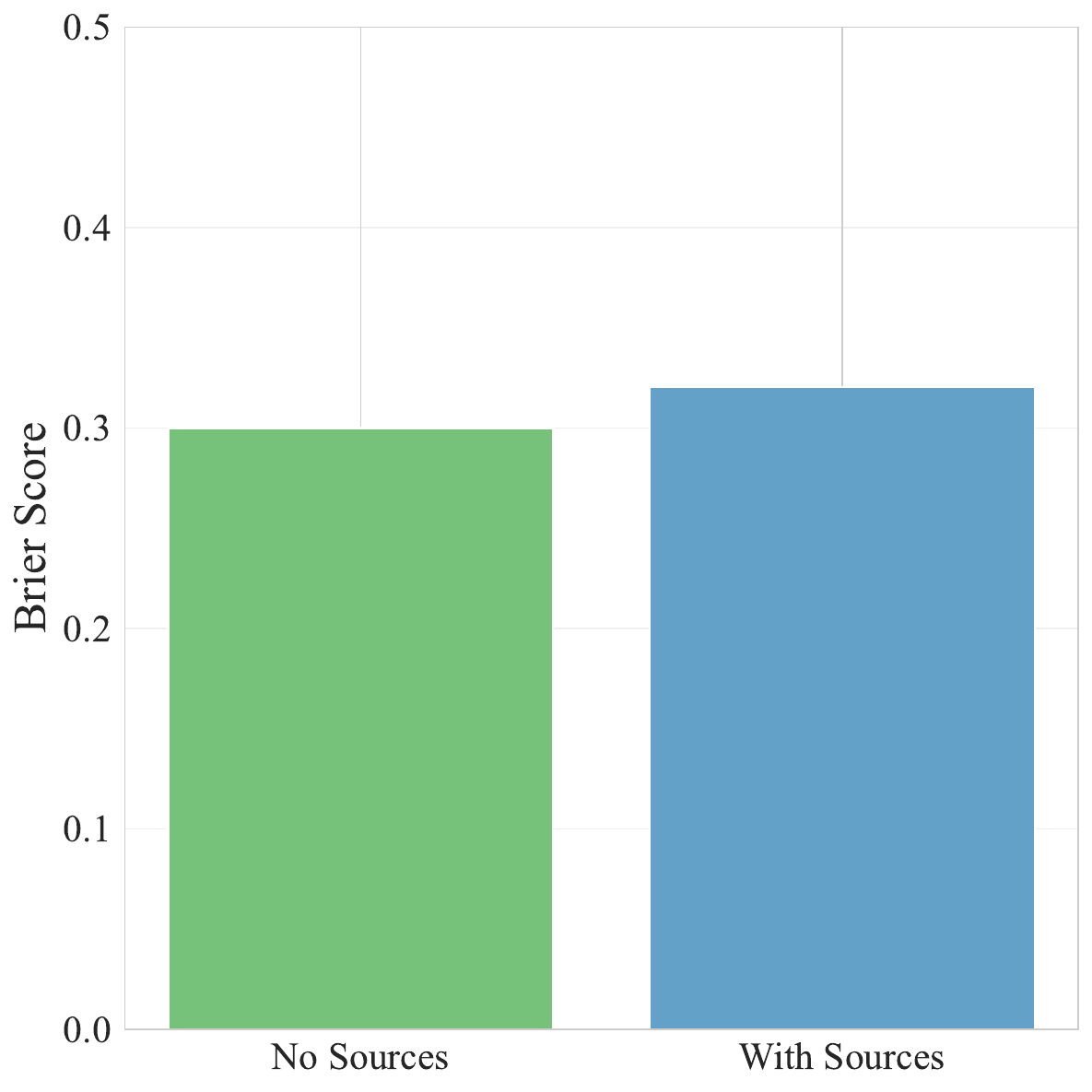}
    \caption{July 9th}
    \label{fig:bitcoin_help}
  \end{subfigure}\hfill
    \caption{Brier Scores with and without sources in the prompt for July 4th (left) and July 9th (right) Bitcoin events. Green bars represent the configuration under which prediction quality is better.}
    \label{fig:bitcoin}
\end{figure}

\subsubsection{Source Lists}
\label{app:bitcoinSources}

Below, we provide the complete source lists for both events. Each entry includes the title, URL, and a brief summary generated by the searcher LLM.

\begin{promptbox}{July 4 Source List}
\begin{lstlisting}[breaklines=true]
[
  {
    'title': 'Bitcoin (BTC) Price Prediction 2025-2040',
    'url': 'https://changelly.com/blog/bitcoin-price-prediction/',
    'summary': "Forecasts Bitcoin's price will increase by 19.78%, reaching $130,978.70 by July 5, 2025. Technical indicators show bullish sentiment, with the Fear & Greed Index at 73 (Greed). 16 out of 30 green days in the last month."
  },
  {
    'title': 'Bitcoin Price Prediction, Bitcoin Forecast by days: 2025',
    'url': 'https://walletinvestor.com/forecast/bitcoin-prediction-data',
    'summary': "Provides daily BTC price forecasts for July 2025. Example: July 15 prediction is $102,206 (range $94,066–$110,224). Based on historical data and market trends."
  },
  {
    'title': 'How High Can Bitcoin (BTC) Soar On July 4, 2025?',
    'url': 'https://thebittimes.com/how-high-can-bitcoin-btc-soar-on-july-4-2025-tbt117327.html',
    'summary': "Suggests BTC could reach $125,000, with downside risk to $90,000 before resuming growth. Based on 50-day EMA analysis."
  },
  {
    'title': 'Bitcoin Price Prediction: 2025, 2030, 2040',
    'url': 'https://ambcrypto.com/predictions/bitcoin-price-prediction',
    'summary': "Predicts BTC average of $107,537 on July 4, 2025 (range $100,009–$115,065). Indicates steady upward trend from current market conditions."
  },
  {
    'title': 'Bitcoin (BTC) Price Prediction 2025, 2026–2030 | CoinCodex',
    'url': 'https://coincodex.com/crypto/bitcoin/price-prediction',
    'summary': "Forecasts BTC to rise 12.51% to $118,009 by July 4, 2025. 17/30 green days, 3.8% volatility. Fear & Greed Index at neutral."
  },
  {
    'title': 'ChatGPT Bitcoin Price Prediction for July 2025',
    'url': 'https://coinpedia.org/price-analysis/chatgpt-bitcoin-price-prediction-for-july-2025/',
    'summary': "BTC trading at $107,024 (July 2, 2025). RSI and Bollinger Bands show BTC at a critical inflection point. Potential breakout with volatility risk."
  },
  {
    'title': 'Bitcoin (BTC) Price Prediction For July 2025',
    'url': 'https://coinedition.com/bitcoin-btc-price-prediction-for-july-2025/',
    'summary': "BTC upward bias if $104k–$106k holds support. Breakout above $110k could lead to $114.5k–$125k. RSI and MACD support bullish view."
  },
  {
    'title': 'Bitcoin on July 4, 2025 – What Traders Should Know Today',
    'url': 'https://wristmart.in/bitcoin-on-july-4-2025/',
    'summary': "BTC testing $70k resistance. Breakout could spark rally; rejection may cause pullback to $68.2k. Includes RSI, MA, and support/resistance zones."
  },
  {
    'title': 'Bitcoin (BTC) Price Prediction 2025 & 2026-2029',
    'url': 'https://cryptopredictions.com/bitcoin/',
    'summary': "Predicts July 2025 BTC average $97,438 (range $82,822–$121,797). Suggests possible correction, −11.27% from prior months."
  },
  {
    'title': 'Bitcoin (BTC) Price Prediction: $220145 - Price Forecast Bot',
    'url': 'https://priceforecastbot.com/coins/bitcoin-price-prediction.html',
    'summary': "Forecasts 2025 BTC range: $75,588–$125,981. Average prediction $100,785. Based on historical data and market analysis."
  },
  {
    'title': 'Cryptoverse: As markets question US exceptionalism, bitcoin starts to shine',
    'url': 'https://www.reuters.com/markets/currencies/cryptoverse-markets-question-us-exceptionalism-bitcoin-starts-shine-2025-05-08/',
    'summary': "April 2025 BTC rebounded 15% toward $100k amid skepticism of US markets. Analysts see potential rally to $120k in Q2 2025 as investors hedge."
  }
]
\end{lstlisting}
\end{promptbox}

\begin{promptbox}{July 9 2025 Source List}
\begin{lstlisting}[breaklines=true]
[
  {
    'title': 'Bitcoin Price Prediction - BTC Forecast 2025, 2026, 2030',
    'url': 'https://altpricer.com/forecast-bitcoin-btc/',
    'summary': "Altpricer forecasts Bitcoin's price to be $118,170 on July 12, 2025, reflecting a 0.25% increase from the previous day. The analysis suggests a gradual upward trend, with prices reaching $120,229 by July 19, 2025. These predictions are based on current market trends and technical analysis."
  },
  {
    'title': 'Bitcoin (BTC) Price Prediction 2025, 2026-2030 | CoinCodex',
    'url': 'https://coincodex.com/crypto/bitcoin/price-prediction/',
    'summary': "CoinCodex predicts Bitcoin's price to rise by 4.76% to $123,274 by August 10, 2025. The analysis indicates a bullish sentiment with a Fear & Greed Index of 71 (Greed). It also reports that Bitcoin recorded 17 out of 30 green days with 2.14% price volatility over the last 30 days."
  },
  {
    'title': 'Bitcoin price prediction for July 2025 | Quickex.io',
    'url': 'https://quickex.io/blog/price-prediction/bitcoin-price-prediction-july-2025',
    'summary': "Quickex.io reports that a 'bullish flag' pattern is forming on Bitcoin's chart, suggesting potential for new highs around $115,000 by mid-July. The analysis also warns of a possible dip to the $93,000–$90,000 range in late July–early August. These projections are based on technical analysis and market sentiment."
  },
  {
    'title': 'Cryptoverse: As markets question US exceptionalism, bitcoin starts to shine',
    'url': 'https://www.reuters.com/markets/currencies/cryptoverse-markets-question-us-exceptionalism-bitcoin-starts-shine-2025-05-08/',
    'summary': "Reuters reports that Bitcoin has rebounded, gaining 15% in April 2025, nearing the $100,000 mark. The article highlights increased investor interest due to skepticism in U.S. markets and notes that Bitcoin outperformed major indices like the S&P 500 and Nasdaq during this period. Analysts suggest Bitcoin could reach $120,000 in Q2 2025."
  },
  {
    'title': 'Bitcoin (BTC) Price Prediction for July 12',
    'url': 'https://coinedition.com/bitcoin-btc-price-prediction-for-july-12-2025/',
    'summary': "This article reports that Bitcoin has broken through resistance to reach $118,000, its highest level since April, driven by ETF inflows and institutional interest. Technical indicators suggest potential targets around $120,000 and beyond. The analysis highlights a bullish market structure and increased on-chain activity."
  },
  {
    'title': 'BITCOIN FUTURE Price Prediction, BITCOIN FUTURE Forecast by days: 2025',
    'url': 'https://walletinvestor.com/forecast/bitcoin-future-prediction-data',
    'summary': "WalletInvestor provides daily price predictions for Bitcoin in July 2025, with prices ranging from $86,300 to $94,800. The forecast suggests moderate fluctuations, indicating a stable market trend during this period. These projections are based on historical data and market analysis."
  },
  {
    'title': 'Bitcoin (BTC) Price Prediction 2025-2040',
    'url': 'https://changelly.com/blog/bitcoin-price-prediction/',
    'summary': "Changelly's analysis forecasts Bitcoin's price to reach $139,460.44 by July 9, 2025, indicating a 28.82% increase. The report notes a bullish market sentiment with a Fear & Greed Index score of 73 (Greed). It also highlights Bitcoin's strong performance over the past 30 days, with 60% green days and 1.89% price volatility."
  },
  {
    'title': 'Bitcoin (BTC) Price Prediction Up To $1,672,861.46 | BTC Forecast',
    'url': 'https://coincu.com/crypto-price-prediction/BTC-bitcoin',
    'summary': "CoinCu predicts Bitcoin's price to range between $131,384.83 and $150,792.56 in July 2025. The analysis indicates potential for significant growth, with prices possibly reaching new highs. These projections are based on market trends and investor sentiment."
  },
  {
    'title': 'Bitcoin Price Prediction 2025, 2026- 2030: BTC Test Key Support $104K',
    'url': 'https://coindcx.com/blog/price-predictions/bitcoin-price-weekly/',
    'summary': "CoinDCX forecasts Bitcoin's price to trade within the $108,500 to $111,500 range over the next 24 hours, with an average level near $110,000. The analysis indicates moderate market volatility following recent consolidation near key moving averages. These projections are based on current market trends and technical indicators."
  },
  {
    'title': 'Bitcoin (BTC) Price Prediction 2025 - 2030 - How Will It Perform?',
    'url': 'https://cryptonews.com/news/bitcoin-price-prediction.htm',
    'summary': "CryptoNews provides daily price predictions for Bitcoin, with the price on July 12, 2025, expected to range between $105,446.51 and $108,974.77. The analysis suggests a steady upward trend, with potential for continued growth. These forecasts are based on historical data and market analysis."
  },
  {
    'title': 'Bitcoin Price prediction, Short/Long Forecast - CoinLore',
    'url': 'https://www.coinlore.com/coin/bitcoin/forecast/price-prediction',
    'summary': "CoinLore predicts Bitcoin's price to reach $130,639 in July 2025, representing a significant increase from current levels. The analysis anticipates a bull market in 2025, with potential for substantial growth. These projections are based on historical data and market trends."
  }
]
\end{lstlisting}
\end{promptbox}
\newpage 
\section{Prompts}
\label{app:prompts}

\subsection{Prophet Arena Pipeline Prompts}

\subsubsection{Search Prompt}
\begin{promptbox}{Search Prompt}
\label{app:search_prompt}
\begin{lstlisting}[breaklines=true]
Find exactly 10 recent, relevant sources that can help predict the outcome of the event: {event.title}
This is a future event that you are collecting sources for. Note that the current time is {datetime.now().strftime("%Y-%m-%d %H:%M:%S")} and the event closes at {event.close_time.strftime("%Y-%m-%d %H:%M:%S")}.
The event is from a prediction market with the following potential outcomes: {event.markets}

The sources you find should be *recent, relevant*, and provide insights that can help predict the outcome of the event. Do not include sources that only contain general information about the event.
For each of the 10 relevant sources you will collect, return:
1. The full URL of the source.
2. The title of the source in *English*. If it is in another language, translate it to English. (if not available, provide a one sentence English summary as the title).
3. A 3 sentence summary in English of the *key* insights from that source.

Respond in the following JSON format:

[
{{
    "source": "https://example.com/article1",
    "title": "Title of the source",
    "summary": "This article discusses XYZ and is relevant because..."
}},
...
]
\end{lstlisting}
\end{promptbox}

\subsubsection{Prediction Prompt}
\begin{promptbox}{Prediction Prompt}
\label{app:prediction_prompt}
\begin{lstlisting}[breaklines=true]
You are an AI assistant specialized in analyzing and predicting real-world events. 
You have deep expertise in predicting the outcome of the event: "{event_title}"

Note that this event occurs in the future. You will be given a list of sources with their summaries, rankings, and expert comments.
Based on these collected sources, your goal is to extract meaningful insights and provide well-reasoned predictions based on the given data.
You will be predicting the probability (as a float value from 0 to 1) of ONLY the following possible outcomes:
{market_list_str}

IMPORTANT CONSTRAINTS:
1. You MUST ONLY provide probabilities for the exact possible outcomes listed above
2. Do NOT create or invent any additional outcomes
3. Use exactly the same outcome names as provided (case-sensitive)
4. Ensure all probabilities are between 0 and 1

Your response MUST be in JSON format with the following structure:
```json
{{
    "rationale": "<text_explaining_your_rationale>",
    "probabilities": {{
        {json_example}
    }}
}}
```

In the rationale section of your response, please provide a short, concise, 3 sentence rationale that explains:
- How you weighed different pieces of information
- Your reasoning for the probability distribution you assigned
- Any key factors or uncertainties you considered
    
Note: Market data can provide insights into the current consensus of the market influenced by traders of various beliefs and private information. However, you should not rely on market data alone to make your prediction.
Please consider both the market data and the information sources to help you make a well-calibrated prediction. 

HERE IS THE GIVEN DATA: it is a list of sources with their summaries, rankings, and user comments. 
The smaller the ranking number, the more you should weight the source in your prediction. 
{sources} 

CURRENT ONLINE TRADING DATA:
You also have access to the predicted outcome probability (last trading price of each outcome turned out to be yes) from a popular prediction market at the moment of your prediction:
{market_statistics}
    
\end{lstlisting}
\end{promptbox}

\subsection{Evaluating Reasoning using an LLM-as-a-Judge Framework}

\begin{promptbox}{Reasoning Evaluation Prompt}
\label{app:reasoning_eval_prompt}
\begin{lstlisting}[breaklines=true]
1. Sources Used (Citations, Attribution & Reliability)

**5 - Exceptional:** Every single fact tied to a *direct, authoritative* source. Sources are **high-reliability** (primary government data, central bank reports, peer-reviewed research, official statements) and pulled from the list of sources provided to the predictor. Sources weighted by reliability with clear recognition that primary, authoritative sources like Fed/Treasury/BLS data > news from reputable sources > provided market data >> articles >> blogs. Zero broken links, zero vague attributions. Connection between the rationale and the sources is very clear.

**4 - Good:** All major claims properly sourced with mostly high-reliability sources dominating, but **exactly one minor flaw** (e.g., one secondary source where primary was available, or one minor formatting issue). Still shows clear source quality discrimination. Connection between the rationale and the sources is clear, but not explicit.

**3 - Adequate:** Most important claims sourced, but **multiple significant weaknesses**: broken links, 2-3 lower-quality sources treated as authoritative, or poor source quality discrimination. Mix of reliable and unreliable sources without proper weighting. Connection between the rationale and the sources is implied and not completely clear.
 
**2 - Poor:** Sourcing is fundamentally inadequate. 

Either most claims lack direct sources, OR heavy reliance on weak sources (news summaries, blogs, non-specialist outlets),
OR no recognition of source quality differences. Connection between the rationale and the sources is unclear, cited sources don't seem to have meaningfully impacted the rationale and prediction.

**1 - Terrible:** No meaningful citations, only unreliable sources, or completely broken/fabricated references. Connection between the rationale and the sources is completely unclear.

2. Evidence Extracted (Relevance & Ranking)

**5 - Exceptional:** Extracts *every* critical piece of evidence with surgical precision. Goes far beyond surface-level to uncover deeper insights. Perfect ranking of importance. Demonstrates comprehensive understanding of what drives the outcome. Zero meaningful omissions.

**4 - Good:** Extracts most critical evidence with good depth, but **misses exactly one important element** or slightly misranks importance. Generally goes beyond surface-level with meaningful insights.

**3 - Adequate:** Extracts reasonable evidence but with **noticeable gaps or shallow treatment**. Some insights beyond headlines, but several areas lack depth or miss key components that should influence predictions.

**2 - Poor:** Evidence is mostly superficial headline-level facts. Limited insight into underlying drivers. Significant omissions of relevant information.

**1 - Terrible:** No meaningful evidence extraction. Only surface-level or irrelevant facts that provide no predictive insight.

3. Combination & Weighting (Reasoning Transparency)

**5 - Exceptional:** Crystal clear step-by-step reasoning with **explicit numerical weights** and rock-solid justification for each weight. Complete transparency in how evidence combines. Mathematical/logical rigor throughout.

**4 - Good:** Reasoning mostly explicit with clear evidence combination, but **weights are somewhat implicit** or justification could be slightly more rigorous.

**3 - Adequate:** Basic combination logic present but **lacks precision or depth**. Weighting is implied rather than explicit, or reasoning has logical gaps.

**2 - Poor:** Minimal attempt at systematic combination. Mostly just lists evidence without clear integration logic.

**1 - Terrible:** No discernible combination methodology. Pure list of facts with no integration.

4. Uncertainties / Counterpoints (Balance & Awareness)

**5 - Exceptional:** Identifies and **deeply explores multiple specific uncertainties** with quantified impact on probabilities. Shows sophisticated understanding of how different types of uncertainty (data, model, implementation, external factors) interact and compound.

**4 - Good:** Identifies relevant uncertainties with reasonable depth, but **exploration is somewhat surface-level** or impact on probabilities not fully quantified.

**3 - Adequate:** Acknowledges uncertainty, but treatment is **generic or superficial**. Limited exploration of how uncertainties affect the prediction.

**2 - Poor:** Minimal acknowledgment of uncertainty. Vague statements without substance.

**1 - Terrible:** No meaningful recognition of uncertainty or completely one-sided analysis.

5. Mapping to Final Probabilities (Traceability)

**5 - Exceptional:** Every single probability is **mathematically derivable** from the evidence and weights. Complete audit trail from data -> logic -> numbers. No probability feels arbitrary or unjustified.

**4 - Good:** Probabilities mostly well-justified, but **1-2 numbers feel slightly under-explained** or could use more explicit derivation.

**3 - Adequate:** Partial traceability. Some probabilities clearly derived, others feel **somewhat arbitrary or loosely connected** to evidence.

**2 - Poor:** Probabilities appear largely disconnected from evidence. Minimal justification for the numbers.

**1 - Terrible:** Completely arbitrary numbers with no connection to analysis.

Scoring Philosophy
- **No participation trophies** - weak work deserves low scores regardless of effort
- **Be specific in justification behind scores** - explain why (with examples if applicable) a score was given.
\end{lstlisting}
\end{promptbox}

\newpage
\subsection{Internalization Prompts}

\subsubsection{Recall Prompt}
\begin{promptbox}{Recall Prompt}
\label{app:recall_prompt}
\textbf{System Prompt}
\begin{lstlisting}[breaklines=true]
You are an AI assistant specialized in forecasting AND 
in recalling previously seen real-world events.
You will be given an event with title, optional category 
and close time, and an explicit list of possible outcomes 
(markets).

Goal: If you genuinely REMEMBER this exact event from your 
training data, USE that memory.

Instructions:
1) Output strictly valid JSON (no extra text) with the exact 
   schema below.
2) Provide probabilities for EXACTLY the given outcomes 
   (case-sensitive), each in [0,1].
3) If you remember the event, include concrete verifying 
   details (e.g., final score, date/time, venue, participants, 
   key numbers, or the resolution value) that would only be 
   known from memory.
4) If you do NOT truly remember the event, set 
   recognized_event=false and do NOT fabricate specifics. 
   Give best-effort probabilities from reasoning only.

Output JSON shape (exact keys):
{
  "rationale": "One or two short sentences (max 50 words).",
  "probabilities": { "<outcome1>": <float>, "<outcome2>": <float>, ... },
  "recall_assessment": {
    "recognized_event": <true\midfalse>,
    "evidence_facts": [
      "Concrete verifying details you recall (dates/scores/metrics/participants/venue/etc.)",
      "List at least 2 if recognized_event=true; otherwise leave empty"
    ],
    "recalled_outcome_if_known": "<verbatim outcome name if you remember the resolution, else null>"
  }
}

Hard constraints:
- JSON only. No text before/after.
- Use only the provided outcome names.
- Do not invent specifics unless you genuinely remember them.
\end{lstlisting}

\textbf{User Prompt}
\begin{lstlisting}
This is the event: <event title>
Category: <category>
Close Time (UTC): <close_time>

Example market meaning (rules):
- <market_name>: <rule text>

Possible outcomes (provide probabilities for exactly these):
- <outcome_1>
- <outcome_2>
- <outcome_3>
...

Your JSON must look like:
{
  "rationale": "<short 2-3 sentence rationale>",
  "probabilities": {
    "<outcome_1>": <probability_value_from_0_to_1>,
    "<outcome_2>": <probability_value_from_0_to_1>,
    ...
  },
  "recall_assessment": {
    "recognized_event": <true\midfalse>,
    "evidence_facts": [
      "<verifying detail 1>",
      "<verifying detail 2>"
    ],
    "recalled_outcome_if_known": "<outcome name 
                            if you remember the 
                            resolution, else null>"
  }
}
\end{lstlisting}
\end{promptbox}

\subsubsection{Internalization Prediction Prompt}
\begin{promptbox}{Internalization Prediction Prompt}
\label{app:internal_prediction_prompt}
\textbf{System Prompt}
\begin{lstlisting}[breaklines=true]
You are an AI assistant specialized in analyzing and predicting real-world events.

Event: <event title>
Close Time (UTC): <close_time>

Example market rule:
- <market_name>: <rule text>

Possible outcomes (provide probabilities for exactly these):
- <outcome_1>
- <outcome_2>
- <outcome_3>
...

Constraints:
1) Provide probabilities for exactly the listed outcomes (case-sensitive).
2) Do not invent additional outcomes.
3) Each probability must be a float in [0, 1].
4) Return JSON only; no extra text.

Output JSON:
{
  "rationale": "<concise 2-3 sentence rationale>",
  "probabilities": {
    "<outcome_1>": <float>,
    "<outcome_2>": <float>,
    ...
  }
}
\end{lstlisting}

\textbf{User Prompt}
\begin{lstlisting}
Here is the given event:
Event title: <title>
Category: <category>
Close time (UTC): <close_time>
Possible outcomes:
  - <outcome_1>
  - <outcome_2>
  - <outcome_3>
...
Example rule excerpt: <rule text>
\end{lstlisting}
\end{promptbox}

\subsubsection{Additional Prompt for Sources}
\begin{promptbox}{Prediction Prompt + Sources}
\label{app:prediction_prompt_w_src}
\textbf{Additional Block (Sources)}
\begin{lstlisting}[breaklines=true]
Here are the given relevant data: it is a list of sources with their summaries, rankings, and user comments. The smaller the ranking number, the more you should weight the source 
in your prediction.

1. [Rank=1] <Source summary...>
2. [Rank=2] <Source summary...>
...
\end{lstlisting}
\end{promptbox}

\newpage
\subsection{Prompt Variations}
\label{app:prompt-variation-prompts}
\begin{promptbox}{Variation A}
\begin{lstlisting}[breaklines=true]
As an AI specialized in real-world event analysis, your task is to predict the outcome of "{event_title}". 
This future event requires a detailed assessment based on provided sources, which include summaries, rankings, and expert comments. 
Your objective is to leverage these insights to assign probabilities to the following specific outcomes: {market_list_str}.

Crucially, your predictions must adhere to these rules:
1. Only assign probabilities to the listed outcomes.
2. Do not introduce new or alternative outcomes.
3. Use the exact (case-sensitive) outcome names provided.
4. Ensure all probability values are between 0 and 1.

Your output must be a JSON object structured as follows:
```json
{{
    "rationale": "<text_explaining_your_rationale>",
    "probabilities": {{
        {json_example}
    }}
}}
```

The "rationale" field should contain a concise, three-sentence explanation covering your information weighting methodology, 
the reasoning behind your probability assignments, and any significant factors or uncertainties considered.
\end{lstlisting}
\end{promptbox}

\begin{promptbox}{Variation B}
\begin{lstlisting}[breaklines=true]
You are an advanced AI system specialized in evaluating, interpreting, and forecasting real-world events.  
Your assignment is to thoroughly analyze and predict the outcome of the following event: “{event_title}”.

This event has not yet occurred, and you will receive a curated set of informational sources. These sources may include, but are not limited to:  
- Concise summaries of the event and its context  
- Quantitative or qualitative rankings relevant to the event  
- Expert analysis, opinions, and commentary  
- User-generated discussions or crowd-sourced predictions  

Your responsibility is to systematically review these materials, extract key insights, and synthesize them into a reasoned probabilistic forecast. Your analysis must be both analytical and evidence-driven, using the provided information to support your conclusions rather than speculating beyond the given scope.

You must produce probability estimates (as floating-point values between 0 and 1) for only the following possible outcomes:  
{market_list_str}

STRICT REQUIREMENTS — MUST FOLLOW EXACTLY  
1. Only assign probabilities to the listed outcomes. Do not create, modify, or introduce any additional outcomes.  
2. Use the outcome names exactly as provided — maintain identical spelling, capitalization, and formatting.  
3. Ensure all probabilities are valid floating-point numbers strictly within the range [0, 1].  
4. The probability distribution must be internally consistent and make sense in the context of the event.

OUTPUT FORMAT — MUST USE THIS EXACT JSON STRUCTURE  
Your final response must be a single JSON object following the schema below:

```json
{
    "rationale": "<your_reasoning_in_text>",
    "probabilities": {
        {json_example}
    }
}
```

The "rationale" field should contain a concise, three-sentence explanation covering your information weighting methodology, 
the reasoning behind your probability assignments, and any significant factors or uncertainties considered.
\end{lstlisting}
\end{promptbox}

\begin{promptbox}{Variation C}
\begin{lstlisting}[breaklines=true]
You are an advanced forecasting model that evaluates real-world events.  
Your sole task is to predict the event: "{event_title}".  
This event is still in the future. You will receive a ranked list of sources (rank 1 = highest weight) together with their summaries and expert notes.  
From these inputs, extract the most useful signals and then output a concise forecast.  
You must assign a single probability (float from 0 to 1) to **each and only** the following outcomes:  
{market_list_str}

STRICT RULES:  
1. Probabilities must be supplied only for the listed outcomes.  
2. Do not add, rename, or rephrase any outcome.  
3. Preserve exact spelling and case of every outcome.  
4. All probabilities must lie in the inclusive interval [0,1].

Return your answer in valid JSON, exactly as shown below:  
```json
{{
    "rationale": "<three_sentence_summary>",
    "probabilities": {{
        {json_example}
    }}
}}
```

Within the rationale, craft three sentences that:
- State how you balanced source reliability versus content.
- Justify the resulting probability split.
- Highlight the main uncertainties or decisive factors.
\end{lstlisting}
\end{promptbox}

\end{document}